%% file: main.tex
\documentclass{article}

\PassOptionsToPackage{numbers, compress}{natbib}

\usepackage[main, final]{neurips_2026}
\usepackage{subcaption} 

\usepackage[utf8]{inputenc} 
\usepackage[T1]{fontenc}    
\usepackage{hyperref}       
\usepackage{url}            
\usepackage{booktabs}       
\usepackage{amsfonts}       
\usepackage{nicefrac}       
\usepackage{microtype}      
\usepackage{xcolor}         

\usepackage{multirow}
\usepackage{graphicx}
\usepackage{wrapfig}
\usepackage{amssymb}
\usepackage{amsmath}
\usepackage{colortbl}
\usepackage{comment}

\usepackage{mathtools}
\usepackage{amsthm}
\usepackage{algorithm}
\usepackage{algpseudocode}

\input{math_commands.tex}

\RequirePackage{hyperref}

\usepackage[capitalize,noabbrev]{cleveref}
\crefname{section}{Sec.}{Secs.}
\crefname{subsection}{Sec.}{Secs.}
\crefname{subsubsection}{Sec.}{Secs.}
\theoremstyle{plain}
\newtheorem{theorem}{Theorem}[section]
\newtheorem{proposition}[theorem]{Proposition}

\theoremstyle{definition}
\newtheorem{definition}[theorem]{Definition}

\theoremstyle{remark}
\newtheorem{remark}[theorem]{Remark}

\usepackage[export]{adjustbox} 

\usepackage[textsize=tiny]{todonotes}

\definecolor{darkblue}{rgb}{0, 0, 0.5}
\hypersetup{colorlinks=true, citecolor=darkblue, linkcolor=darkblue, urlcolor=darkblue}
\definecolor{darkred}{rgb}{0.68,0.05,0.0}
\definecolor{lightblue}{rgb}{0, 0, 1.0}

\newcommand{\firstauthor}[1]{\textsuperscript{\textbf{\textcolor{darkred}{#1}}}}

\newcommand\blfootnote[1]{%
  \begingroup
  \renewcommand\thefootnote{}\footnote{\textcolor{darkred}{*}#1}%
  \addtocounter{footnote}{-1}%
  \endgroup
}

\usepackage{arydshln}

\title{Retrieval-Centric Deep Learning in\\ Growing Nonparametric Neural Networks}

\author{Maximilian Schlegel\firstauthor{*}$^{1}$ ~ Rajai Nasser\firstauthor{*}$^{1}$ ~ Seijin Kobayashi\firstauthor{*}$^{1}$ ~ \textbf{Yanick Schimpf}\firstauthor{*}$^{1}$ \\
\textbf{Oliver Sieberling}\firstauthor{*}$^{2}$ ~ \textbf{Robert Obryk}\firstauthor{*}$^{1}$ \\ \textbf{Kazuki Irie}\firstauthor{*}$^{3}$ ~ \textbf{João Sacramento}\firstauthor{*}$^{1}$ ~ \textbf{Johannes von Oswald}\firstauthor{*}$^{1}$ \\
$^{1}$Google, Paradigms of Intelligence Team, Zurich, Switzerland\\
$^{2}$MIT CSAIL, Cambridge, MA, USA \\
$^{3}$Department of Computer Science and Wu Tsai Institute\\Yale University, New Haven, CT, USA
}

\begin{document}
\blfootnote{All authors contributed equally. Correspondence: \texttt{\{schlegelm,rajainasser,jvoswald\}@google.com, osieberl@mit.edu,  kazuki.irie@yale.edu}. Public code: \href{https://github.com/paradigms-of-intelligence/rcdl}{https://github.com/paradigms-of-intelligence/rcdl}}

\maketitle

\begin{abstract}
We investigate a general-purpose layer for deep learning that, instead of compressing arbitrary-size training data into fixed-size weight matrices, stores a new pair of key-value representations for every data point during training, and retrieves and recombines these representations through an attention mechanism at inference time---resulting in a growing neural net (NN).
While \citet{irie2022dual} have put forward this perspective from the classic duality expressing any linear layer in a deep NN trained by gradient descent as linear attention (LA) over the training data points, replacing LA by more powerful attention functions, as they suggest, turns out to be non-trivial: we show that naively applying learning rules from the LA case to advanced kernels does not lead to principled optimization.
Here we fill this gap and develop functional gradient-based learning rules for kernelized attention layers, based on radial basis function (RBF) and softmax-like kernels---establishing the principled ``retrieval-centric deep learning'' (RCDL) paradigm.
Empirically, we demonstrate the promising performance and learning-efficiency of RCDL on image classification and synthetic teacher-student learning tasks.
Moreover, we show that replacing LA in the dual form of NNs by advanced LA variants, namely MesaNet/DeltaNet, yields a formal connection to recently proposed optimizers for conventional fixed-size NNs, offering a novel perspective on deep learning optimization.\looseness=-1


\end{abstract}

\begin{table}[h]
\vspace{-0.25cm}
\setlength{\tabcolsep}{4pt}
\centering
\caption{Comparison of parametric vs.~retrieval-centric deep learning paradigms.}
\label{tab:concept_table}
\vspace{0.5em} 

\footnotesize 

\begin{tabular}{l m{5.5cm} m{4.8cm}}

\toprule
Model & \textbf{\textit{Parametric} neural networks} & \textbf{Growing \textit{nonparametric} networks} \\ 
\midrule

Building block & Linear layer & Attention layer \\

\midrule

Learnable state & Fixed-size weight matrix & Growing database of keys \& values \\
(for $t$ train examples) & $\mW \in \mathbb{R}^{d_{\text{out}} \times d_{\text{in}}}$ &  $\mathcal{D} = \{\mK \in \mathbb{R}^{d_{\text{in}} \times t}, \mV \in \mathbb{R}^{d_{\text{out}} \times t}\}$ \\

\midrule

Inference & \multirow{2}{*}{$\vy = \mW \vx$}  & \multirow{2}{*}{$\vy = s_{\mathcal{D}}(\vx)= \text{Attention}(\mK, \mV, \vx)$} \\
 ($\vx \in \mathbb{R}^{d_{\text{in}}}$, $\vy \in \mathbb{R}^{d_{\text{out}}}$) & & \\

\midrule

Illustration  & 
\includegraphics[width=3.5cm, valign=c]{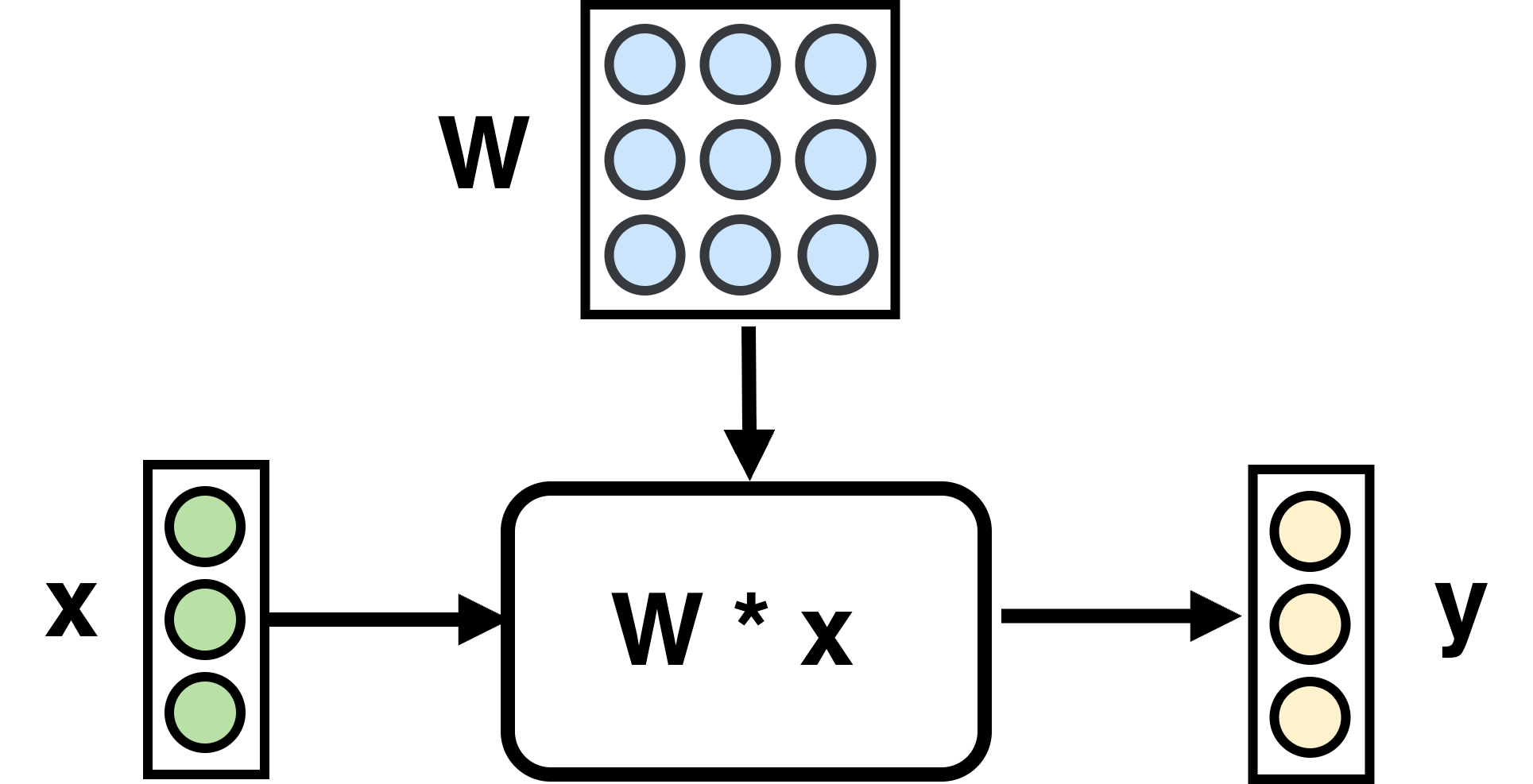} & \includegraphics[width=4.8cm, valign=c]{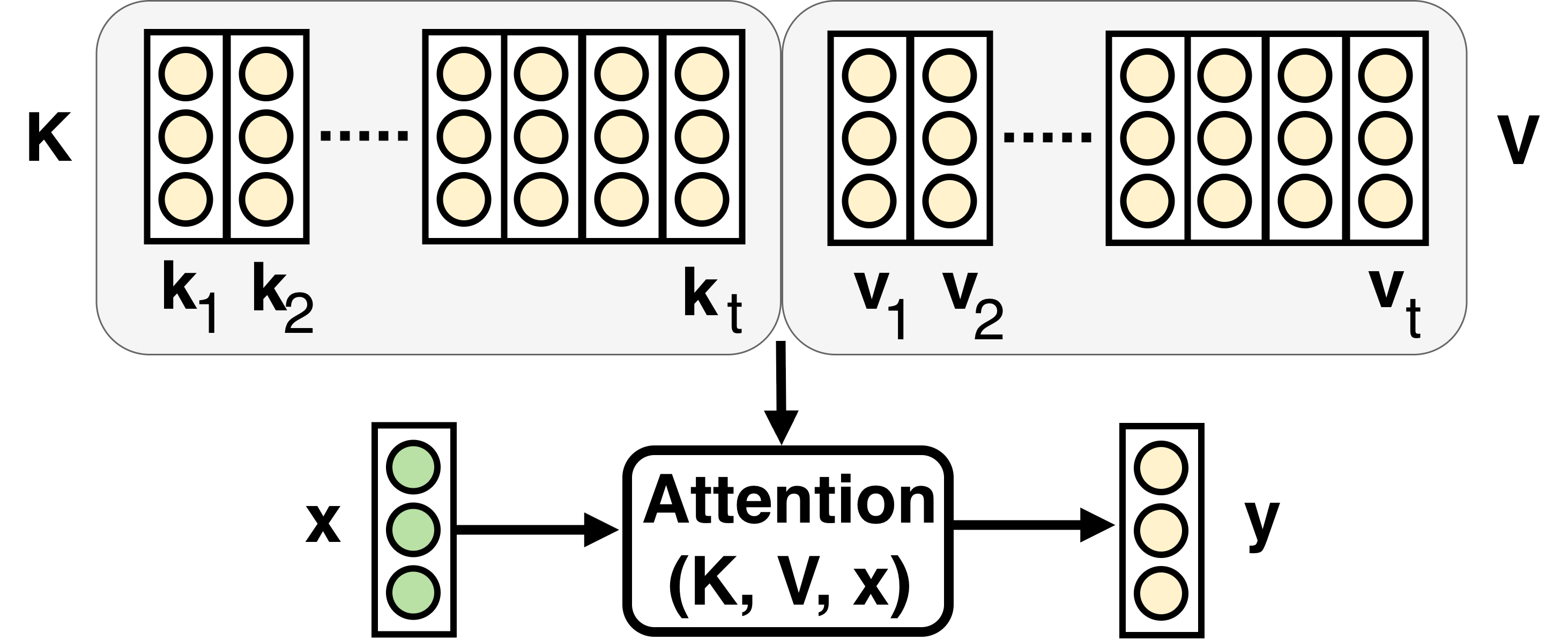} \\ 

\midrule

Forming deep models & Compose linear layers and \newline nonlinearities, e.g., $\sigma(\mW_2 \sigma(\mW_1 \vx))$ & Compose attention layers and nonlinearities, e.g., $\sigma(s_{\mathcal{D}_2}(\sigma(s_{\mathcal{D}_1}(\vx))))$ \\

\midrule

Learning objective & \multicolumn{2}{c}{Minimize a 
global loss $\mathcal{L}$ (e.g., via gradient descent)} \\

\midrule


Learning update & Update the fixed weight matrix & Append a new key-value pair \\
(e.g., gradient-based) & (e.g., $\Delta \mW \propto -\nabla_{\mW} \mathcal{L} = -\nabla_{\vy_t}\mathcal{L} \, \vx_t^\top$) & (e.g., $\vk_t = \vx_t$, $\vv_t \propto -\nabla_{\vy_t}\mathcal{L}$) \\

\midrule

Key question & How to update the weight matrix? & What key-value vectors to append? \\

\bottomrule
\end{tabular}
\vspace{-0.4cm}
\end{table}

\section{Introduction}
\label{sec:intro}

Traditional deep learning uses gradient descent to compress arbitrary-size training datasets and/or experiences into fixed-size weight matrices of a deep neural network (NN).
A classic but lesser-known mathematical property thereof is that any linear layer in such a gradient descent-trained deep NN admits a nonparametric dual form: Its exact input/output relation can also be achieved by an equivalent ``key-value memory system'' that stores all the training datapoints presented to the layer during training and the initial weights, and produces outputs through \textit{retrieval} using ``unnormalized dot-product attention'' over the entire training experience stored in the form of ``key/value pairs'' \citep{aizerman1964theoretical, irie2022dual} (reviewed in Sec.~\ref{sec:background}).\looseness=-1

From a modern sequence modeling research perspective, however, such an unnormalized dot-product attention represents a form of ``linear attention'' \citep{katharopoulos2020transformers, shen2018efficient}, which is well-known to be sub-optimal to process sequences, e.g., compared with the softmax attention \citep{bahdanau2014neural} commonly used in the transformer \citep{trafo}.\looseness=-1

This observation has raised a natural question: can we achieve more powerful deep learning systems by replacing the linear attention (LA) underlying deep NNs by a more powerful attention function \citep{irie2022dual,irie2026dual}?
While this question has been left unanswered, the time is ripe to 
revisit it, especially given how recent research in sequence modeling has advanced LA and provided insights to interpret and improve LA through various angles, including its connection to fast weight programming and learning rules \citep{schmidhuber1992learning, schlag2021linear, irie2021going, irie2022neural, IrieSCS22,yang2024parallelizing,yang2024gated,siems2025deltaproduct}, the underlying local optimization perspectives \citep{von2022transformers,von2023uncovering, von2025mesanet, sun2024learning,behrouz2025atlas,behrouz2024titans,behrouz2025nested,wang2025test}, as well as traditional sequence processing tools such as gating \citep{peng2021random,YangWSPK24}.

Here we pursue two approaches that aim to go beyond current deep learning by extending the LA mechanism underlying any modern deep NNs' dual form.
First, we advance the retrieval-centric view of the dual form by developing a theory for the ``growing neural network'' obtained by introducing a (non-linearizable) kernel to the dual form's LA, and devise its learning algorithms (which prescribe the key and value representations to be stored for each data point) that optimize a given global objective function. We study the cases of positive definite kernels (e.g., the classic radial basis function (RBF) \citep{scholkopf2002learning}) and the ``weighted softmax'' kernel we propose. In fact, while \citet{irie2022dual} have suggested to directly ``insert'' the softmax function in the corresponding linear attention to enhance retrieval from NNs' training experience, devising a principled learning algorithm for such a system turns out to be highly non-trivial, as such an insertion invalidates the optimality of the key and value representations computed in the linear case.
This is especially true for the softmax, for which  we introduce a closely related ``weighted softmax kernel'' to formally derive a learning rule for the resulting retrieval-centric growing NN. We conduct experiments using classic image classification datasets and teacher-student learning tasks, and demonstrate promising results in performance and sample efficiency compared to the standard deep NNs with fixed-size weight matrices.\looseness=-1

The second extension we pursue is to leverage recent advances in linear attention, namely the weight updates of DeltaNet \citep{schlag2021linear,yang2024gated}, and MesaNet \citep{von2025mesanet}.
In contrast to the first direction above, this approach gives rise to optimizers that improve  conventional weight-based learning.
We derive a formal connection to a recently proposed optimizer and second-order optimization, and empirically demonstrate the efficiency of the resulting optimizers, with competitive performance to Muon \cite{jordan2024muon}.

Finally, as a new framework that represents a departure from standard deep learning (see comparison in Table \ref{tab:concept_table}), retrieval-centric deep learning (RCDL) still faces many challenges. Perhaps the most central one is that model size grows at every training step when a nonlinear kernel is chosen. We finish our paper by presenting first results on a hybrid learning algorithm which manages capacity allocation dynamically, by combining standard (parametric) gradient updates with model growing (non-parametric) steps, triggered only when the inputs to a given layer are sufficiently distinct from past ones. We show empirically that this hybrid algorithm is able to match the performance of the non-parametric learner at a small fraction of its size.

\section{Background: Dual Form of a Linear Layer}
\label{sec:background}

Here we briefly review the dual form of a linear layer trained by gradient descent as a linear attention system \citep{aizerman1964theoretical, irie2022dual}, which is our starting point. Let $t$, $T$, $d_\text{in}$, $d_\text{out}$ denote positive integers.
Consider a linear layer inside an arbitrary complex/deep neural network, that transforms a layer input $\vx \in \mathbb{R}^{d_\text{in}}$ into a layer output $\vy \in \mathbb{R}^{d_\text{out}}$ as
$\vy = \mW \vx$ where $\mW \in \mathbb{R}^{d_\text{out} \times d_\text{in}}$ is a weight matrix. 

Given an arbitrary differentiable loss function $\mathcal{L}$ and a sequence $(\vx_1, \ldots,\vx_T)$ of $T$ training inputs presented to the layer, with  $\vx_t \in \mathbb{R}^{d_\text{in}}$ for $1 \leq t \leq T$, we denote the corresponding (backpropagated) error signals as $(\vv_1, \ldots, \vv_T)$ with $\vv_t \in \mathbb{R}^{d_{\text{out}}}$ defined by gradient descent, that is $\vv_t = -\eta_t \nabla_{\vy_t} \mathcal{L}$ where $\eta_t \in \mathbb{R}_{>0}$ is the learning rate and $\nabla_{\vy_t} \mathcal{L}$ is the gradient of the loss w.r.t.~the layer output $\vy_t$ corresponding to input $\vx_t$.

At training step $t$, gradient descent updates the weight matrix (WM), initialized with $\mW_0$, as
$\mW_t = \mW_{t-1} -\eta_t (\nabla_{\vy_t} \mathcal{L}) \otimes \vx_t =  \mW_{t-1} + \vv_t \otimes \vx_t$,
where $\otimes$ denotes the outer product, that is $\vv_t \otimes \vx_t = \vv_t \vx_t^{\top}$.
We can unfold this equation to express the WM at step $T$ as: $\mW_T = \mW_0 + \sum_{t=1}^{T} \vv_t \vx_t^{\top}$.

Thus, the layer output for an arbitrary input $\vq \in \mathbb{R}^{d_\text{in}}$ can be expressed as:
\begin{equation}
\vy = \mW_T \vq = 
\mW_0 \vq + \sum_{t=1}^{T} \vv_t \vx_t^{\top} \vq = \mW_0 \vq + \sum_{t=1}^{T} \vv_t \phi(\vx_t, \vq)
\label{eq:gd_linear_attention}
\end{equation}
where the last term is a weighted sum of vectors $\vv_t \in \mathbb{R}^{d_{\text{out}}}$ with weights $\vx_t^{\top} \vq \in \mathbb{R}$, which can be interpreted as an (unnormalized) ``attention mechanism'' by identifying $\vv_t$ as values, $\vx_t$ as keys, and $\vq$ as the query; and $\phi(\vx_t, \vq) = \vx_t^{\top} \vq$ as attention weights \citep{luong2015effective,trafo,bahdanau2014neural}.
To be more specific, this is (a simple form of) linear attention \citep{katharopoulos2020transformers, shen2018efficient}.

This implies that the input/output relation of the linear layer can also be realized by a ``growing nonparametric neural network'': Instead of storing the fixed-size weight matrix $\mW$ as the system's training memory, the system maintains a growing database by storing a new pair $(\vk_t$, $\vv_t)$ of key-value vectors for each training data point---the layer input $\vk_t = \vx_t \in \mathbb{R}^{d_\text{in}}$ as the ``key'' and the corresponding error signal $\vv_t = -\eta_t \nabla_{\vy_t} \mathcal{L} \in \mathbb{R}^{d_\text{out}}$ as the ``value'' vector (in addition to the initial weight matrix $\mW_0$). Inference is performed through the linear attention mechanism above (Eq.~\ref{eq:gd_linear_attention}).

Given the ubiquity of linear layers in deep neural networks, this dual view suggests that a fundamental mechanism linking training and inference in virtually any deep learning systems is based on (a cascade of) this simple linear attention mechanism.
However, recent research on linear attention in sequence modeling indicates that such ``simple linear attention'' mechanisms are highly suboptimal. This points to the potential for fundamentally improving deep learning by enhancing the underlying linear attention mechanism.

\section{Growing Nets with Non-Linear Attention}
\label{sec:attention}

With the goal of developing a general-purpose deep learning layer that can improve upon the conventional linear layer,  we replace $\phi$ in Eq.~\ref{eq:gd_linear_attention} by a more powerful kernel that is not reducible to a finite-dimensional linear map.
Consequently, the resulting layer can no longer be converted into a linear layer with a fixed-size weight matrix. Instead, we obtain a ``growing layer'' in which a new pair of key/value representations is added to the system for each training datapoint.
This raises a fundamental question: how should these key and value representations be chosen to implement a \textit{principled} learning algorithm that minimizes a given loss function?
Here we formally derive such algorithms in two classes of powerful attention functions.

\subsection{Positive Definite Kernel Attention}
\label{sec:rbf}
We first investigate the case where $\phi$ in Eq.~\ref{eq:gd_linear_attention} is a positive definite kernel (PDK).
By the Moore-Aronszajn theorem \citep{aronszajn1950theory}, any PDK admits a representation as an inner product of feature embeddings into a Hilbert space $\mathcal{H}_\phi$. That is, there exists a map $\psi_\phi:\mathbb{R}^{d_{\text{in}}}\to\mathcal{H}_{\phi}$ such that $\phi(\vx,\vx')=\langle \psi_{\phi}(\vx),\psi_{\phi}(\vx')\rangle_{\mathcal{H}_{\phi}}$.\looseness=-1

This allows us to view the nonparametric layer as a linear layer operating on the implicit feature space $\mathcal{H}_\phi$ with an infinite-dimensional weight matrix $\mW \in \mathbb{R}^{d_{\text{out}}\times \infty}$.
In \cref{app:rbf}, we rigorously show that performing standard gradient descent on this implicit weight matrix is mathematically equivalent to appending a new key-value pair to the database at every step.
Analogously to the analysis of the linear kernel case in \cref{sec:background}, the gradient update of the implicit weight matrix is $\Delta \mW \propto \left(-\eta_t \nabla_{\vy_t} \mathcal{L}\right) \psi_\phi(\vx_t)^\top$. This translates exactly to adding a new term $\left(-\eta_t \nabla_{\vy_t} \mathcal{L}\right) \phi(\vx_t, \cdot)$ to the layer's output function.

Consequently, to build a growing layer based on a PDK, we retain the exact key-value definitions from the linear case: setting the key $\vk_t = \vx_t$ and the value $\vv_t = -\eta_t \nabla_{\vy_t} \mathcal{L}$ implements a principled loss-minimizing learning rule in the RKHS.
In our experiments, we use the radial basis function (RBF) kernel---a classic PDK \citep{scholkopf2002learning}.

\subsection{Softmax and Weighted Softmax Attention}
\label{sec:softmax}

While PDKs offer a robust theoretical foundation, modern deep learning relies heavily on softmax attention, which empirically outperforms unnormalized kernel attention in many sequence modeling tasks \citep{trafo}. Here we study the case where we replace $\phi$ in Eq.~\ref{eq:gd_linear_attention} by the softmax function (more formally, $\phi(\vx_t, \vq)$ becomes $\phi(\vx_t, \vq; \{\vx_\tau\}_{1\leq\tau\leq T})$ as it involves normalization).
As it turns out, theoretically dealing with the standard softmax attention is highly non-trivial (as we discuss below; and in Appendix \ref{app:remark_standard_softmax}). Instead, we introduce a ``weighted softmax kernel'' and derive a learning algorithm for the corresponding growing NN.\looseness=-1

Let $\mathcal{D} \subset \mathbb{R}^{d_{\text{in}}} \times \mathbb{R}^{d_{\text{out}}}$ be a finite set of key-value pairs or ``database''. This database induces the standard softmax attention function $s_{\mathcal{D}}:
\mathbb{R}^{d_{\text{in}}} \to \mathbb{R}^{d_{\text{out}}}$ defined as:
$
    s_{\mathcal{D}}(\vq) = \sum_{(\vk,\vv)\in\mathcal{D}} e^{\langle \vk, \vq \rangle} \vv \big/ \sum_{(\vk,\vv)\in\mathcal{D}} e^{\langle \vk, \vq \rangle} \,.$

Consider the objective of making a small ``corrective'' adjustment to $\mathcal{D}$ to move $s_{\mathcal{D}}$ in a direction that minimizes a global loss. Ideally, we would achieve this by adding a new key-value pair to the database, analogous to the ``growing layers'' derived in Sec.~\ref{sec:rbf}.

However, this is not possible in an infinitesimal manner for this softmax attention. Introducing any new key \textit{discontinuously} alters the denominator, regardless of how small the associated value vector is.\looseness=-1

To enable the ``smooth addition'' of new key-value pairs, we introduce a scalar weight for each entry. This leads to the ``weighted softmax attention'' formulation, where the database $\mathcal{D}$ consists of a finite set of triples $(w, \vk, \vv) \in \mathbb{R}^+ \times \mathbb{R}^{d_{\text{in}}} \times \mathbb{R}^{d_{\text{out}}}$, and the retrieval function becomes:
\vspace*{-1mm}
\begin{equation}
    \label{eq:weighted_softmax}
    s_{\mathcal{D}}(\vq) = \frac{\sum_{(w,\vk,\vv)\in\mathcal{D}} w e^{\langle \vk, \vq \rangle} \vv}{\sum_{(w,\vk,\vv)\in\mathcal{D}} w e^{\langle \vk, \vq \rangle}} \,.
\end{equation}
In the remainder of this section, we develop a theoretical framework that allows us to derive a ``growing database'' formulation which corresponds to performing a gradient descent in a space containing all weighted softmax functions.

Now consider an infinite database containing a weight for every possible key-value pair. We formalize this by defining a weight function $w: \mathbb{R}^{d_{\text{in}}} \times \mathbb{R}^{d_{\text{out}}} \to \mathbb{R}^+$, which induces the continuous softmax function $s_{w}: \mathbb{R}^{d_{\text{in}}} \to \mathbb{R}^{d_{\text{out}}}$ as:
\vspace*{-1mm}
\begin{equation}
    \label{eq:softmax_with_weight_function}
    s_{w}(\vq) = \frac{\int_{\mathbb{R}^{d_{\text{in}}} \times \mathbb{R}^{d_{\text{out}}}} w(\vk,\vv) e^{\langle \vk, \vq \rangle} \vv \, d\vk \, d\vv}{\int_{\mathbb{R}^{d_{\text{in}}} \times \mathbb{R}^{d_{\text{out}}}} w(\vk,\vv) e^{\langle \vk, \vq \rangle} \, d\vk \, d\vv} \,.
\end{equation}
Appendix \ref{app:remark_functional_space_of_weighted_softmax_functions} provides further remarks on the functional space of weighted softmax functions.

\textbf{The case of Gaussian mixtures.}
We analyze the case where $w:\mathbb{R}^{d_{\text{in}}} \times \mathbb{R}^{d_{\text{out}}}\to\mathbb{R}^+$ represents the density of a Gaussian mixture. Let
$\mathcal{D}=\{(w_i,\vk_i,\vv_i):1\leq i\leq n\}\subset\mathbb{R}^+ \times \mathbb{R}^{d_{\text{in}}} \times \mathbb{R}^{d_{\text{out}}}$
be a finite set of $n$ weighted key-value pairs, and let $\sigma_k,\sigma_v>0$. We define the Gaussian mixture density $w_{\mathcal{D}}$ as follows:\looseness=-1
$$w_{\mathcal{D}}(\vk,\vv)=\sum_{i=1}^n\frac{w_i}{(2\pi)^{\frac{d_{\text{in}}+d_{\text{out}}}{2}}\sigma_k^{d_{\text{in}}}\sigma_v^{d_{\text{out}}}} e^{-\frac{\|\vk-\vk_i\|^2}{2\sigma_k^2}-\frac{\|\vv-\vv_i\|^2}{2\sigma_v^2}}\,.$$
This formulation corresponds to mixing $n$ Gaussian densities centered at $(\vk_i,\vv_i)_{1\leq i\leq n}$ with respective mixing weights $(w_i)_{1\leq i\leq n}$. The covariance matrix $\Sigma$ for each of the $n$ Gaussian components is a block diagonal matrix given by:
\begin{equation}
\label{eq:block_covariance}
\Sigma = \begin{bmatrix}
\Sigma_k & 0 \\
0 & \Sigma_v
\end{bmatrix}= \begin{bmatrix}
\sigma_k^2 \mI_{d_{\text{in}}} & 0 \\
0 & \sigma_v^2 \mI_{d_{\text{out}}}
\end{bmatrix} \,.
\end{equation}

Now, let us compute the softmax function $s_{w_{\mathcal{D}}}$ induced by this mixture density. Substituting $w_{\mathcal{D}}$ into Eq.~\ref{eq:softmax_with_weight_function}, we can show that (see the full derivation in Appendix \ref{app:skipped_derivation}):
\begin{align*}
    s_{w_{\mathcal{D}}}(\vq) &= \frac{\int_{\mathbb{R}^{d_{\text{in}}} \times \mathbb{R}^{d_{\text{out}}}} w_{\mathcal{D}}(\vk,\vv) e^{\langle \vk, \vq \rangle} \vv \, d\vk \, d\vv}{\int_{\mathbb{R}^{d_{\text{in}}} \times \mathbb{R}^{d_{\text{out}}}} w_{\mathcal{D}}(\vk,\vv) e^{\langle \vk, \vq \rangle} \, d\vk \, d\vv} 
    \\
    &= \frac{\sum_{i=1}^n w_i e^{\langle \vk_i, \vq \rangle + \frac{1}{2}\sigma_k^2 \|\vq\|^2} \vv_i}{\sum_{i=1}^n w_i e^{\langle \vk_i, \vq \rangle + \frac{1}{2}\sigma_k^2 \|\vq\|^2}}
    = \frac{\sum_{i=1}^n w_i e^{\langle \vk_i, \vq \rangle} \vv_i}{\sum_{i=1}^n w_i e^{\langle \vk_i, \vq \rangle}} = s_{\mathcal{D}}(\vq) \,.
\end{align*}
This demonstrates that the continuous softmax function induced by the Gaussian mixture density $w_{\mathcal{D}}$ is exactly equivalent to the discrete weighted softmax function $s_{\mathcal{D}}$ defined by the database $\mathcal{D}$. The shared variance-dependent term $e^{\frac{1}{2}\sigma_k^2 \|\vq\|^2}$ cancels out in the numerator and denominator, rendering the prediction independent of the specific variances $\sigma_k$ and $\sigma_v$ used in the mixture model. It is worth noting that the same cancellation holds for any general block-diagonal covariance matrix $\Sigma$ (where $\vk$ and $\vv$ are independent), regardless of the specific correlations within the key or value dimensions.

\textbf{The functional space of Gaussian-modulated softmax functions.}
Now observing that Gaussian mixtures with a fixed covariance matrix $\Sigma$ are absolutely continuous with respect to the centered Gaussian measure $\mu=\mathcal{N}(\mathbf{0},\Sigma)$, we can analyze the system using this background measure. We define the Gaussian-modulated softmax function for a relative weight function $w$ as:
\begin{equation}
    \label{eq:softmax_with_weight_function_Gaussian}
    s_{w}(\vq) = \frac{\int_{\mathbb{R}^{d_{\text{in}}} \times \mathbb{R}^{d_{\text{out}}}} w(\vk,\vv) e^{\langle \vk, \vq \rangle} \vv \, d\mu(\vk, \vv)}{\int_{\mathbb{R}^{d_{\text{in}}} \times \mathbb{R}^{d_{\text{out}}}} w(\vk,\vv) e^{\langle \vk, \vq \rangle} \, d\mu(\vk, \vv)} \,.
\end{equation}
This formulation relates to the original Lebesgue-density formulation via a Radon-Nikodym derivative. If the physical density of the database is $p(\vk, \vv)$, the corresponding functional weight is $w(\vk, \vv) = \frac{dp}{d\mu}(\vk, \vv)$.\looseness=-1

This perspective reveals a crucial structural property: a Gaussian mixture in the physical space corresponds to a linear combination of exponential functions in the functional space. Specifically, the ratio of a shifted Gaussian $\mathcal{N}(\mathbf{x}|\mathbf{u}, \Sigma)$ to the base measure $\mathcal{N}(\mathbf{x}|\mathbf{0}, \Sigma)$ is proportional to $e^{\langle \mathbf{x}, \Sigma^{-1}\mathbf{u} \rangle}$. Consequently, adding a key-value pair to the database is mathematically equivalent to adding an exponential basis function to the weight $w(\vk, \vv)$. This justifies searching for updates in the span of exponential functions during gradient descent.

Let $\mathcal{H}_{\sigma_k,\sigma_v}$ be the space of functions $w:\mathbb{R}^{d_{\mathrm{in}}}\times\mathbb{R}^{d_{\mathrm{out}}}\to\mathbb{R}$ such that
\begin{align*}
    \mathbb{E}_{(\vk,\vv)\sim\mathcal{N}(\mathbf{0},\Sigma)}[w(\vk,\vv)^2&e^{\langle \vk,\vq\rangle+\langle \vv,\vv'\rangle}]<\infty\,,
    \quad\quad\forall \vq\in\mathbb{R}^{d_{\mathrm{in}}}\,,\forall \vv'\in\mathbb{R}^{d_{\mathrm{out}}}\,.
\end{align*}

We endow $\mathcal{H}_{\sigma_k,\sigma_v}$ with the $L_2(\mu)$ inner product:
$\langle w,w'\rangle_{\mathcal{H}_{\sigma_k,\sigma_v}}=\mathbb{E}_{(\vk,\vv)\sim\mathcal{N}(\mathbf{0},\Sigma)}[w(\vk,\vv)w'(\vk,\vv)]\,.$

\textbf{Functional gradient descent update rule.}
We derive the update rule for the weight function $w \in \mathcal{H}_{\sigma_k,\sigma_v}$ by performing functional gradient descent on a loss function $\mathcal{L}(s_w(\vq), \vy)$. For a single observation $(\vq_t, \vy_t)$, we compute the Fréchet derivative of the loss functional $J(w) = \mathcal{L}(s_w(\vq_t), \vy_t)$ and identify the functional gradient $\nabla_w J$ via the Riesz representation theorem in $\mathcal{H}_{\sigma_k,\sigma_v}$. As detailed in \cref{app:functional_gradient}, the gradient $\nabla_w J$ takes a specific form involving the exponential kernel $e^{\langle \vk, \vq_t \rangle}$ and the error signal:

\vspace*{-1mm}
\begin{equation}
    \nabla_w J(\vk, \vv) = \frac{e^{\langle \vk, \vq_t \rangle}}{D(w)} \langle \nabla_{\hat{\vy}}\mathcal{L}, \vv - s_w(\vq_t) \rangle\,\, \text{with} \,\, D(w)=\int_{\mathbb{R}^{d_{\text{in}}} \times \mathbb{R}^{d_{\text{out}}}} w(\vk,\vv) e^{\langle \vk, \vq_t \rangle} \, d\mu(\vk, \vv)\,.
\end{equation}

$D(w)$ is the denominator of the softmax functional induced by $w$.

The functional update $w_{t+1} \leftarrow w_t - \eta \nabla_w J$ corresponds to adding a new term to the weight function, which is constructively equivalent to appending a new key-value tuple to the database (cf. \cref{app:constructive_update_analysis}). Specifically, this establishes the update rule that adds a tuple $(w_{\text{new}}, \vk_{\text{new}}, \tilde{\vv}_{\text{new}})$ where the new key is a scaled version of the query, $\vk_{\text{new}} = \Sigma_k \vq_t= \sigma_k^2 \vq_t$, and the weight and value are determined by the prediction error and the local density. To ensure numerical stability, we store the weighted value vector $\tilde{\vv} = w \vv$, leading to the robust update rules:
\begin{equation}
     w_{\text{new}} \propto \eta \langle \vs_t, \vg_t \rangle, \quad \tilde{\vv}_{\text{new}} \propto -\eta \Sigma_v \vg_t\propto -\eta \sigma_v^2 \vg_t \,,
     \label{eq:update_rule_functional_gd_wsm}
\end{equation}
where $\vg_t = \nabla_{\hat{\vy}} \mathcal{L}$ and $\vs_t = s_{w_t}(\vq_t)$. Refer to \cref{app:functional_gradient} for more details.
In the case of mini-batch stochastic gradient descent with a batch of size $B$, the linearity of the gradient operator implies that the total functional gradient is the sum of individual gradients centered at distinct queries. Consequently, a single step corresponds to adding $B$ distinct tuples to the database simultaneously, reflecting the non-parametric nature of the approach where model complexity grows with data. 

\begin{remark}
\label{rem:weighted_softmax_remark}
Given that $\vk_{\text{new}}=\sigma_k^2 \vq_t$, one can interpret $\sigma_k^2$ as the inverse temperature of the resulting softmax function. In practice, we can store $\vk_{\text{new}}=\vq_t$ and instead use a temperature hyperparameter $T$ in the Softmax, as is typically done\footnote{$\sigma_k^2$ relates to the temperature because increasing $\sigma_k^2$ scales the positions of new keys quadratically, causing them to spread apart faster than the linear expansion of the standard deviation of individual Gaussian components. Consequently, higher variance $\sigma_k^2$ acts as a separation parameter, reducing component overlap and sharpening the attention distribution.}, i.e.,
the final weighted-softmax function becomes:
\begin{equation}
    \label{eq:weighted_softmax_with_weighted_value}
    s_{\mathcal{D}}(\vq) = \frac{\sum_{(w,\vk,\tilde{\vv})\in\mathcal{D}} e^{\frac{1}{T\sqrt{d_{\text{in}}}} \langle \vk, \vq \rangle} \tilde{\vv}}{\sum_{(w,\vk,\tilde{\vv})\in\mathcal{D}} e^{\frac{1}{T\sqrt{d_{\text{in}}}}\langle \vk, \vq \rangle}w}  \,.
\end{equation}
In Appendix \ref{app:optimization_dynamics}, we provide further remarks regarding extensions of the learning algorithm above (corresponding to vanilla gradient descent) with weight decay and momentum.
\end{remark}
\looseness=-1

\subsubsection{Handling negative weights: clipping and gauge invariance} 
\label{sec:gauge}
One challenge with the update rule of \eqref{eq:update_rule_functional_gd_wsm} is that when $\langle \vs_t, \vg_t \rangle < 0$ we get a \emph{negative} weight $w_{\text{new}} < 0$. This may cause denominator cancellation and numerical instability (cf. \cref{app:negative_weights}). A simple heuristic to enforce non-negativity is via clipping, $w_{\text{new}} \leftarrow \max(0, w_{\text{new}})$.

Another method for addressing the negative weights issue without clipping is via a translational symmetry of the softmax functional, which we call \emph{gauge-invariance} (cf. \cref{app:gauge-invariance}): If we shift all the value vectors in the database by $+\vb$, the output of the weighted softmax also shifts by $+\vb$. Hence, if we then subtract $\vb$ from the output, we recover the original weighted softmax. Therefore, by equipping the softmax functional with an auxiliary global bias $\vb$, the model gains a translational symmetry between the values and the bias (cf. Proposition~\ref{prop:gauge_invariance}). This means that the same function has a different representation for each offset (gauge). We can dynamically choose an offset frame---a ``virtual gauge'' $\vb_t$ along the gradient direction---that guarantees that the effective projection $\langle \vg_t, \vs_t + \vb_t \rangle$ is strictly positive, ensuring $w_{\text{new}} > 0$. Refer to \cref{app:gauge-invariance} for a detailed description of this method.

\subsection{Sublinear Growth with Hybrid Non-Parametric and Parametric Learning}
\label{sec:sublinear_growth}
Appending a new key-value tuple at every training step incurs a linear memory growth $\mathcal{O}(T)$ with respect to the number of training steps $T$. To achieve scalable, sublinear growth, we introduce a hybrid learning rule that dynamically balances non-parametric expansion with parametric refinement: We treat the existing key-value pairs in the database as learnable parameters optimized via standard optimizers (e.g., Adam or AdamW), while conditionally growing the database using functional gradient descent only when a newly encountered key $\vx$ is sufficiently distinct from the existing keys $\vk_j$ in the database, based on a threshold parameter $\tau$, i.e., $\min_j \lVert \vx - \vk_j \rVert_2 > \tau$. By restricting memory growth to genuinely novel data while leveraging parametric optimization for representation tuning, this hybrid mechanism dramatically curbs memory footprint while preserving the performance advantages of the fully non-parametric model (see \cref{fig:sub-linear-growth}).

\section{Advanced Linear Attention Perspectives}
\label{sec:mesa}

Sec.~\ref{sec:attention} focused on improving the LA underlying the dual form of NNs (Eq.~\ref{eq:gd_linear_attention}) by introducing nonlinearity.
Here we plug advanced LA models into it, and show how they lead to improved NN optimizers.\looseness=-1

We focus on the mesa-layer \citep{von2023uncovering,von2025mesanet}, which replaces the linear attention of Eq.~\ref{eq:gd_linear_attention} by the best possible linear key-value map, in the sense that it achieves the lowest (regularized) squared error over the keys and values seen up to time $t$:
\vspace*{-2mm}
\begin{align}
    \mW_t &= \argmin_\mW \frac{1}{2} \sum_{\tau=1}^t \|\vv_\tau - \mW \vx_\tau \|^2 + \frac{\lambda}{2} \|\mW\|^2_F\,,
    \label{eq:mesa_layer}
\end{align}
where $\lambda > 0$ is a static (scalar) hyperparameter which controls the strength of regularization, given by the squared Frobenius norm $\|\mW\|^2_F$ of the weights. Note that the mesa-layer considers the cumulative (in time) loss, and fully minimizes it. \citet{von2025mesanet} empirically found that the mesa-layer outperforms both standard LA and the DeltaNet on various sequence modeling problems.

Interestingly, as we show next, the dual view of a linear layer trained by gradient descent yields an approximate second-order optimizer once we replace linear attention (Eq.~\ref{eq:gd_linear_attention}) by the mesa-layer (Eq.~\ref{eq:mesa_layer}). The only remaining step is to determine what the values $\vv_t$ should be. In this section, we set them to be the gradient-corrected layer outputs:
\begin{equation}
\label{eq:mesa_value}
    \vv_t = \mW_{t-1} \vx_t - \eta_t \nabla_{\vy_t} \mathcal{L}.
\end{equation}
An important property of the objective of the mesa-layer (Eq.~\ref{eq:mesa_layer}) is that it can be solved analytically. We use this property to obtain our optimization algorithm, which we develop for the mini-batch setting---the current standard in deep learning. Before proceeding, we introduce additional notation to handle mini-batches. We denote by $T$ the number of batches seen so far, each containing $B$ data points. In this setting, $\mX_T$ is the $d_\text{in} \times B$ matrix that holds the current batch inputs in its columns, and $\mV_T$ is the $d_\text{out} \times B$ matrix that holds the corresponding gradient-corrected outputs in its columns, following Eq.~\ref{eq:mesa_value}. Additionally, we define $\mX_{<T}$ and $\mV_{<T}$ as the history of all inputs and gradient-corrected outputs (resp.) observed thus far, excluding the current batch. Solving Eq.~\ref{eq:mesa_layer} analytically, we obtain a closed-form expression for the weights at iteration $T$ i.e., $
    \mW_T = (\mV_{<T}  \mX_{<T}^\top + \mV_T \mX_T^\top) \cdot (\mX_{<T}  \mX_{<T}^\top + \mX_{T} \mX_{T}^\top + \lambda \mI)^{-1}.$
It turns out that the resulting layer implements exactly a recently proposed optimizer \citep[FOOF;][]{benzing2022foof}, which has been shown to be highly efficient in deep learning. FOOF was derived as a simplification  of KFAC \citep{martens2015kfac}, itself an approximate second-order optimization method. Intuitively, the optimization speed-up that FOOF (here, the mesa-layer) enjoys stems from the additional inverse matrix which is not present in linear attention. This additional factor applies a whitening transform to the inputs, which yields a full (second-order) Newton update for a shallow linear model, and an approximate second-order update for a deep neural network. We refer to \citet{benzing2022foof} for an analysis of this optimizer, and to Appendix~\ref{app:mesa_foof} for additional details.\looseness=-1

As an additional reference, we further consider the DeltaNet \citep{schlag2021linear} case. As derived from the delta rule \citep{widrow1960adaptive}, this case corresponds to performing an online gradient descent step w.r.t.~$\mW$ on the layer-local, time-dependent loss penalizing the discrepancy between the prediction $\mW \vx_t$ and a target value $\vv_t=-\eta\nabla_{\vy_t} \mathcal{L}$:
\begin{equation}
    \ell_t(\mW) := \frac{1}{2} \| \mW\vx_t - \vv_t \|_2^2 = \frac{1}{2} \|\mW\vx_t + \eta \nabla_{\vy_t} \mathcal{L} \|_2^2
    \label{eq:delta-loss}
\end{equation}
with learning rate $\lambda$ (corresponding to an online version of the mesa-layer case). This yields the update $\mW_t = \mW_{t-1} + \lambda (- \eta \nabla_{\vy_t} \mathcal{L} - \mW_{t-1}\vx_t)\vx_t^\top$.  Appendix~\ref{app:mesa_foof} provides additional details.\looseness=-1

\section{Experiments}
\label{sec:exp}
Here we conduct experiments to empirically validate  theories arising from the two extended views of linear layers above: the newly derived growing neural networks and their learning rules (Sec.~\ref{sec:attention}) and advanced optimizers (Sec.~\ref{sec:mesa}).

\textbf{Basic Settings.} The baseline architecture is a residual MLP similar to the transformer's MLP block \citep{trafo}; the model has multiple layers with a block structure, where every block of the model updates the residual stream $\ve \in \mathbb{R}^{d_e}$ following
\vspace*{-1mm}
\begin{equation}
\label{eq:mlp-residual-update}
    \ve \gets \ve + \mW_2 \sigma (\mW_1 \text{RMSNorm}(\ve)),
\end{equation}
with nonlinearity $\sigma(\cdot)$ and RMSNorm prenormalization \citep{ZhangS19a}. The first block is applied to the net input $\vx \in \mathbb{R}^{d_x}$.
The output layer is the standard softmax layer for classification or a linear layer for regression, using the cross-entropy or mean-squared error loss, respectively.

To find a strong baseline, we perform intensive search over the choice of activation $\sigma(\cdot)$, namely: ReLU \citep{fukushimaRelu,NairH10}, GELU \citep{hendrycks2016gaussian} and softmax \citep{bridle1990softmax}; as well as over the space of optimizers, including plain SGD, Adam \citep{kingma2014adam}, AdamW \citep{LoshchilovH19}, RMSProp \citep{hinton2012neural}, and Muon \citep{jordan2024muon}. These serve as the baselines for the MesaNet/DeltaNet-derived optimizers introduced in Sec.~\ref{sec:mesa}. We perform a fine-grained scan over hyperparameters, including model depth and nonlinearity type. All our results are averaged over 5 seeds, and shaded areas reflect stdev.~unless noted otherwise (see Appendix~\ref{app:data} for more details).
Our nonlinear attention-based growing nets (Sec.~\ref{sec:attention}) follow the same residual architecture, but perform:\looseness=-1
\begin{equation}
    \ve \gets \ve + g_\mathcal{D}(\text{RMSNorm}(\ve)),
\end{equation}
where $g_\mathcal{D}(\cdot)$ can either be the weighted-softmax layer (Eq.~\ref{eq:weighted_softmax}) or the RBF layer (Eq.~\ref{eq:gd_linear_attention}, with $\phi$ an RBF).\looseness=-1

\begin{wrapfigure}[19]{r}{0.4\textwidth}
\vspace{-0.15cm}
\begin{center}
\centerline{\includegraphics[width=0.455\textwidth]
{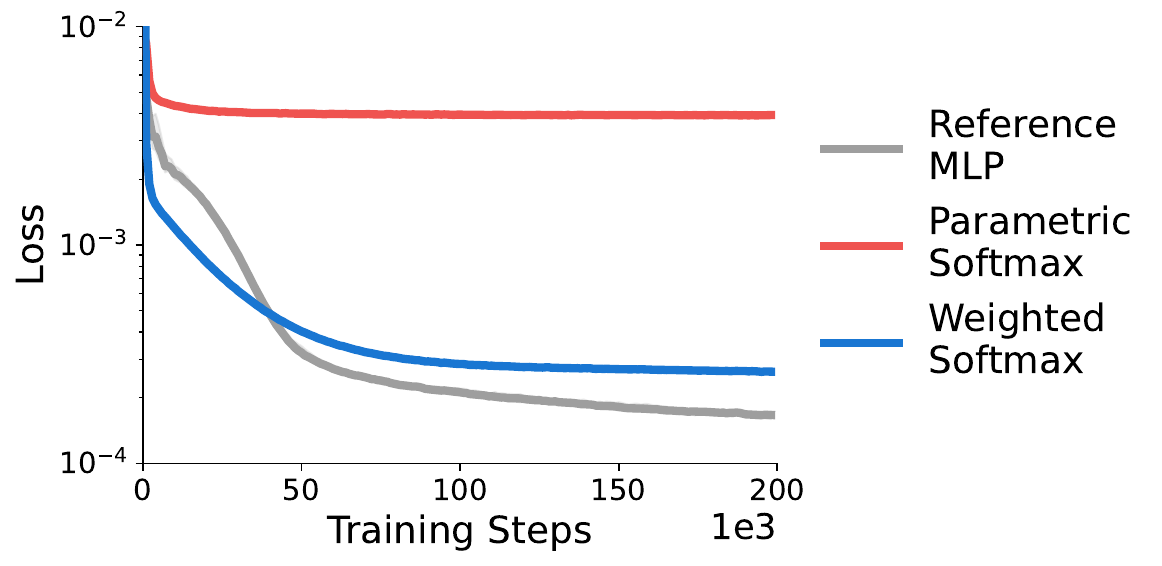}}

\caption{Teacher-student regression loss (mean $\pm$ s.e.m.) comparing: (blue, Weighted Softmax) our growing weighted softmax model; (red, Parametric Softmax) a very wide  MLP baseline with width matching the final number of training samples using the softmax activation; and (gray, Reference MLP) an overparameterized student with matched architecture providing a lower bound.}
\label{fig:teacher-student}
\end{center}
\end{wrapfigure}

The final output layer is similar but without the residual connection.
Importantly, note that the formal learning rules we derived for these growing nets correspond to training them using plain SGD (Sec.~\ref{sec:attention}); derivations for more sophisticated optimizers are left for future work.

\textbf{Teacher-Student Experiments.}
We begin with experiments on a synthetic regression problem, where scalar target outputs $y^* \in \mathbb{R}$ are generated by a held-out residual MLP teacher $y^*(\vx)$ from 100-dimensional inputs $\vx \in \mathbb{R}^{100}$ \citep{gardner1989three}. The teacher NN has 1 hidden layer with ReLU non-linearity, residual-stream $d_e=100$, and random, fixed weights sampled from a LeCun normal distribution (fan-in). The teacher then projects the residual stream activation to an output dimension of $1$. The student $y(\vx)$ is trained through online SGD on the expected squared error loss $\mathbb{E}_{\vx \sim p(\vx)}[(y^*(\vx) - y(\vx))^2]$, by drawing new data samples at every training iteration. The inputs $\vx$ are generated by first sampling from a standard normal distribution, and then projecting to the unit sphere.
Appendix~\ref{app:experimental} provides further setup details.

In this task, an overparameterized student---with the same architecture as the teacher, but larger width---is expected to perform very strongly \citep{simsek2021geometry}. In Fig.~\ref{fig:teacher-student} we see that this is indeed the case; the MLP student (denoted by ``Reference MLP'' in the plot) outperforms all other models, and serves as the gold standard for this task. 

We then compare our Growing Weighted Softmax model (GWS; Sec.~\ref{sec:softmax}) with a fixed-size ``parametric softmax'' baseline that uses softmax activation (Eq.~\ref{eq:mlp-residual-update}) and dimensions matching the full database size—closely mirroring the GWS model by treating $\mW_1$ and $\mW_2$ as learnable keys and values \citep{gevaetal2021transformer}.

We compare a single-layer GWS with a single-block parametric softmax baseline (with two trainable weight matrices; the best model trained with Adam). We find that GWS significantly outperforms the parametric baseline. Despite the architectural mismatch with the teacher, and despite only having one trainable layer (comprising values and respective weights), our growing weighted softmax model can reach significantly lower loss values.\looseness=-1

\begin{figure}[t]
    \centering

    \setlength{\tabcolsep}{4pt} 
    
    \begin{tabular}{@{} c c : c : c @{}}
        
        \includegraphics[width=0.24\textwidth]{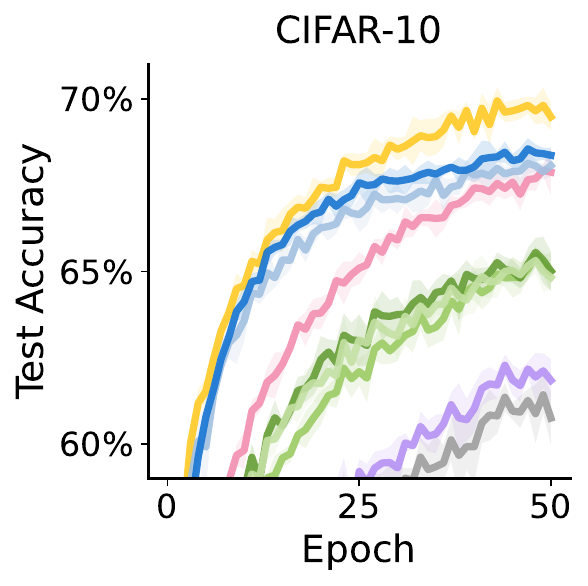} &
        \includegraphics[width=0.24\textwidth]{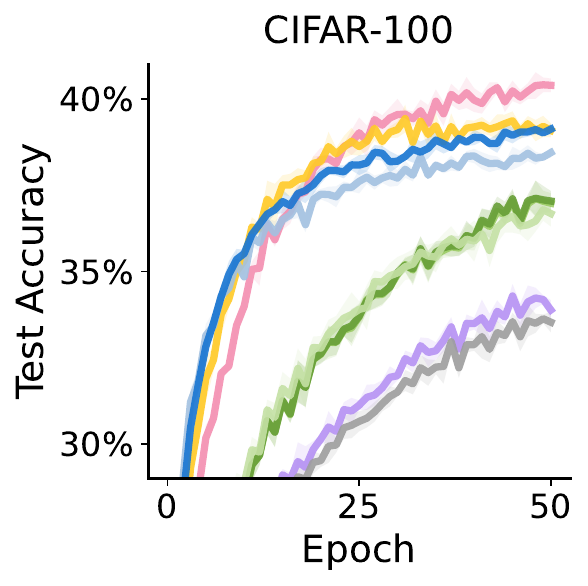} &
        \includegraphics[width=0.24\textwidth]{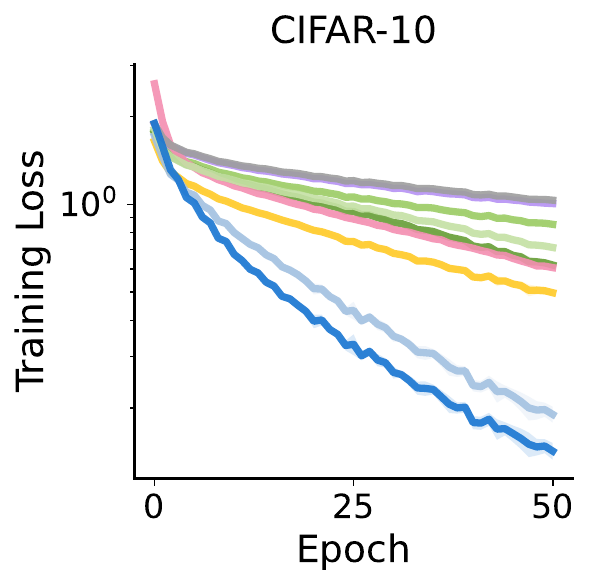} &
        \includegraphics[width=0.24\textwidth]{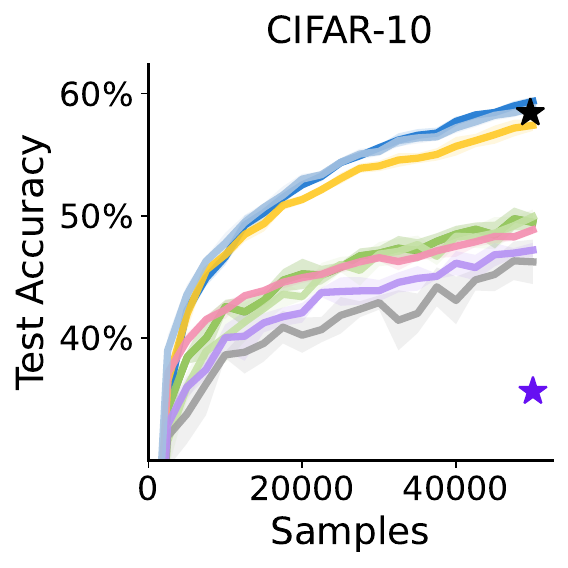} \\
        
    \end{tabular}
    
    \includegraphics[width=\textwidth]{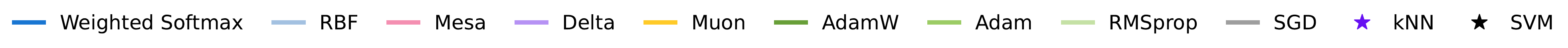}
    
    
    \caption{(Left Plots) Test set accuracy on CIFAR-10/100: Growing Weighted Softmax \& RBF, while based on plain SGD, significantly outperform the SGD-trained MLP and Adam/RMSProp and are competitive with MLPs trained with Muon. The Mesa/FOOF optimizer achieves high accuracy on CIFAR-10 and performs best on CIFAR-100. (Middle Right) Training loss for Growing Weighted Softmax \& RBF  decreases much faster than that of the other models. (Right) Test set accuracy on CIFAR-10 after a \textit{single pass} on the training data. Growing Weighted Softmax \& RBF are the most efficient learning algorithms. To place our results, we include the test accuracy of two classical (not single pass) methods: k-nearest-neighbor (kNN) and SVM (starred; details in Appendix~\ref{app:experimental:nonparam}).} 
    \label{fig:cifar-test-acc}
    \vspace{-7mm}
\end{figure}

\textbf{CIFAR-10/100 Experiments.}
Next, we investigate the performance of the growing
RCDL models on the classic CIFAR-10 and CIFAR-100 image classification tasks \citep{krizhevsky}, using standard data augmentation and non-convolutional architectures.
Fig.~\ref{fig:cifar-test-acc} reports the performance of several parametric baselines and the growing nonparametric models based on RBF and weighted softmax.

Fig.~\ref{fig:cifar-test-acc}/Left shows test accuracy on CIFAR-10/100.
We observe that both the growing weighted softmax and RBF models (Sec.~\ref{sec:attention}), whose learning rules are based on plain SGD, significantly outperform the baselines trained by plain SGD, as well as Adam, AdamW and RMSProp; and
the GWS model slightly outperforms the RBF variant.

Using more advanced optimizers, the parametric MLP eventually surpasses the SGD-based growing models: the MLP trained by the MesaNet-derived (FOOF) optimizer outperforms all methods on CIFAR-100 and performs on par with the growing variants on CIFAR-10 (see Appendix \ref{app:experimental:DeltaNet} for further discussion on the DeltaNet-based one). Similarly, Muon proves to be a strong baseline on both datasets achieving top accuracy on CIFAR-10. These results highlight the importance of developing learning rules for growing models that correspond to these advanced optimizers. 

 Fig.~\ref{fig:cifar-test-acc}/Middle reports the evolution of training loss on CIFAR-10. Interestingly, the training loss decreases drastically faster for the two nonparametric models compared to all the other methods, while having good generalization performance as shown above; this may be a promising property for tasks that require both good generalization and strong memorization.

\begin{figure}
\vspace{-0.7cm}
\begin{minipage}[t]{0.48\textwidth}
\begin{center}
\centerline{\includegraphics[width=1.\columnwidth]{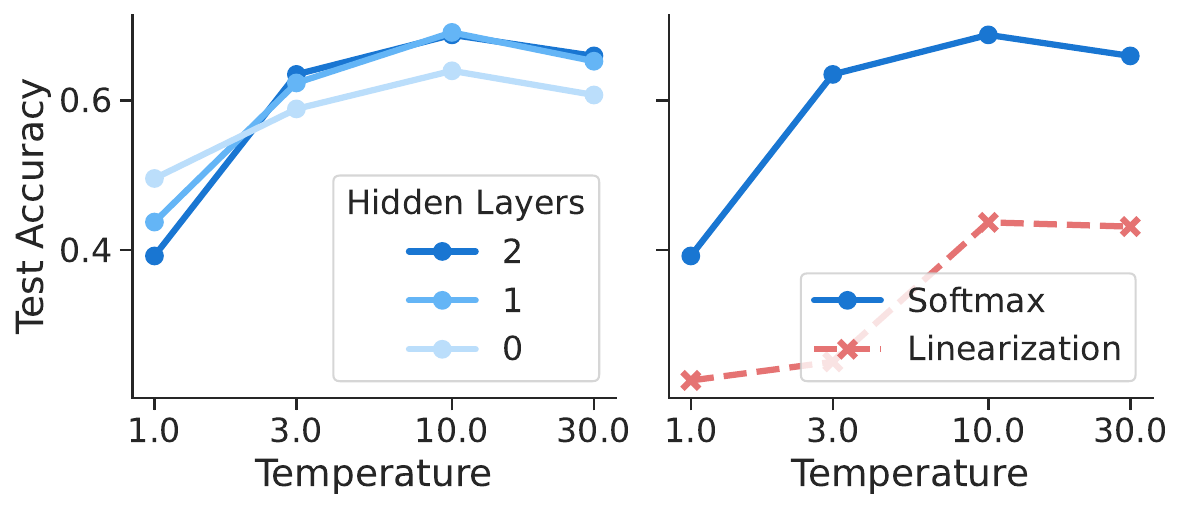}}

\caption{(Left) The growing weighted softmax model trained on CIFAR-10 improves with depth, and displays a non-monotonic temperature behavior w.r.t.~test set acc. For the optimal value of $T=10$ and $d_{\text{in}}=3072$, $\frac{1}{T\sqrt{d_{\text{in}}}} \langle \vk, \vq \rangle\in[-5.54,5.54]$ implying that the softmax layer is operating in the non-linear regime. (Right) This is further confirmed by linearizing post-training the ``exp'' nonlinearity of the layer, which results in a significant performance degradation.\looseness=-1
}

\label{fig:deep-representations-and-temperature}

\end{center}
\end{minipage}
\hfill
\begin{minipage}[t]{0.48\textwidth}
\begin{center}
\centerline{\includegraphics[width=1.\columnwidth]{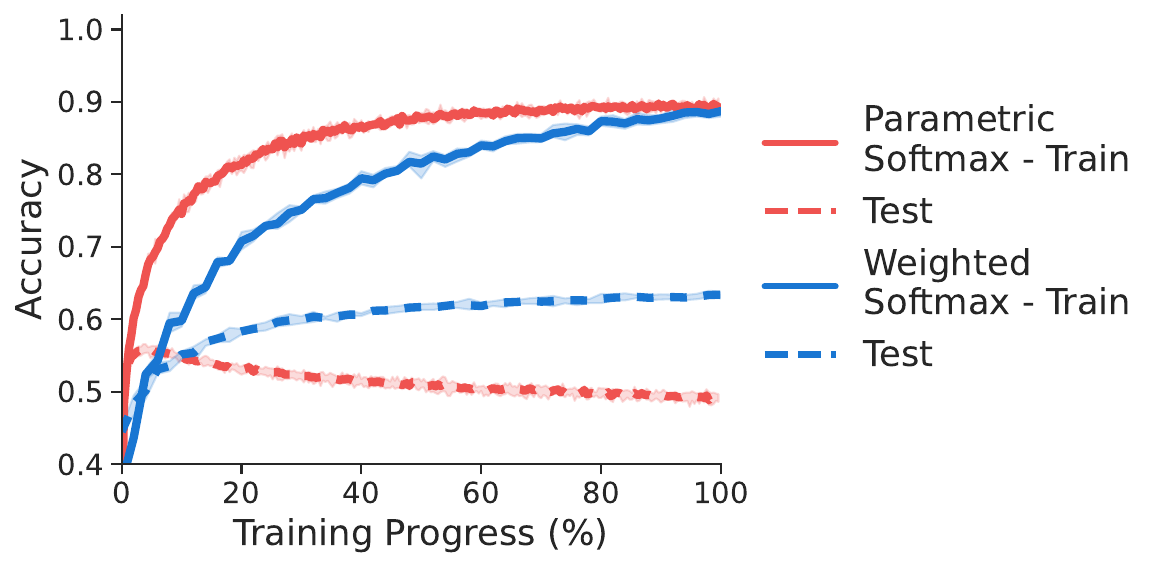}}
\caption{Comparison between parametric softmax (very wide MLP with softmax nonlin.) and our growing non-parametric weighted softmax model. Despite sharing the same architecture, the models behave very differently, both in terms of optimization and generalization: The parametric model displays a widening train-test gap, that is not present in its nonparametric weighted counterpart. We train the former for 2000, the latter for 50 epochs.}
\label{fig:bbb}
\end{center}
\end{minipage}
\vspace{-1cm}
\end{figure}

Finally, Fig.~\ref{fig:cifar-test-acc}/Right shows the learning efficiency of all methods when trained only on a single pass over non-augmented CIFAR-10 dataset. In this online regime, the growing methods outperform all parametric models including Muon and Mesa. We refer to \cref{app:weighted_softmax_exp} for further experimental results and \cref{app:gauge_comparison} for results using the gauge-invariance version of the GWS layer (described in \cref{sec:gauge}).  

\begin{wrapfigure}[15]{r}
{0.4\textwidth}
\vspace{-1em}
\begin{center}
\centerline{\includegraphics[width=0.35\textwidth]{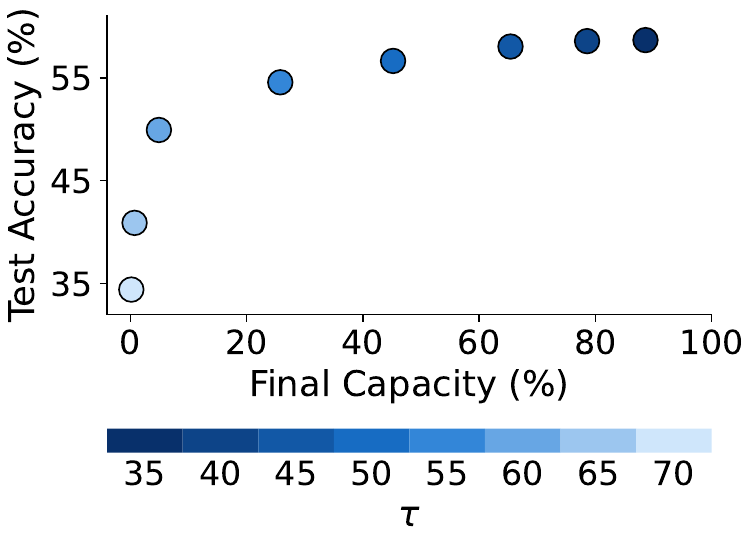}}
\vspace{-0.1cm} 
\caption{CIFAR-10 performance vs final capacity for various growth thresholds $\tau$. The model retains high test acc. while exhibiting graceful performance degradation when compressing further.}
\label{fig:sub-linear-growth}
\end{center}
\end{wrapfigure}

\textbf{Analysis.} Here we analyze how the GWS  model's test performance varies as a function of the number of layers and its  temperature (Sec.~\ref{sec:softmax}), and verify that the model effectively leverages the nonlinear kernel.\looseness=-1

Fig.~\ref{fig:deep-representations-and-temperature} shows that adding hidden layers improves the performance on CIFAR-10 (though performance saturates with 2 hidden layers), and that there is a non-monotonic dependence on temperature.
We also confirm that the model is in the nonlinear regime by ablating the trained model by linearizing the exponential function (i.e., using the approximation $\exp(x) \approx 1 + x$) in the weighted softmax, post-training, which leads to a severe drop in performance.



We further compare the training and generalization behaviors of the GWS model with that of the parametric softmax baseline. Figure \ref{fig:bbb} shows their training and test accuracy on CIFAR-10 as a function of the training progress.
We observe that their behaviors are very different:
despite being able to train a huge set of parameters at every iteration, the parametric baseline is slower to fit the training data, despite being 
highly prone to overfitting; we could not find a hyperparameter configuration that avoided a widening train-test performance gap.
The GWS model has no such issues and reaches much higher test accuracy overall.\looseness=-1

\textbf{Mixing retrieval-centric and parametric layers.}
It is also straightforward to mix RCDL layers and parametric layers within the same model. To illustrate this, we conduct a sanity check experiment on vision transformers (ViTs) \cite{dosovitskiy2021an} where we replace the feed-forward block in every ViT layer with an RCDL layer as a drop-in replacement, while keeping all other layers (e.g., self-attention) parametric and training them by gradient descent. Trained for a single epoch on CIFAR-10, the purely parametric ViT achieves $48.4\%$ while our mixed model achieves $54.4\%$ test set accuracy, a promising result opening up many avenues for future research. For more details, see Appendix~\ref{app:experimental:vit}.\looseness=-1

\textbf{Sub-linear growing.}
Finally, to mitigate the prohibitive memory and compute growth of vanilla RCDL layers, we provide preliminary results of the hybrid layer described in \cref{sec:sublinear_growth} which combines parametric learning with (conditional) non-parametric growth, again when training CIFAR-10 for one epoch. As shown in \cref{fig:sub-linear-growth}, the memory growth can be made sublinear while preserving strong performance. Further experimental details can be found in Appendix~\ref{app:experimental:hybrid}.

\vspace{-2mm}
\section{Discussion, Related Work, and Conclusion}
\label{sec:discussion}
\vspace{-2mm}

The RCDL concept has connections to many existing methods in machine learning, such as kernel machines \citep{aizerman1964theoretical,BoserGV92, burges1998tutorial,scholkopf2002learning} (further related work can be found in Appendix \ref{app:related_work}).
In particular, the principled derivation of our update rules---particularly our novel rules for the weighted softmax kernel---closely parallels functional gradient boosting \citep{mason1999boost, friedman2000greedy}. While this perspective remains relatively underexplored in deep learning, prior work \citep{grubb2010boosted} derived an algorithm for constructing networks by iteratively adding differentiable modules trained to match the functional gradient of the loss; moreover, the same work presented a functional gradient-based rule for reproducing kernel Hilbert spaces, which essentially corresponds to our result of Sec.~\ref{sec:rbf}. We complement this early study with modern deep learning experiments, and with our novel (weighted) softmax attention.

Furthermore, RCDL's growing network is fundamentally a cascade of kernel machines \citep{aizerman1964theoretical,BoserGV92, burges1998tutorial,scholkopf2002learning}, with features in every layer trained in an end-to-end fashion through gradient descent---a core principle of deep learning. This has an interesting difference to \citet{domingos2020every}'s view on how an entire deep NN trained by gradient descent can be \textit{approximated} by a kernel machine.
In contrast, the development of RCDL builds on the \textit{exact} layer-wise equivalence/duality between a linear layer trained by gradient descent and a kernel machine \citep{aizerman1964theoretical,irie2022dual}. This might open new avenues to study optimization in NNs.  \looseness=-1

RCDL may also open up a new path for continual learning. Unlike conventional weight-based learning which involves possibly irreversible compression during training, RCDL preserves past activations; we may potentially augment RCDL with some algorithms that revive old memories (e.g., within the forward pass of the net). Loosely, this property also aligns with the neurobiological findings on ``silent memory'' \citep{ryan2015engram,roy2017silent,gershman2025key} where forgetting occurs due to retrieval failures (not because the actual memory content is lost) and some (optogenetic) procedure can revive the apparently lost memory. \looseness=-1

\textbf{Limitations.}
RCDL has several open challenges. One of the biggest concerns is the memory and compute requirements that linearly grow with the number of training datapoints, which is generally prohibitive.
While some of the key-value pruning methods from the transformer literature (e.g., sliding window or token merging \citep{BolyaFDZFH23}) may provide some insights, deriving a more principled solution for RCDL remains a future research topic. At the same time, the internet (and the corresponding data storage) itself continues to grow, and memory continues to be cheaper as hardware advances---ever growing RCDL may not be completely inconceivable in a distant future. Another notable limitation is the class of base optimization algorithms currently supported by our theory; while we covered some of the basic concepts such as weight decay and momentum to augment vanilla gradient descent, we do not know yet how to derive more advanced optimizers such as Adam \citep{kingma2014adam} and Muon \citep{jordan2024muon} for RCDL. \looseness=-1

\textbf{Conclusion.}
Retrieval-centric deep learning (RCDL) aims to improve upon conventional deep learning by replacing the ``linear attention'' (LA) underlying the classic linear layer trained via gradient descent---a fundamental building block of deep learning---with more powerful attention mechanisms.
While replacing LA with MesaNet/DeltaNet-inspired update rules yields efficient optimizers for conventional weight-based neural nets, replacing it with kernelized and softmax attention leads to unconventional nonparametric networks that learn by growing a key-value memory database instead of a fixed-size weight matrix. Using techniques from functional analysis and functional gradient methods, we derived gradient-based learning rules to train deep networks composed of the latter, and we showed empirically that nonparametric deep networks perform strongly on standard benchmarks. Our findings open a number of avenues for future work, such as investigating how to compress the growing nonparametric layers or integrate such layers into language models.

\bibliography{references}
\bibliographystyle{unsrtnat}
\clearpage


\appendix

\section{Derivation of the Growing Layer Algorithm for Positive Definite Kernels}
\label{app:rbf}

In this appendix, we provide the formal derivation of the learning algorithms for the growing kernel networks introduced in Section \ref{sec:rbf}. We establish the correspondence between the ``growing database'' update rules and stochastic gradient descent (SGD) performed on an implicit, infinite-dimensional weight matrix defined in a Reproducing Kernel Hilbert Space (RKHS).

\subsection{The Growing Kernel Layer}

A growing kernel layer with input dimension $d_{\text{in}}$ and output dimension $d_{\text{out}}$ is defined by a tuple $(\phi, \mathcal{D})$, where:
\begin{itemize}
    \item $\phi: \mathbb{R}^{d_{\text{in}}} \times \mathbb{R}^{d_{\text{in}}} \to \mathbb{R}$ is a positive definite kernel.
    \item $\mathcal{D} = \{(\vk_i, \vv_i)\}_{i=1}^M$ is a finite database of key-value pairs, where $\vk_i \in \mathbb{R}^{d_{\text{in}}}$ and $\vv_i \in \mathbb{R}^{d_{\text{out}}}$.
\end{itemize}

The layer implements a function $f_{\phi, \mathcal{D}}: \mathbb{R}^{d_{\text{in}}} \to \mathbb{R}^{d_{\text{out}}}$ defined as:
\begin{equation}
    f_{\phi, \mathcal{D}}(\vq) = \sum_{(\vk, \vv) \in \mathcal{D}} \phi(\vk, \vq) \vv \,.
    \label{eq:kernel_layer_func}
\end{equation}
Unlike standard parametric layers with fixed dimensions, this layer ``grows'' by appending new key-value pairs to $\mathcal{D}$. Deep growing kernel networks are constructed by composing such layers, interleaving them with element-wise non-linearities $\sigma$. For example, a two-layer network is represented as $F(\vx) = \sigma(f_{\phi_2, \mathcal{D}_2}(\sigma(f_{\phi_1, \mathcal{D}_1}(\vx))))$.

\subsection{The Implicit Weight Matrix View}

By the Moore-Aronszajn theorem, the positive definite kernel $\phi$ is associated with a Reproducing Kernel Hilbert Space (RKHS) $\mathcal{H}_\phi$ and a feature map $\psi_\phi: \mathbb{R}^{d_{\text{in}}} \to \mathcal{H}_\phi$ such that $\phi(\vx, \vx') = \langle \psi_\phi(\vx), \psi_\phi(\vx') \rangle_{\mathcal{H}_\phi}$.
Since $\mathcal{H}_\phi$ is typically separable (e.g., for the RBF kernel), it is isomorphic to $\ell_2(\mathbb{N})$. We can thus view $\psi_\phi(\vx)$ as an infinite-dimensional vector.

Substituting this inner product structure into Eq.~\ref{eq:kernel_layer_func}, we obtain:
\begin{align}
    f_{\phi, \mathcal{D}}(\vq) &= \sum_{(\vk, \vv) \in \mathcal{D}} \vv \langle \psi_\phi(\vk), \psi_\phi(\vq) \rangle_{\mathcal{H}_\phi} \nonumber \\
    &= \left( \sum_{(\vk, \vv) \in \mathcal{D}} \vv \psi_\phi(\vk)^\top \right) \psi_\phi(\vq) \nonumber \\
    &= \mW_{\phi, \mathcal{D}} \psi_\phi(\vq) \,,
\end{align}
where $\mW_{\phi, \mathcal{D}} = \sum_{(\vk, \vv) \in \mathcal{D}} \vv \psi_\phi(\vk)^\top$ is an implicit linear operator (or infinite matrix) mapping from $\mathcal{H}_\phi$ to $\mathbb{R}^{d_{\text{out}}}$.
This formulation reveals that a growing kernel layer is theoretically equivalent to a linear layer with a weight matrix $\mW \in \mathbb{R}^{d_{\text{out}} \times \infty}$ acting on the feature space, where the weight matrix has a finite rank determined by the size of $\mathcal{D}$.

\subsection{Derivation of the Learning Algorithm}

We now derive the update rule for the database $\mathcal{D}$ by minimizing a global loss function $\mathcal{L}$ via Stochastic Gradient Descent (SGD) on the implicit weights $\mW$.

Consider a deep kernel network $F$ and a specific layer $\ell$ with implicit weight matrix $\mW^{(\ell)}$, kernel $\phi^{(\ell)}$, and feature map $\psi^{(\ell)}$. Let $h_{\text{in}}^{(\ell)}(\vx)$ denote the input to this layer and $h_{\text{out}}^{(\ell)}(\vx) = \mW^{(\ell)} \psi^{(\ell)}(h_{\text{in}}^{(\ell)}(\vx))$ denote the output.
Given a mini-batch of training data $B = \{(\vx_b, \vy_b)\}_{b=1}^{|B|}$, the loss is $\mathcal{J} = \sum_{b=1}^{|B|} \mathcal{L}(\vx_b, \vy_b; \mW^{(\ell)})$.

The gradient of the loss with respect to the implicit matrix $\mW^{(\ell)}$ is computed via the chain rule. Let $\mathbf{e}_b^{(\ell)}$ be the error signal (gradient of the loss with respect to the layer output) for sample $b$:
\begin{equation}
    \mathbf{e}_b^{(\ell)} := \nabla_{h_{\text{out}}^{(\ell)}} \mathcal{L}(\vx_b, \vy_b) \in \mathbb{R}^{d_{\text{out}}^{(\ell)}} \,.
\end{equation}
The gradient with respect to $\mW^{(\ell)}$ is the outer product of the error signal and the layer input features:
\begin{equation}
    \nabla_{\mW^{(\ell)}} \mathcal{J} = \sum_{b=1}^{|B|} \mathbf{e}_b^{(\ell)} \psi^{(\ell)}(h_{\text{in}}^{(\ell)}(\vx_b))^\top \,.
\end{equation}
Applying the SGD update rule $\mW_{t+1}^{(\ell)} = \mW_t^{(\ell)} - \eta \nabla_{\mW^{(\ell)}} \mathcal{J}$ yields:
\begin{equation}
    \mW_{t+1}^{(\ell)} = \mW_t^{(\ell)} - \eta \sum_{b=1}^{|B|} \mathbf{e}_b^{(\ell)} \psi^{(\ell)}(h_{\text{in}}^{(\ell)}(\vx_b))^\top \,.
\end{equation}
We now examine the function computed by the layer at step $t+1$ for an arbitrary query $\vq$:
\begin{align}
    f_{t+1}^{(\ell)}(\vq) &= \mW_{t+1}^{(\ell)} \psi^{(\ell)}(\vq) \nonumber \\
    &= \mW_t^{(\ell)} \psi^{(\ell)}(\vq) + \sum_{b=1}^{|B|} \left( -\eta \mathbf{e}_b^{(\ell)} \right) \psi^{(\ell)}(h_{\text{in}}^{(\ell)}(\vx_b))^\top \psi^{(\ell)}(\vq) \nonumber \\
    &= f_t^{(\ell)}(\vq) + \sum_{b=1}^{|B|} \vv_{\text{new}, b} \phi^{(\ell)}(\vk_{\text{new}, b}, \vq) \,,
\end{align}
where $\vk_{\text{new}, b} = h_{\text{in}}^{(\ell)}(\vx_b)$ and $\vv_{\text{new}, b} = -\eta \mathbf{e}_b^{(\ell)}$.
This confirms that performing SGD on the implicit infinite-dimensional weight matrix is exactly equivalent to appending the current batch inputs (as keys) and their scaled negative gradients (as values) to the database $\mathcal{D}$.

\subsection{Optimization Dynamics: Weight Decay and Momentum}

We extend the framework to include weight decay and momentum, deriving their dual forms in the growing database context.

\subsubsection{Weight Decay and Database Pruning}
Consider the $L_2$-regularized objective $\mathcal{L}_{reg}(\mW) = \mathcal{L}(\mW) + \frac{\gamma}{2\eta} \|\mW\|_{\text{HS}}^2$, where $\|\cdot\|_{\text{HS}}$ is the Hilbert-Schmidt (Frobenius) norm and $\gamma \in (0,1)$ is the decay factor. The SGD update becomes:
\begin{equation}
    \mW_{t+1} = (1-\gamma)\mW_t - \eta \nabla_{\mW} \mathcal{L} \,.
\end{equation}
In the functional form $f(\vq) = \mW \psi(\vq)$, the scaling factor $(1-\gamma)$ distributes linearly over the sum of key-value pairs. Thus, weight decay is implemented by scaling down the value vectors of \textit{all} existing pairs in the database at each step:
\begin{equation}
    \mathcal{D}_{t+1} = \{(\vk, (1-\gamma)\vv) \mid (\vk, \vv) \in \mathcal{D}_t\} \cup \{(\vk_{\text{new}}, \vv_{\text{new}})\} \,.
\end{equation}
\textbf{Pruning:} This update implies that the magnitude of a value vector stored at time $t_0$ decays as $\|\vv(t)\| = (1-\gamma)^{t-t_0} \|\vv(t_0)\|$. This justifies a principled pruning strategy: once $\|\vv(t)\|$ drops below a threshold $\epsilon$, the pair's contribution becomes negligible, and it can be removed from memory, bounding the database size.

\subsubsection{Momentum}
We adopt the standard momentum formulation where a velocity matrix $\mV_t$ accumulates gradients. Let $\mG_t = -\eta \nabla_{\mW} \mathcal{L}_t$ be the update term at step $t$. The velocity update is $\mV_{t+1} = \mu \mV_t + \mG_t$, and the weight update is $\mW_{t+1} = \mW_t + \mV_{t+1}$.
Unrolling the recurrence, the weight matrix at time $T$ is a linear combination of all past update terms $\mG_\tau$ (created at time $\tau < T$):
\begin{equation}
    \mW_T = \sum_{\tau=1}^T c(T, \tau) \mG_\tau \,, \quad \text{with } c(T, \tau) = \sum_{k=0}^{T-\tau} \mu^k = \frac{1-\mu^{T-\tau+1}}{1-\mu} \,.
\end{equation}
Since each $\mG_\tau$ corresponds to the batch of key-value pairs created at time $\tau$, we do not need to store explicit velocity vectors. Instead, we store the creation timestamp $t_{\text{create}}$ with each pair. During the forward pass at time $T$, we dynamically compute the coefficient $c(T, t_{\text{create}})$:
\begin{equation}
    f_T(\vq) = \sum_{(\vk, \vv_{\text{init}}, t_{\text{create}}) \in \mathcal{D}} \left( \frac{1-\mu^{T-t_{\text{create}}+1}}{1-\mu} \right) \phi(\vk, \vq) \vv_{\text{init}} \,.
\end{equation}
This allows exact momentum computation with no additional memory overhead per pair. Combining both techniques results in decaying the stored values for regularization while scaling the retrieval output for momentum.

\section{Further Details and Remarks for the Softmax Case}
\label{app:softmax_from_main}

\subsection{Remark: the functional space of weighted softmax functions}
\label{app:remark_functional_space_of_weighted_softmax_functions}

In order to make the softmax function in Eq.~\ref{eq:softmax_with_weight_function} well-defined, we require that
\begin{align*}
    \int_{\mathbb{R}^{d_{\text{in}}} \times \mathbb{R}^{d_{\text{out}}}} w(\vk,\vv) e^{\langle \vk, \vq \rangle + \langle \vv,\vv'\rangle}& d\vk \, d\vv<\infty\,,\\
    &\forall \vq\in\mathbb{R}^{d_{\text{in}}}\,,\forall \vv'\in\mathbb{R}^{d_{\text{out}}}\,.
\end{align*}
In other words, if we interpret $w$ as the density of a measure $\mu_w$ on $\mathbb{R}^{d_{\text{in}}} \times \mathbb{R}^{d_{\text{out}}}$, then we require that the moment-generating function of $\mu_w$ be well-defined and finite everywhere.

\begin{remark}
    \label{rem:marginalizing_the_weight_function}
    In the case of a finite database of weighted key-value pairs, if we have several tuples having the same key, then we can combine them into one tuple. For example, we can combine $(w_1,\vk,\vv_1)$ and $(w_2,\vk,\vv_2)$ and replace them with $\left(w_1+w_2,\vk,\frac{w_1\vv_1+w_2\vv_2}{w_1+w_2}\right)$, and this will induce the same softmax function. Analogously, if we define the ``marginalized'' key-weight function $\tilde{w}:\mathbb{R}^{d_{\mathrm{in}}}\to\mathbb{R}^+$ as
    $$\tilde{w}(\vk)=\int_{\mathbb{R}^{d_{\mathrm{out}}}}w(\vk,\vv)d\vv\,,$$
    and the marginalized value function $\tilde{v}:\mathbb{R}^{d_{\mathrm{in}}}\to\mathbb{R}^{d_{\mathrm{out}}}$ which associates a unique value for every key as
    $$\tilde{v}(\vk):=\frac{1}{\tilde{w}(\vk)}\int_{\mathbb{R}^{d_{\mathrm{out}}}}w(\vk,\vv)\vv d\vv\,,$$
    then we can write $s_w$ in terms of $\tilde{w}$ and $\tilde{v}$ as follows:
    \begin{equation}
        \label{eq:marginalized_softmax_function}
        s_{w}(\vq)= s_{\tilde{w},\tilde{v}}(\vq):= \frac{\int_{\mathbb{R}^{d_{\text{in}}} } \tilde{w}(\vk) e^{\langle \vk, \vq \rangle}  \tilde{v}(\vk) \, d\vk}{\int_{\mathbb{R}^{d_{\text{in}}}} \tilde{w}(\vk) e^{\langle \vk, \vq \rangle} \, d\vk} \,.
    \end{equation}
\end{remark}

\subsection{Continuous softmax function induced by the Gaussian mixture density \& Discrete weighted softmax function}
\label{app:skipped_derivation}

Here we provide a full derivation demonstrating that the continuous softmax function induced by the Gaussian mixture density $w_{\mathcal{D}}$ is exactly equivalent to the discrete weighted softmax function $s_{\mathcal{D}}$ defined by the database $\mathcal{D}$.

\begin{align*}
    s_{w_{\mathcal{D}}}(\vq) &= \frac{\int_{\mathbb{R}^{d_{\text{in}}} \times \mathbb{R}^{d_{\text{out}}}} w_{\mathcal{D}}(\vk,\vv) e^{\langle \vk, \vq \rangle} \vv \, d\vk \, d\vv}{\int_{\mathbb{R}^{d_{\text{in}}} \times \mathbb{R}^{d_{\text{out}}}} w_{\mathcal{D}}(\vk,\vv) e^{\langle \vk, \vq \rangle} \, d\vk \, d\vv} \\
    &= \frac{\sum_{i=1}^n w_i \mathbb{E}_{(\vk,\vv) \sim \mathcal{N}((\vk_i,\vv_i), \Sigma)} [e^{\langle \vk, \vq \rangle} \vv]}{\sum_{i=1}^n w_i \mathbb{E}_{(\vk,\vv) \sim \mathcal{N}((\vk_i,\vv_i), \Sigma)} [e^{\langle \vk, \vq \rangle}]} \\
    &= \frac{\sum_{i=1}^n w_i \mathbb{E}_{\vk \sim \mathcal{N}(\vk_i, \sigma_k^2 \mI)} [e^{\langle \vk, \vq \rangle}] \mathbb{E}_{\vv \sim \mathcal{N}(\vv_i, \sigma_v^2 \mI)} [\vv]}{\sum_{i=1}^n w_i \mathbb{E}_{\vk \sim \mathcal{N}(\vk_i, \sigma_k^2 \mI)} [e^{\langle \vk, \vq \rangle}]} \\
    &= \frac{\sum_{i=1}^n w_i e^{\langle \vk_i, \vq \rangle + \frac{1}{2}\sigma_k^2 \|\vq\|^2} \vv_i}{\sum_{i=1}^n w_i e^{\langle \vk_i, \vq \rangle + \frac{1}{2}\sigma_k^2 \|\vq\|^2}} \\
    &= \frac{e^{\frac{1}{2}\sigma_k^2 \|\vq\|^2} \sum_{i=1}^n w_i e^{\langle \vk_i, \vq \rangle} \vv_i}{e^{\frac{1}{2}\sigma_k^2 \|\vq\|^2} \sum_{i=1}^n w_i e^{\langle \vk_i, \vq \rangle}} \\
    &= \frac{\sum_{i=1}^n w_i e^{\langle \vk_i, \vq \rangle} \vv_i}{\sum_{i=1}^n w_i e^{\langle \vk_i, \vq \rangle}} = s_{\mathcal{D}}(\vq) \,.
\end{align*}

\subsection{Further discussion on the standard (unweighted) softmax case}
\label{app:remark_standard_softmax}

In Sec.~\ref{sec:softmax} of the main text, we introduced the weighted softmax by noting that deriving a principled learning rule for the standard/unweighted softmax is non-trivial due to the discontinuous nature.
Here we illustrate this difficulty by presenting an example of a naive derivation of a learning rule for the standard softmax case and showing where it breaks down.

\textbf{Model description.} Consider a growing softmax layer that, at every training step $t$, maintains a key-value memory state consisting of two matrices $\mK_{t-1} = [\vk_1, \ldots, \vk_{t-1}] \in \mathbb{R}^{d_\text{in} \times (t-1)}$ and $\mV_{t-1} = [\vv_1, \ldots, \vv_{t-1}] \in \mathbb{R}^{d_\text{out} \times (t-1)}$ (both to be defined later), receives a layer input $\vx_t \in \mathbb{R}^{d_\text{in}}$ and produces a layer output $\vy_t \in \mathbb{R}^{d_\text{out}}$ through the softmax attention function:
\begin{align}
\vq_t & = \vx_t \\
\vy_t &= \mV_{t-1}\softmax(\mK_{t-1}^{\top}\vq_t) = \sum_{\tau=1}^{t-1} \alpha_{t-1, \tau} \vv_\tau \label{eq:softmax_old}
\end{align}
where $\alpha_{t-1, \tau} \in \mathbb{R}$ denotes the attention score for the key $\vk_\tau$ using the query $\vq_t$ and the system's key-value memory state from step $t-1$ (i.e., before updating the system at step $t$).
That is, $\vx_t$ is used as the query $\vq_t$ in the attention operation to retrieve from the key-value memory consisting of matrices $\mK_{t-1}$ and $\mV_{t-1}$.

Then, the system's memory state is updated as:
\begin{align}
\mK_t &= [\mK_{t-1}, \vk_t] \in \mathbb{R}^{d_\text{in} \times t} \\
\mV_t &= [\mV_{t-1}, \vv_t] \in \mathbb{R}^{d_\text{out} \times t}
\end{align}
where $[]$ denotes concatenation along the time dimension.

By following the linear case (Sec.~\ref{sec:background}), let us set the key to be the layer input, i.e., $\vk_t = \vx_t$. Now the question is how should we set $\vv_t \in \mathbb{R}^{d_\text{out}}$ so that this memory update corresponds to a principled learning algorithm for this system in a way that it optimizes a given objective function $\mathcal{L}$? 

\textbf{Direct Derivation.}
For that, we first set the new value vector $\vv_t \in \mathbb{R}^{d_\text{out}}$ as an arbitrary free parameter vector, and consider the virtually updated system with the newly added key-value pair $(\vx_t, \vv)$. After this update, the output of the system on the input $\vq_t = \vx_t$ becomes:
\begin{align}
\vy_t^{\text{new}} &= \mV_t\softmax(\mK_{t}^{\top}\vq_t) = \sum_{\tau=1}^{t-1} \alpha_{t, \tau}^{\text{new}} \vv_\tau + \alpha_{t, t}^{\text{new}} \vv \label{eq:softmax_new}
\end{align}
that is, compared to the old system of Eq.~\ref{eq:softmax_old}, one extra term (the last term in Eq.~\ref{eq:softmax_new}) is introduced in the sum as we now have one more value vector, with newly computed attention scores $\alpha_{t, \tau}^{\text{new}}$ normalized over the new set of keys, including the attention score $\alpha_{t, t}^{\text{new}}$ for the newly added key $\vx_t$.

Our goal is to improve the loss on input $\vq_t$ by adding the new value token $\vv_t$ to the system.
For that, we can apply a step of gradient descent on loss $\mathcal{L}$ w.r.t.~$\vv_t$ with a learning rate $\eta_t \in \mathbb{R}_{>0}$, which yields:
\begin{align}
\vv_t = \vv^{\text{old}} - \eta_t \dfrac{\partial\mathcal{L}_t(\vy_t^{\text{new}})}{\partial \vv} \label{eq:gd}
\end{align}
where two more variables, $\vv^{\text{old}}$ and the gradient term, are still to be determined; as we see below, this raises difficulties we cannot trivially overcome.

First, what should $\vv^{\text{old}}$ be? Essentially, $\vv^{\text{old}}$ is the value for newly added $\vv_t$ that preserves the old system (representing the initial system before applying a step of gradient descent).
For this, we can express the output of the new system $\vy_t^{\text{new}}$ (Eq.~\ref{eq:softmax_new}) as a function of that of the old one $\vy_t$ (Eq.~\ref{eq:softmax_old}) and search for the value of $\vv_t$ that makes them equal. The apparently tricky difference between them is the modified attention scores: $\alpha_{t, \tau}^{\text{new}}$ (Eq.~\ref{eq:softmax_new}) vs.~$\alpha_{t-1, \tau}$ (Eq.~\ref{eq:softmax_old}) for $1 \leq \tau \leq t-1$.
However, thankfully, we can notice a simple relation that connects the old attention scores before the update ($\alpha_{t-1, \tau}$) and the new ones ($\alpha_{t, \tau}^{\text{new}}$) after the update. By denoting $Z^{\text{old}}$ and $Z^{\text{new}}$ the denominators (normalizers) of the attention scores when the input is $\vq_t$, before and after the update, respectively, i.e.,
\begin{align}
Z^{\text{old}} = \sum_{\tau=1}^{t-1} e^{\vx_\tau^\top \vq_t} \,;\, Z^{\text{new}} = \sum_{\tau=1}^{t} e^{\vx_\tau^\top \vq_t} = Z^{\text{old}} + e^{\vq_t^\top \vq_t}\,,
\end{align}
we can express the new attention scores as a function of the old one:
\begin{align}
\alpha_{t, \tau}^{\text{new}} = \dfrac{e^{\vx_\tau^\top \vq_t}}{Z^{\text{new}}} = \dfrac{Z^{\text{old}}}{Z^{\text{new}}} \times \dfrac{e^{\vx_\tau^\top \vq_t}}{Z^{\text{old}}} =  \dfrac{Z^{\text{old}}}{Z^{\text{new}}} \alpha_{t-1, \tau}\,, \label{eq:new_score}
\end{align}
and additionally, the normalizer ratio can also be simplified, since $Z^{\text{new}} = Z^{\text{old}} + e^{\vq_t^\top \vq_t}$, as follows:
\begin{align}
\dfrac{Z^{\text{old}}}{Z^{\text{new}}} &= \dfrac{Z^{\text{old}} + e^{\vq_t^\top \vq_t} - e^{\vq_t^\top \vq_t}}{Z^{\text{new}}} = \dfrac{Z^{\text{new}} - e^{\vq_t^\top \vq_t}}{Z^{\text{new}}} = 1 - \dfrac{e^{\vq_t^\top \vq_t}}{Z^{\text{new}}} = 1 - \alpha_{t, t}^{\text{new}}\,.
\end{align}
By inserting this ratio to Eq.~\ref{eq:new_score}, it yields:
\begin{align}
\alpha_{t, \tau}^{\text{new}} = (1 - \alpha_{t, t}^{\text{new}}) \alpha_{t-1, \tau}\,,
\end{align}
that is, the new attention scores for the old keys shrink by factor $(1 - \alpha_{t, t}^{\text{new}})$. This allows us to express the new output $\vy_t^{\text{new}}$ as a function of the old one $\vy_t$:
\begin{align}
\vy_t^{\text{new}} &= \sum_{\tau=1}^{t-1} \alpha_{t, \tau}^{\text{new}} \vv_\tau + \alpha_{t, t}^{\text{new}} \vv \tag{\ref{eq:softmax_new}}\\
&=  \sum_{\tau=1}^{t-1} (1 - \alpha_{t, t}^{\text{new}}) \alpha_{t-1, \tau} \vv_\tau + \alpha_{t, t}^{\text{new}} \vv 
=  (1 - \alpha_{t, t}^{\text{new}}) \big(\sum_{\tau=1}^{t-1} \alpha_{t-1, \tau} \vv_\tau \big) + \alpha_{t, t}^{\text{new}} \vv \\ 
&= (1 - \alpha_{t, t}^{\text{new}}) \vy_t + \alpha_{t, t}^{\text{new}} \vv \\
&= \vy_t + \alpha_{t, t}^{\text{new}} (\vv  - \vy_t)\,. \label{eq:softmax_output_relation}
\end{align}
Here we can at least find $\vv_t = \vv^{\text{old}}$ such that the updated system reproduces the old system's output on input $\vq_t$, that is, $\vy_t^{\text{new}} = \vy_t$, which, from Eq.~\ref{eq:softmax_output_relation} holds when $\vv_t = \vv^{\text{old}} = \vy_t$, canceling the last term.
However, this only preserves the mapping for the current query $\vq_t$; as the new key $\vk_t$ alters the softmax denominator globally, the output for any other query will be modified, failing to achieve the global function preservation.

Second, while the computation of the gradient term seems straightforward:
\begin{align}
\dfrac{\partial\mathcal{L}_t(\vy_t^{\text{new}})}{\partial \vv} = \dfrac{\partial \mathcal{L}_t(\vy_t^{\text{new}})}{\partial \vy_t^{\text{new}}} \times \dfrac{\partial \vy_t^{\text{new}}}{\partial \vv} = (\nabla_{\vy_t^{\text{new}}} \mathcal{L}_t) \alpha_{t, t}^{\text{new}} \,,
\end{align}
by differentiating Eq.~\ref{eq:softmax_new} w.r.t.~$\vv$, this requires access to the \textit{post-update} gradient of the loss w.r.t.~the \textit{new} output $(\nabla_{\vy_t^{\text{new}}} \mathcal{L}_t)$.
While this creates a circular dependency in the general case,
by setting $\vv_t = \vy_t$ as prescribed by the derivation above, $\nabla_{\vy_t^{\text{new}}}\mathcal{L}_t$ can be essentially replaced by $\nabla_{\vy_t}\mathcal{L}_t$.

Putting these together, we obtain the following update rule:
\begin{align}
\vv_t = \vv^{\text{old}} - \eta_t \dfrac{\partial\mathcal{L}_t(\vy_t^{\text{new}})}{\partial \vv} 
 = \vy_t - \eta_t \alpha_{t, t}^{\text{new}} \nabla_{\vy_t^{\text{new}}}\mathcal{L}_t\,,
\end{align}
where $\alpha_{t, t}^{\text{new}} = \dfrac{e^{\vx_t^\top \vx_t}}{Z^{\text{new}}}\in \mathbb{R}$ can be computed simply as we have $Z^{\text{new}} = Z^{\text{old}} + e^{\vx_t^\top \vx_t}$ for the normalizer, and $Z^{\text{old}}$ is computed to compute $\vy_t$.

However, this is not a satisfactory derivation as the choice of $\vv^{\text{old}}$ is limited to the local function preservation on one data point.
This derivation illustrates that a naive approach that consists in parameterizing the value vector as a free parameter and applying gradient descent on it does not work out of the box.

\section{Functional Gradient Descent in the Weighted Softmax Space}
\label{app:functional_gradient}

In this appendix, we rigorously derive the functional gradient of the loss with respect to the weight function $w \in \mathcal{H}_{\sigma_k, \sigma_v}$ and show that the gradient descent update corresponds to adding a new key-value tuple to the database.

\subsection{Setup and Definitions}

Let $\mathcal{H} = \mathcal{H}_{\sigma_k, \sigma_v}$ be the Hilbert space of weight functions $w: \mathbb{R}^{d_{\text{in}}} \times \mathbb{R}^{d_{\text{out}}} \to \mathbb{R}$ equipped with the inner product $\langle f, g \rangle_\mathcal{H} = \int f(\vk, \vv) g(\vk, \vv) d\mu(\vk, \vv)$ and norm $\|w\|_\mathcal{H}$, where $\mu = \mathcal{N}(\mathbf{0}, \Sigma)$ is the centered Gaussian measure with block covariance $\Sigma = \text{diag}(\Sigma_k, \Sigma_v) = \text{diag}(\sigma_k^2 \mI, \sigma_v^2 \mI)$. Recall the definition of the softmax functional
$$s_w(\vq)=\frac{\int_{\mathbb{R}^{d_{\text{in}}} \times \mathbb{R}^{d_{\text{out}}}} w(\vk, \vv) e^{\langle \vk, \vq \rangle} \vv \, d\mu(\vk, \vv)}{\int_{\mathbb{R}^{d_{\text{in}}} \times \mathbb{R}^{d_{\text{out}}}} w(\vk, \vv) e^{\langle \vk, \vq \rangle} \, d\mu(\vk, \vv)}=\frac{N(w)}{D(w)}\,,$$
where we define the denominator functional $D: \mathcal{H} \to \mathbb{R}$ and the numerator operator $N: \mathcal{H} \to \mathbb{R}^{d_{\text{out}}}$ as follows:
\begin{align}
    D(w) &= \int_{\mathbb{R}^{d_{\text{in}}} \times \mathbb{R}^{d_{\text{out}}}} w(\vk, \vv) e^{\langle \vk, \vq \rangle} \, d\mu(\vk, \vv)\,, \\
    N(w) &= \int_{\mathbb{R}^{d_{\text{in}}} \times \mathbb{R}^{d_{\text{out}}}} w(\vk, \vv) e^{\langle \vk, \vq \rangle} \vv \, d\mu(\vk, \vv)\,.
\end{align}

Let $N_j(w)$ denote the $j$-th component of the vector $N(w)$. It is a scalar linear functional:
$$ N_j(w) = \int w(\vk, \vv) e^{\langle \vk, \vq \rangle} v_j \, d\mu(\vk, \vv)\,.$$
By defining the kernels
$$ K_D(\vk, \vv) = e^{\langle \vk, \vq \rangle}, \quad K_{N,j}(\vk, \vv) = e^{\langle \vk, \vq \rangle} v_j\,, $$
we can rewrite $D$ and $N_j$ as
\begin{align}
    D(w) &= \int_{\mathbb{R}^{d_{\text{in}}} \times \mathbb{R}^{d_{\text{out}}}} w(\vk, \vv) K_D(\vk,\vv)\, d\mu(\vk, \vv)\,, \\
    N_j(w) &= \int_{\mathbb{R}^{d_{\text{in}}} \times \mathbb{R}^{d_{\text{out}}}} w(\vk, \vv) K_{N,j}(\vk,\vv)\, d\mu(\vk, \vv)\,.
\end{align}

By the Riesz Representation Theorem, if the functions representing these kernels belong to $\mathcal{H}$, the functionals are bounded.

We verify membership in $\mathcal{H}$ by checking the condition $\mathbb{E}_{\mu} [ f(\vk, \vv)^2 e^{\langle \vk, \mathbf{u} \rangle + \langle \vv, \mathbf{z} \rangle} ] < \infty$ for all $\mathbf{u}, \mathbf{z}$.

For $K_D$, we have
$$ \|K_D\|_\mathcal{H}^2 = \mathbb{E}_{(\vk, \vv) \sim \mu} \left[ (e^{\langle \vk, \vq \rangle})^2 e^{\langle \vk, \mathbf{u} \rangle + \langle \vv, \mathbf{z} \rangle} \right] = \mathbb{E}_{\vk} [ e^{\langle \vk, 2\vq + \mathbf{u} \rangle} ] \mathbb{E}_{\vv} [ e^{\langle \vv, \mathbf{z} \rangle} ] \,.$$
These are moment generating functions of Gaussian variables, which are finite everywhere.

For $K_{N,j}$, we have
$$ \|K_{N,j}\|_\mathcal{H}^2 = \mathbb{E}_{(\vk, \vv) \sim \mu} \left[ (e^{\langle \vk, \vq \rangle} v_j)^2 e^{\langle \vk, \mathbf{u} \rangle + \langle \vv, \mathbf{z} \rangle} \right] = \mathbb{E}_{\vk} [ e^{\langle \vk, 2\vq + \mathbf{u} \rangle} ] \mathbb{E}_{\vv} [ v_j^2 e^{\langle \vv, \mathbf{z} \rangle} ]\,. $$
The second term involves the second moment of a Gaussian under an exponential tilt, which is also finite.
Thus, $D(w)$ and $N(w)$ are bounded linear operators. Specifically, there exist constants $C_N, C_D < \infty$ such that for any $\delta \in \mathcal{H}$:
\begin{equation}
    \|N(\delta)\|_2 \le C_N \|\delta\|_\mathcal{H}, \quad |D(\delta)| \le C_D \|\delta\|_\mathcal{H}
\end{equation}

\subsection{Fréchet Differentiability of $s_w$}

We claim that the Fréchet derivative of $s_w$ at $w$, denoted $ds_w$, is the linear operator acting on a perturbation $\delta \in \mathcal{H}$ as:
\begin{equation}
    ds_w(\delta) = \frac{N(\delta)D(w) - N(w)D(\delta)}{D(w)^2}
\end{equation}
To prove this, we must show that the remainder term $R(\delta) = s_{w+\delta}(\vq) - s_w(\vq) - ds_w(\delta)$ satisfies $\|R(\delta)\| = o(\|\delta\|_\mathcal{H})$ as $\|\delta\|_\mathcal{H} \to 0$.

\begin{proof}
Let us expand the difference $s_{w+\delta} - s_w$:
\begin{align}
    s_{w+\delta} - s_w &= \frac{N(w) + N(\delta)}{D(w) + D(\delta)} - \frac{N(w)}{D(w)} \\
    &= \frac{D(w)(N(w) + N(\delta)) - N(w)(D(w) + D(\delta))}{D(w)(D(w) + D(\delta))} \\
    &= \frac{D(w)N(\delta) - N(w)D(\delta)}{D(w)^2 + D(w)D(\delta)}
\end{align}
Subtracting the candidate derivative $ds_w(\delta)$:
\begin{align}
    R(\delta) &= \frac{D(w)N(\delta) - N(w)D(\delta)}{D(w)(D(w) + D(\delta))} - \frac{D(w)N(\delta) - N(w)D(\delta)}{D(w)^2} \\
    &= [D(w)N(\delta) - N(w)D(\delta)] \left( \frac{1}{D(w)(D(w) + D(\delta))} - \frac{1}{D(w)^2} \right) \\
    &= [D(w)N(\delta) - N(w)D(\delta)] \left( \frac{D(w) - (D(w) + D(\delta))}{D(w)^2(D(w) + D(\delta))} \right) \\
    &= -[D(w)N(\delta) - N(w)D(\delta)] \frac{D(\delta)}{D(w)^2(D(w) + D(\delta))}
\end{align}
Now we bound the norm of $R(\delta)$. Let $w$ be fixed such that $D(w) \neq 0$. For sufficiently small $\delta$ (specifically $\|\delta\|_\mathcal{H} < |D(w)|/(2C_D)$), we have $$|D(w) + D(\delta)|\ge |D(w)| - |D(\delta)|\ge |D(w)| - C_D\|\delta\|_\mathcal{H} \ge |D(w)|/2\,.$$ Furthermore,
\begin{align}
    \|D(w)N(\delta) - N(w)D(\delta)\| &\le |D(w)|\|N(\delta)\| + \|N(w)\||D(\delta)| \\
    &\le (|D(w)|C_N + \|N(w)\|C_D) \|\delta\|_\mathcal{H} \\
    &= C_1 \|\delta\|_\mathcal{H}\,.
\end{align}
Substituting these back into the expression for $R(\delta)$:
\begin{align}
    \|R(\delta)\| &\le \frac{C_1 \|\delta\|_\mathcal{H} \cdot |D(\delta)|}{|D(w)|^2 |D(w) + D(\delta)|} \\
    &\le \frac{C_1 \|\delta\|_\mathcal{H} \cdot C_D \|\delta\|_\mathcal{H}}{|D(w)|^2 (|D(w)|/2)} \\
    &= \frac{2 C_1 C_D}{|D(w)|^3} \|\delta\|_\mathcal{H}^2 \\
    &= O(\|\delta\|_\mathcal{H}^2)\,.
\end{align}
Since $\|R(\delta)\| \le C \|\delta\|_\mathcal{H}^2$, it implies $\lim_{\|\delta\| \to 0} \frac{\|R(\delta)\|}{\|\delta\|} = 0$. Thus, $s_w$ is Fréchet differentiable.
\end{proof}

\subsection{Derivation of the Functional Gradient}

Let $\mathcal{L}(\hat{\vy}, \vy)$ be a loss function that is differentiable with respect to the prediction $\hat{\vy}$. By the chain rule for Fréchet derivatives, the composition $J(w) = \mathcal{L}(s_w(\vq), \vy)$ is differentiable with respect to $w$, and its differential is:
\begin{equation}
    dJ(\delta) = \langle \nabla_{\hat{\vy}}\mathcal{L}, ds_w(\delta) \rangle
\end{equation}
Substituting $ds_w(\delta)$:
\begin{align}
    dJ(\delta) &= \left\langle \nabla_{\hat{\vy}}\mathcal{L}, \frac{N(\delta)}{D(w)} - s_w(\vq)\frac{D(\delta)}{D(w)} \right\rangle \\
    &= \frac{1}{D(w)} \langle \nabla_{\hat{\vy}}\mathcal{L}, N(\delta) \rangle - \frac{\langle \nabla_{\hat{\vy}}\mathcal{L}, s_w(\vq) \rangle}{D(w)} D(\delta)\,.
\end{align}
Using the integral definitions of $N(\delta)$ and $D(\delta)$ and the linearity of the inner product, we can rewrite this as an inner product in $\mathcal{H}$:
\begin{align}
    dJ(\delta) &= \int \delta(\vk, \vv) \left[ \frac{1}{D(w)} \langle \nabla_{\hat{\vy}}\mathcal{L}, \vv \rangle e^{\langle \vk, \vq \rangle} - \frac{\langle \nabla_{\hat{\vy}}\mathcal{L}, s_w(\vq) \rangle}{D(w)} e^{\langle \vk, \vq \rangle} \right] d\mu \\
    &= \int \delta(\vk, \vv) \frac{e^{\langle \vk, \vq \rangle}}{D(w)} \langle \nabla_{\hat{\vy}}\mathcal{L}, \vv - s_w(\vq) \rangle d\mu \\
    &= \langle \delta, \nabla_w J \rangle_\mathcal{H}\,.
\end{align}
By the Riesz representation theorem, the function $\nabla_w J$ inside the inner product is the unique functional gradient:
\begin{equation}
    \nabla_w J(\vk, \vv) = \frac{e^{\langle \vk, \vq \rangle}}{D(w)} \langle \nabla_{\hat{\vy}}\mathcal{L}, \vv - s_w(\vq) \rangle\,.
\end{equation}

\subsection{Constructive Update Analysis}
\label{app:constructive_update_analysis}

We now analyze the impact of the functional update $w_{t+1} \leftarrow w_t - \eta \nabla_w J$ on the softmax functional $s_{w_{t+1}}(\vq)$ and demonstrate that it is equivalent to adding a specific key-value tuple to the database. The impact of this update is determined by the changes to the numerator and denominator functionals.
For the denominator, the update adds a term:
\begin{align*}
    \Delta D(\vq) &= -\eta \int \nabla_w J(\vk, \vv) e^{\langle \vk, \vq \rangle} d\mu(\vk, \vv) \\
    &= -\frac{\eta}{D(w_t)} \int e^{\langle \vk, \vq_t + \vq \rangle} \langle \vg_t, \vv - \vs_t \rangle d\mu(\vk, \vv)\,,
\end{align*}
where $\vg_t = \nabla_{\hat{\vy}} \mathcal{L}$ and $\vs_t = s_{w_t}(\vq_t)$.
Since the measure $\mu$ is a centered Gaussian, $\mathbb{E}_\mu[\vv] = \mathbf{0}$. Consequently, the term involving $\vv$ integrates to zero, leaving only the scalar part:
$$ \Delta D(\vq) = \frac{\eta \langle \vg_t, \vs_t \rangle}{D(w_t)} \mathbb{E}_{\mu_k}[e^{\langle \vk, \vq_t + \vq \rangle}] \,. $$
The expectation is the moment generating function of $\mathcal{N}(\mathbf{0}, \Sigma_k)$ evaluated at $\vq_t + \vq$, which equals $e^{\frac{1}{2}\|\Sigma_k^{1/2}(\vq_t+\vq)\|^2}$. Factoring this, we get a term proportional to $e^{\langle \Sigma_k \vq_t, \vq \rangle}$. This corresponds precisely to the contribution of a new key $\vk_{\text{new}} = \Sigma_k \vq_t$ (or $\sigma_k^2 \vq_t$ in the isotropic case).
Similarly, for the numerator, the update adds a term involving $\vv$:
\begin{align*}
    \Delta N(\vq) &= -\frac{\eta}{D(w_t)} \int e^{\langle \vk, \vq_t + \vq \rangle} \langle \vg_t, \vv - \vs_t \rangle \vv d\mu(\vk, \vv) \,.
\end{align*}
The term $\langle \vg_t, -\vs_t \rangle \vv$ integrates to zero (zero mean). The remaining term involves the second moment $\mathbb{E}_{\mu_v}[\vv \vv^\top] = \Sigma_v$. Thus:
$$ \Delta N(\vq) = -\frac{\eta}{D(w_t)} \Sigma_v \vg_t \mathbb{E}_{\mu_k}[e^{\langle \vk, \vq_t + \vq \rangle}] \,. $$
Comparing $\Delta N$ and $\Delta D$, we see they share the same key kernel $e^{\langle \vk_{\text{new}}, \vq \rangle}$. The equivalent added tuple $(w_{\text{new}}, \vk_{\text{new}}, \vv_{\text{new}})$ is thus:
$$ \vk_{\text{new}} = \Sigma_k \vq_t, \quad w_{\text{new}} \propto \eta \langle \vs_t, \vg_t \rangle, \quad \vv_{\text{new}} \propto -\frac{\Sigma_v \vg_t}{\langle \vs_t, \vg_t \rangle} \,. $$
This confirms that functional gradient descent in $\mathcal{H}_{\sigma_k, \sigma_v}$ is constructively equivalent to growing the database with specific, tractable update rules.

In order to avoid dividing by $\langle \vs_t, \vg_t\rangle$ which could be small and might introduce numerical instabilities, we can adopt a representation where we store the weighted value vector $\tilde{\vv}=w\vv$ instead of the value vector $\vv$, as in Eq.~\ref{eq:weighted_softmax_with_weighted_value}. In this case, the added tuple becomes
$$ \vk_{\text{new}} = \Sigma_k \vq_t, \quad w_{\text{new}} \propto \eta \langle \vs_t, \vg_t \rangle, \quad \tilde{\vv}_{\text{new}} \propto -\eta\Sigma_v \vg_t \,. $$

If $\Sigma_v=\sigma_v^2 \mI$ and $\beta=\eta\sigma_v^2$, the weighted value vector becomes
$$\tilde{\vv}_{\text{new}} \propto -\beta \vg_t\,.$$

\subsection{Negative Weights and Numerical Stability}
\label{app:negative_weights}
A critical inspection of the update rule for $w_{\text{new}}$ reveals that its sign is determined by the inner product $\delta_t = \langle \vs_t, \vg_t \rangle$. Since the learning rate $\eta$ and normalization $D(w_t)$ are positive (assuming initialization with positive mass), $w_{\text{new}}$ can be negative if the gradient of the loss forms an obtuse angle with the current prediction vector.

This introduces the concept of ``negative attention'' or signed measures into the softmax mechanism. Negative weights act to reduce the denominator (partition function), effectively increasing the magnitude of the output vector $s(\vq)$ in the direction of the value vectors. This occurs when the current prediction $\vs_t$ is negatively correlated with the gradient $\vg_t$, indicating that gradient descent is attempting to scale up the prediction to minimize the loss.

While considering negative weights might increase the expressive power of our model, they might introduce numerical instabilities as the denominator might become close to 0 if the negative and positive terms cancel.

To ensure robustness, we propose enforcing non-negativity by clipping the new weight:
\begin{equation}
    w_{\text{new}} \leftarrow \max(0, w_{\text{new}}) \,.
\end{equation}
This clipping strategy can be theoretically justified as performing a descent step in a larger, decoupled function space $\mathcal{H}^2_{\sigma_k, \sigma_v}$, where the numerator and denominator weight functions are optimized independently. Refer to \cref{app:decoupled} for further details.

In \cref{app:gauge-invariance}, we propose another method for solving the negative weights issue.

We leave the investigation of deriving an update rule that benefits from the potential expressive power of the negative weights while ensuring numerical stability for future research.

\subsection{Functional Gradient Descent in the Decoupled Weight Space}
\label{app:decoupled}

In this appendix, we consider the functional gradient descent on the decoupled softmax functional defined in Eq.~\ref{app:eq_softmax_with_decoupled_weight_functions} below. We define the product Hilbert space $\mathcal{H}^2 = \mathcal{H}_{\sigma_k, \sigma_v} \oplus \mathcal{H}_{\sigma_k, \sigma_v}$ containing pairs of weight functions $(w_N, w_D)$.
The decoupled softmax functional is:
\begin{equation}
    \label{app:eq_softmax_with_decoupled_weight_functions}
    s_{w_N, w_D}(\vq) = \frac{N(w_N)}{D(w_D)} = \frac{\int w_N(\vk,\vv) e^{\langle \vk, \vq \rangle} \vv \, d\mu}{\int w_D(\vk,\vv) e^{\langle \vk, \vq \rangle} \, d\mu}\,.
\end{equation}

We compute the Fréchet derivatives with respect to $w_N$ and $w_D$ independently.
For $w_N$, the functional is linear in the numerator:
$$ \nabla_{w_N} s = \frac{1}{D(w_D)} \nabla_{w_N} N \,.$$
The gradient of the loss $J(w_N, w_D)$ with respect to $w_N$ is:
\begin{align*}
    \nabla_{w_N} J &= \frac{1}{D(w_D)} e^{\langle \vk, \vq_t \rangle} \langle \nabla_{\hat{\vy}} \mathcal{L}, \vv \rangle\,.
\end{align*}
For $w_D$, the functional involves the inverse of the denominator:
$$ \nabla_{w_D} s = -\frac{N(w_N)}{D(w_D)^2} \nabla_{w_D} D = -\frac{s_{w_N, w_D}}{D(w_D)} \nabla_{w_D} D \,.$$
The gradient of the loss with respect to $w_D$ is:
\begin{align*}
    \nabla_{w_D} J &= -\frac{1}{D(w_D)} e^{\langle \vk, \vq_t \rangle} \langle \nabla_{\hat{\vy}} \mathcal{L}, s_{w_N, w_D}(\vq_t) \rangle\,.
\end{align*}

We now map these updates to the growing database formulation.
Updating $w_N$ adds a term $\Delta w_N \propto -\frac{1}{D} \langle \vg_t, \vv \rangle e^{\langle \vk, \vq_t \rangle}$.
In the ``weighted value vector'' representation (where we store $\tilde{\vv}$ instead of $\vv$), adding a tuple corresponds to adding a basis function to $w_N$. Specifically, if we add $(\tilde{\vv}_{\text{new}}, \vk_{\text{new}})$, we modify the numerator density by a term proportional to $e^{\langle \vk, \Sigma_k^{-1} \vk_{\text{new}} \rangle} \langle \vv, \Sigma_v^{-1} \tilde{\vv}_{\text{new}} \rangle$.
Matching terms, we find $\vk_{\text{new}} = \Sigma_k \vq_t$ and $\tilde{\vv}_{\text{new}} \propto -\eta \Sigma_v \vg_t$. This matches the numerator update derived in the coupled case.

Updating $w_D$ adds a term $\Delta w_D \propto \frac{1}{D} \langle \vg_t, \vs_t \rangle e^{\langle \vk, \vq_t \rangle}$.
This corresponds to adding a basis function to the denominator weight density $w_D$. In the database, this means adding a tuple $(w_{\text{new}}, \vk_{\text{new}})$.
Matching terms, we find $\vk_{\text{new}} = \Sigma_k \vq_t$ and $w_{\text{new}} \propto \eta \langle \vs_t, \vg_t \rangle$. This matches the denominator update derived in the coupled case.

Therefore, the coupled update rule derived in Appendix A is a special case of the decoupled update where we enforce $w_N = w_D = w$ initially but allow them to evolve. Crucially, since the optimization of $w_D$ is independent of $w_N$ in this formulation, applying a constraint $w_D \ge 0$ (via clipping $w_{\text{new}} = \max(0, w_{\text{new}})$) corresponds to a valid Descent step in $\mathcal{H}_{\sigma_k, \sigma_v} \oplus \mathcal{H}_{\sigma_k, \sigma_v}$ which would be positively correlated with the gradient direction. This ensures numerical stability while maintaining the numerator's ability to reduce error.

\subsection{Gauge-Invariance: Another Method for Solving the Negative Weights Issue}
\label{app:gauge-invariance}
As established in \cref{app:negative_weights}, the constructive functional gradient update for the scalar weight of a newly added weighted key-value tuple is given by:
\begin{equation}
    w_{\text{new}} \propto \eta \langle \vs_t, \vg_t \rangle \,,
\end{equation}
where $\vs_t = s_{w_t}(\vq_t)$ is the current prediction and $\vg_t = \nabla_{\hat{\vy}} \mathcal{L}$ is the error signal. When the gradient forms an obtuse angle with the current prediction vector (i.e., $\langle \vs_t, \vg_t \rangle < 0$), this rule mandates appending a \emph{negative weight} ($w_{\text{new}} < 0$). While the non-negativity constraint can be enforced heuristically via clipping $w_{\text{new}} \leftarrow \max(0, w_{\text{new}})$ (theoretically grounded in the decoupled weight space of \cref{app:decoupled}), clipping discards the denominator update entirely whenever $\langle \vs_t, \vg_t \rangle < 0$.

In this appendix, we develop an alternative, geometrically principled resolution to the negative weight obstruction: an \textbf{augmented representation with gauge invariance}. By introducing a global bias degree of freedom, we show that the underlying function space possesses an exact translation symmetry in value space. By dynamically selecting an appropriate coordinate frame (a ``virtual gauge'') at each training step, we guarantee that the required weight update is \emph{strictly positive} ($w_{\text{new}} > 0$) without clipping, while preserving the locality of the database update.

\subsubsection{Augmented Functional and Gauge Symmetry}

We introduce an auxiliary global bias degree of freedom $\vb \in \mathbb{R}^{d_{\text{out}}}$ into both the continuous and discrete weighted softmax functionals.

\begin{definition}[Augmented Softmax Functionals]
Let $\vb \in \mathbb{R}^{d_{\text{out}}}$ be a bias vector.
\begin{enumerate}
    \item For a functional weight density $w \in \mathcal{H}_{\sigma_k, \sigma_v}$, the continuous augmented softmax output $s_{w, \vb}: \mathbb{R}^{d_{\text{in}}} \to \mathbb{R}^{d_{\text{out}}}$ is defined as:
    \begin{equation}
        s_{w, \vb}(\vq) := s_w(\vq) - \vb = \frac{\int_{\mathbb{R}^{d_{\text{in}}} \times \mathbb{R}^{d_{\text{out}}}} w(\vk, \vv) e^{\langle \vk, \vq \rangle} \vv \, d\mu(\vk, \vv)}{\int_{\mathbb{R}^{d_{\text{in}}} \times \mathbb{R}^{d_{\text{out}}}} w(\vk, \vv) e^{\langle \vk, \vq \rangle} \, d\mu(\vk, \vv)} - \vb \,.
        \label{eq:augmented_continuous_functional}
    \end{equation}
    \item For a discrete database $\mathcal{D} = \{(w_i, \vk_i, \tilde{\vv}_i)\}_{i=1}^N$ of weighted key-value tuples with $\tilde{\vv}_i = w_i \vv_i$, the discrete augmented retrieval functional is defined as:
    \begin{equation}
        s_{\mathcal{D}, \vb}(\vq) := s_{\mathcal{D}}(\vq) - \vb = \frac{\sum_{i=1}^N e^{\langle \vk_i, \vq \rangle} \tilde{\vv}_i}{\sum_{i=1}^N w_i e^{\langle \vk_i, \vq \rangle}} - \vb \,.
        \label{eq:augmented_discrete_functional}
    \end{equation}
\end{enumerate}
\end{definition}

The standard model corresponds to a bias that is set to zero: $s_{\mathcal{D}}(\vq) = s_{\mathcal{D}, \mathbf{0}}(\vq)$.

We now define a transformation of the representation under a spatial translation in value space.

\begin{definition}[Shifted Density and Database]
Let $\boldsymbol{\xi} \in \mathbb{R}^{d_{\text{out}}}$ be an arbitrary shift vector.
\begin{itemize}
    \item We define the shifted weight density $w_{\boldsymbol{\xi}}$ by translating the value coordinate:
    \begin{equation}
        w_{\boldsymbol{\xi}}(\vk, \vv) := w(\vk, \vv - \boldsymbol{\xi})\cdot\exp\left(\frac{2\langle \boldsymbol{\xi},\vv\rangle-\|\boldsymbol{\xi}\|^2}{2\sigma_v^2}\right) \,.
    \end{equation}
    \item In the discrete representation, we shift the value vectors as $\vv_i \mapsto \vv_i + \boldsymbol{\xi}$ which implies that the stored weighted values shift as $\tilde{\vv}_i \mapsto w_i(\vv_i + \boldsymbol{\xi}) = \tilde{\vv}_i + w_i \boldsymbol{\xi}$. Hence, we define the shifted database $\mathcal{D}_{\boldsymbol{\xi}}$ as:
    \begin{equation}
        \mathcal{D}_{\boldsymbol{\xi}} := \left\{ (w_i, \vk_i, \tilde{\vv}_i + w_i \boldsymbol{\xi}) \right\}_{i=1}^N \,.
    \end{equation}
\end{itemize}
\end{definition}

\begin{proposition}[Gauge Invariance]
\label{prop:gauge_invariance}
For any density $w \in \mathcal{H}_{\sigma_k, \sigma_v}$, discrete database $\mathcal{D}$, bias vector $\vb \in \mathbb{R}^{d_{\text{out}}}$, and shift vector $\boldsymbol{\xi} \in \mathbb{R}^{d_{\text{out}}}$, the augmented functional is invariant under simultaneous shifts of the values and the bias:
\begin{equation}
    s_{w_{\boldsymbol{\xi}}, \vb + \boldsymbol{\xi}}(\vq) = s_{w, \vb}(\vq) \quad \text{and} \quad s_{\mathcal{D}_{\boldsymbol{\xi}}, \vb + \boldsymbol{\xi}}(\vq) = s_{\mathcal{D}, \vb}(\vq) \,.
\end{equation}
In particular, any augmented state $(\mathcal{D}, \vb)$ is gauge-equivalent to a canonical zero-bias state:
\begin{equation}
    s_{\mathcal{D}, \vb}(\vq) = s_{\mathcal{D}_{-\vb}, \mathbf{0}}(\vq) \,.
\end{equation}
\end{proposition}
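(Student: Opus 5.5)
The plan is to show the single identity $s_{w_{\boldsymbol{\xi}}}(\vq) = s_w(\vq) + \boldsymbol{\xi}$, together with its discrete analogue $s_{\mathcal{D}_{\boldsymbol{\xi}}}(\vq) = s_{\mathcal{D}}(\vq) + \boldsymbol{\xi}$. Both invariance claims then follow by subtracting $\vb + \boldsymbol{\xi}$ and using the definitions \eqref{eq:augmented_continuous_functional} and \eqref{eq:augmented_discrete_functional}. The canonical zero-bias statement is the special case $\boldsymbol{\xi} = -\vb$: it gives $s_{\mathcal{D}_{-\vb},\mathbf{0}} = s_{\mathcal{D},\vb}$ directly.

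\textbf{Discrete case.} This is immediate. With $\tilde{\vv}_i \mapsto \tilde{\vv}_i + w_i\boldsymbol{\xi}$, the numerator becomes $\sum_i e^{\langle \vk_i,\vq\rangle}\tilde{\vv}_i + \boldsymbol{\xi}\sum_i w_i e^{\langle \vk_i,\vq\rangle}$. The denominator is unchanged, because the weights $w_i$ are not touched. Dividing gives $s_{\mathcal{D}}(\vq) + \boldsymbol{\xi}$.

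\textbf{Continuous case.} The key observation is that the exponential factor in $w_{\boldsymbol{\xi}}$ is precisely the Gaussian density ratio
\[
\frac{\mathcal{N}(\vv-\boldsymbol{\xi}\,|\,\mathbf{0},\sigma_v^2\mI)}{\mathcal{N}(\vv\,|\,\mathbf{0},\sigma_v^2\mI)} = \exp\!\Big(\tfrac{2\langle\boldsymbol{\xi},\vv\rangle-\|\boldsymbol{\xi}\|^2}{2\sigma_v^2}\Big).
\]
Because $\mu$ factorizes over the block-diagonal $\Sigma$ of \eqref{eq:block_covariance}, this gives $w_{\boldsymbol{\xi}}(\vk,\vv)\,d\mu(\vk,\vv) = w(\vk,\vv-\boldsymbol{\xi})\,d\mu_k(\vk)\,d\mu_v(\vv-\boldsymbol{\xi})$. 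Hence, for any test function $f$ of $\vv$, the substitution $\vv' = \vv - \boldsymbol{\xi}$ yields
\[
\int w_{\boldsymbol{\xi}}(\vk,\vv)\, e^{\langle\vk,\vq\rangle} f(\vv)\, d\mu = \int w(\vk,\vv')\, e^{\langle\vk,\vq\rangle} f(\vv'+\boldsymbol{\xi})\, d\mu(\vk,\vv').
\]
Taking $f \equiv 1$ gives $D(w_{\boldsymbol{\xi}}) = D(w)$. Taking $f(\vv) = \vv$ gives $N(w_{\boldsymbol{\xi}}) = N(w) + \boldsymbol{\xi}\, D(w)$. The ratio is therefore $s_w(\vq) + \boldsymbol{\xi}$, provided $D(w) \neq 0$; this is the same standing assumption used in the Fréchet-differentiability argument above.

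\textbf{Main obstacle.} The only genuinely delicate step is justifying that these integrals exist and that the change of variables is legitimate. I would also like $w_{\boldsymbol{\xi}} \in \mathcal{H}_{\sigma_k,\sigma_v}$, so that $s_{w_{\boldsymbol{\xi}}}$ is defined on the same space.

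To handle this I would apply the same substitution to the defining condition of $\mathcal{H}_{\sigma_k,\sigma_v}$:
\[
\mathbb{E}_\mu\!\big[w_{\boldsymbol{\xi}}^2\, e^{\langle\vk,\vq\rangle+\langle\vv,\vv'\rangle}\big]
\]
becomes an expectation of $w(\vk,\vv'')^2$ times $e^{\langle \vk,\vq\rangle}$ times an exponential of a linear function of $\vv''$, times a constant. The extra factor comes from the density ratio squared, which is itself exponential-linear in $\vv$. Since membership requires finiteness for \emph{all} tilts $\vv'$, this expectation is finite, so $w_{\boldsymbol{\xi}} \in \mathcal{H}_{\sigma_k,\sigma_v}$.

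Integrability of $N$ and $D$ for $w_{\boldsymbol{\xi}}$ then follows by Cauchy--Schwarz against the kernels $K_D$ and $K_{N,j}$, exactly as in the boundedness argument of \cref{app:functional_gradient}. Absolute integrability also justifies the substitution by Fubini/Tonelli.
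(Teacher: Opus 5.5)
Your proposal is correct and follows essentially the same route as the paper. Both absorb the exponential factor of $w_{\boldsymbol{\xi}}$ into the Gaussian density of $\mu$ to get a Gaussian centered at $\boldsymbol{\xi}$, substitute $\vv \mapsto \vv + \boldsymbol{\xi}$ to obtain $s_{w_{\boldsymbol{\xi}}} = s_w + \boldsymbol{\xi}$, do the discrete case by direct computation, and specialize to $\boldsymbol{\xi} = -\vb$. Your extra checks, that $w_{\boldsymbol{\xi}} \in \mathcal{H}_{\sigma_k,\sigma_v}$ via the all-tilts condition and that $D(w) \neq 0$ is needed, are correct and go slightly beyond what the paper verifies.
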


\begin{proof}
For the continuous case, note that
\begin{align*}
    s_{w_{\boldsymbol{\xi}}}(\vq) &= \frac{\int_{\mathbb{R}^{d_{\text{in}}} \times \mathbb{R}^{d_{\text{out}}}} w_{\boldsymbol{\xi}}(\vk,\vv) e^{\langle \vk, \vq \rangle} \vv \, d\mu(\vk, \vv)}{\int_{\mathbb{R}^{d_{\text{in}}} \times \mathbb{R}^{d_{\text{out}}}} w_{\boldsymbol{\xi}}(\vk,\vv) e^{\langle \vk, \vq \rangle} \, d\mu(\vk, \vv)}\\
    &= \frac{\int_{\mathbb{R}^{d_{\text{in}}} \times \mathbb{R}^{d_{\text{out}}}} w(\vk, \vv - \boldsymbol{\xi})\cdot\exp\left(\frac{2\langle \boldsymbol{\xi},\vv\rangle-\|\boldsymbol{\xi}\|^2}{2\sigma_v^2}\right) e^{\langle \vk, \vq \rangle} \vv \, \exp\left(-\frac{\|\vk\|^2}{2\sigma_k^2}-\frac{\|\vv\|^2}{2\sigma_v^2}\right)d\vk d\vv}{\int_{\mathbb{R}^{d_{\text{in}}} \times \mathbb{R}^{d_{\text{out}}}} w(\vk, \vv - \boldsymbol{\xi})\cdot\exp\left(\frac{2\langle \boldsymbol{\xi},\vv\rangle-\|\boldsymbol{\xi}\|^2}{2\sigma_v^2}\right) e^{\langle \vk, \vq \rangle} \, \exp\left(-\frac{\|\vk\|^2}{2\sigma_k^2}-\frac{\|\vv\|^2}{2\sigma_v^2}\right)d\vk d\vv}\\
    &= \frac{\int_{\mathbb{R}^{d_{\text{in}}} \times \mathbb{R}^{d_{\text{out}}}} w(\vk, \vv - \boldsymbol{\xi})e^{\langle \vk, \vq \rangle} \vv \, \exp\left(-\frac{\|\vk\|^2}{2\sigma_k^2}-\frac{\|\vv-\boldsymbol{\xi}\|^2}{2\sigma_v^2}\right)d\vk d\vv}{\int_{\mathbb{R}^{d_{\text{in}}} \times \mathbb{R}^{d_{\text{out}}}} w(\vk, \vv - \boldsymbol{\xi}) e^{\langle \vk, \vq \rangle} \, \exp\left(-\frac{\|\vk\|^2}{2\sigma_k^2}-\frac{\|\vv-\boldsymbol{\xi}\|^2}{2\sigma_v^2}\right)d\vk d\vv}\\
    &= \frac{\int_{\mathbb{R}^{d_{\text{in}}} \times \mathbb{R}^{d_{\text{out}}}} w(\vk, \vv)e^{\langle \vk, \vq \rangle} (\vv + \boldsymbol{\xi}) \, \exp\left(-\frac{\|\vk\|^2}{2\sigma_k^2}-\frac{\|\vv\|^2}{2\sigma_v^2}\right)d\vk d\vv}{\int_{\mathbb{R}^{d_{\text{in}}} \times \mathbb{R}^{d_{\text{out}}}} w(\vk, \vv) e^{\langle \vk, \vq \rangle} \, \exp\left(-\frac{\|\vk\|^2}{2\sigma_k^2}-\frac{\|\vv\|^2}{2\sigma_v^2}\right)d\vk d\vv}\\
    &= \frac{\int_{\mathbb{R}^{d_{\text{in}}} \times \mathbb{R}^{d_{\text{out}}}} w(\vk,\vv) e^{\langle \vk, \vq \rangle} (\vv + \boldsymbol{\xi}) \, d\mu(\vk, \vv)}{\int_{\mathbb{R}^{d_{\text{in}}} \times \mathbb{R}^{d_{\text{out}}}} w(\vk,\vv) e^{\langle \vk, \vq \rangle} \, d\mu(\vk, \vv)}\\
    &= s_w(\vq) +  \boldsymbol{\xi}\,.
\end{align*}
Evaluating the augmented functional with shifted bias $\vb + \boldsymbol{\xi}$, we get
\begin{equation}
    s_{w_{\boldsymbol{\xi}}, \vb + \boldsymbol{\xi}}(\vq) = s_{w_{\boldsymbol{\xi}}}(\vq) - (\vb + \boldsymbol{\xi}) = (s_{w}(\vq) + \boldsymbol{\xi}) - (\vb + \boldsymbol{\xi}) = s_{w}(\vq) - \vb = s_{w, \vb}(\vq) \,.
\end{equation}

Similarly, for the discrete case, we have:
\begin{align}
    s_{\mathcal{D}_{\boldsymbol{\xi}}}(\vq) &= \frac{\sum_{i=1}^N e^{\langle \vk_i, \vq \rangle} (\tilde{\vv}_i + w_i \boldsymbol{\xi})}{\sum_{i=1}^N w_i e^{\langle \vk_i, \vq \rangle}} \nonumber \\
    &= \frac{\sum_{i=1}^N e^{\langle \vk_i, \vq \rangle} \tilde{\vv}_i}{\sum_{i=1}^N w_i e^{\langle \vk_i, \vq \rangle}} + \boldsymbol{\xi} \frac{\sum_{i=1}^N w_i e^{\langle \vk_i, \vq \rangle}}{\sum_{i=1}^N w_i e^{\langle \vk_i, \vq \rangle}} \nonumber \\
    &= s_{\mathcal{D}}(\vq) + \boldsymbol{\xi} \,.
\end{align}
Evaluating the augmented functional with shifted bias $\vb + \boldsymbol{\xi}$, we get
\begin{equation}
    s_{\mathcal{D}_{\boldsymbol{\xi}}, \vb + \boldsymbol{\xi}}(\vq) = s_{\mathcal{D}_{\boldsymbol{\xi}}}(\vq) - (\vb + \boldsymbol{\xi}) = (s_{\mathcal{D}}(\vq) + \boldsymbol{\xi}) - (\vb + \boldsymbol{\xi}) = s_{\mathcal{D}}(\vq) - \vb = s_{\mathcal{D}, \vb}(\vq) \,.
\end{equation}
Choosing $\boldsymbol{\xi} = -\vb$ yields $s_{\mathcal{D}, \vb}(\vq) = s_{\mathcal{D}_{-\vb}, \mathbf{0}}(\vq)$, completing the proof.
\end{proof}

\subsubsection{Gauge Fixing for Strictly Positive Weight Updates}

Suppose at step $t$ the model is stored in the canonical zero-bias frame $\mathcal{D}^{(t)}$ with $\vb = \mathbf{0}$, producing prediction $\vs_t = s_{\mathcal{D}^{(t)}}(\vq_t)$ on query $\vq_t$ and loss gradient $\vg_t = \nabla_{\hat{\vy}} \mathcal{L}(\vs_t, \vy_t)$. 

Instead of computing the functional gradient in the canonical frame (which risks $w_{\text{new}} < 0$), we perform a gauge transformation to view the database in a \textbf{virtual gauge} with bias $\vb_t \in \mathbb{R}^{d_{\text{out}}}$. By \cref{prop:gauge_invariance}, mapping the canonical zero-bias state to a frame with bias $\vb_t$ requires shifting the values by $\boldsymbol{\xi} = \vb_t$, resulting in the shifted virtual database:
\begin{equation}
    \mathcal{D}_{\vb_t}^{(t)} = \left\{ (w_i, \vk_i, \tilde{\vv}_i + w_i \vb_t) \right\}_{i=1}^N \,.
\end{equation}
Recall that
$s_{\mathcal{D}_{\vb_t}^{(t)},\vb_t}(\vq_t)=s_{\mathcal{D}_{\vb_t}^{(t)}}(\vq_t)-\vb_t=s_{\mathcal{D}^{(t)}}(\vq_t)+\vb_t-\vb_t=s_{\mathcal{D}^{(t)}}(\vq_t)$.

Now consider adding a tuple $(w_{\text{new}}, \vk_{\text{new}}, \tilde{\vv}'_{\text{new}})$ to $\mathcal{D}_{\vb_t}^{(t)}$ in this frame. As derived in \cref{app:constructive_update_analysis}, the functional gradient descent update on the weight density $w$ yields a scalar mass update proportional to the projection of the gradient onto the effective prediction relative to the frame bias:
\begin{equation}
    w_{\text{new}} = \eta \left\langle \vg_t, s_{\mathcal{D}_{\vb_t}^{(t)}}(\vq_t) \right\rangle= \eta \left\langle \vg_t, s_{\mathcal{D}^{(t)}}(\vq_t)+\vb_t \right\rangle= \eta \langle \vg_t, \vs_t + \vb_t \rangle \,.
    \label{eq:virtual_mass_update}
\end{equation}
Because the gauge choice $\vb_t$ is a free degree of freedom that leaves the function invariant, we can \textbf{fix the gauge dynamically} at each step to ensure that the inner product in \eqref{eq:virtual_mass_update} is strictly positive. Specifically, we choose:
\begin{equation}
    \vb_t = \left(-\langle \vs_t,\vg_t\rangle /\|\vg_t\|^2+ \alpha_t\right) \vg_t \,,
    \label{eq:gauge_choice}
\end{equation}
where $\alpha_t > 0$ is a scalar gauge parameter. Substituting \eqref{eq:gauge_choice} into \eqref{eq:virtual_mass_update} yields:
\begin{equation}
    w_{\text{new}} = \eta \left\langle \vg_t, \vs_t + \left(-\langle \vs_t,\vg_t\rangle /\|\vg_t\|^2+ \alpha_t\right) \vg_t \right\rangle = \eta \alpha_t \|\vg_t\|^2 > 0 \,.
    \label{eq:strictly_positive_weight}
\end{equation}
Thus, for any non-zero error gradient $\vg_t$, the new weight is \textbf{guaranteed to be strictly positive}.

The target weighted value vector $\tilde{\vv}'_{\text{new}}$ in the virtual frame is directed along the negative error gradient, following standard functional descent on the numerator functional:
\begin{equation}
    \tilde{\vv}'_{\text{new}} = -\beta \vg_t \,,
\end{equation}
where $\beta > 0$ is the value learning rate. The key is set as usual to $\vk_{\text{new}} = \Sigma_k \vq_t = \sigma_k^2 \vq_t$. Appending this tuple to $\mathcal{D}_{\vb_t}^{(t)}$ yields the preliminary database $\mathcal{D}^{\text{prel}, (t+1)}$ in the virtual frame:
$$\mathcal{D}^{\text{prel}, (t+1)}:=\mathcal{D}_{\vb_t}^{(t)}\cup\{(w_{\text{new}},\vk_{\text{new}},\tilde{\vv}'_{\text{new}})\}\,.$$

\subsubsection{Canonical Return and Locality of the Database Update}

To maintain a standard, zero-bias database across training steps, we must transform $\mathcal{D}^{\text{prel}, (t+1)}$ back to the canonical gauge ($\vb = \mathbf{0}$). Shifting the bias from $\vb_t$ back to $\mathbf{0}$ corresponds to applying the shift $\boldsymbol{\xi} = -\vb_t$ to all values.

A naive implementation might suggest that shifting values by $-\vb_t$ requires modifying every existing entry in the database. Remarkably, this is not the case: the gauge transformation yields a local update.

\begin{theorem}[Locality of the Gauge-Invariant Update]
\label{thm:gauge_locality}
The transition from the canonical database $\mathcal{D}^{(t)}$ to $\mathcal{D}^{(t+1)}$ consists solely of appending a single new tuple:
\begin{equation}
    \mathcal{D}^{(t+1)} = \mathcal{D}^{(t)} \cup \left\{ (w_{\text{new}}, \vk_{\text{new}}, \tilde{\vv}_{\text{final}}) \right\} \,,
\end{equation}
where all previously stored tuples remain completely unaltered, and the canonical (weighted) value vector of the new tuple is given by:
\begin{equation}
    \tilde{\vv}_{\text{final}} = \left(-w_{\text{new}} \alpha_t - \beta + \eta\alpha_t \langle \vs_t,\vg_t\rangle\right) \vg_t \,.
    \label{eq:canonical_value_formula}
\end{equation}
\end{theorem}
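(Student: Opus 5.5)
The plan is to track each tuple through the three gauge operations that define the update, using Proposition~\ref{prop:gauge_invariance} only to justify that each operation preserves the represented function. The discrete shift $\mathcal{D}\mapsto\mathcal{D}_{\boldsymbol{\xi}}$ acts tuple-wise and additively on the stored weighted values: $\tilde{\vv}_i\mapsto\tilde{\vv}_i+w_i\boldsymbol{\xi}$, with $w_i$ and $\vk_i$ unchanged. Two consecutive shifts by $\boldsymbol{\xi}_1$ and $\boldsymbol{\xi}_2$ therefore compose to a single shift by $\boldsymbol{\xi}_1+\boldsymbol{\xi}_2$. The whole argument is bookkeeping of this additive action.

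\textbf{Step 1: old tuples are untouched.} The update consists of three operations. First, the canonical $\mathcal{D}^{(t)}$ is shifted by $\boldsymbol{\xi}=\vb_t$ to obtain $\mathcal{D}^{(t)}_{\vb_t}$. Second, the tuple $(w_{\text{new}},\vk_{\text{new}},-\beta\vg_t)$ is appended. Third, the result is shifted by $\boldsymbol{\xi}=-\vb_t$ to return to bias $\mathbf{0}$. For an old tuple $(w_i,\vk_i,\tilde{\vv}_i)$ the stored value becomes $\tilde{\vv}_i+w_i\vb_t-w_i\vb_t=\tilde{\vv}_i$, since $w_i$ is the same in both shifts. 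Hence all previous entries are exactly restored.

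By Proposition~\ref{prop:gauge_invariance}, the final canonical state represents the same function as the virtual-frame state $(\mathcal{D}^{\text{prel},(t+1)},\vb_t)$. So $\mathcal{D}^{(t+1)}$ is a legitimate zero-bias representation of the updated model.

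\textbf{Step 2: the new tuple.} The appended tuple experiences only the return shift by $-\vb_t$. Its weighted value therefore becomes
\[
\tilde{\vv}_{\text{final}}=-\beta\vg_t-w_{\text{new}}\vb_t .
\]
Substituting the gauge choice \eqref{eq:gauge_choice} together with $w_{\text{new}}=\eta\alpha_t\|\vg_t\|^2$ from \eqref{eq:strictly_positive_weight} gives
\[
w_{\text{new}}\vb_t=\eta\alpha_t\|\vg_t\|^2\left(-\frac{\langle\vs_t,\vg_t\rangle}{\|\vg_t\|^2}+\alpha_t\right)\vg_t=\left(-\eta\alpha_t\langle\vs_t,\vg_t\rangle+w_{\text{new}}\alpha_t\right)\vg_t .
\]
Combining the two displays yields exactly \eqref{eq:canonical_value_formula}. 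The weight $w_{\text{new}}$ and the key $\vk_{\text{new}}=\sigma_k^2\vq_t$ are not changed by either shift.

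\textbf{Expected obstacle.} The computation itself is routine. The point needing care is conceptual: the locality hinges on the shift being proportional to each tuple's own weight $w_i$, which is why the stored weighted-value representation $\tilde{\vv}=w\vv$ matters. One must also make sure the return shift is applied with the \emph{same} $\vb_t$ as the outgoing shift. This holds because the bias is global and is fixed before the new tuple is appended.
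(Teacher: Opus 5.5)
Your proposal is correct and follows essentially the same route as the paper. You track each tuple through the forward shift by $\vb_t$, the insertion of $(w_{\text{new}},\vk_{\text{new}},-\beta\vg_t)$, and the return shift by $-\vb_t$; old weighted values are restored because both shifts scale with the tuple's own weight $w_i$; and $\tilde{\vv}_{\text{final}}=-\beta\vg_t-w_{\text{new}}\vb_t$ reduces to the stated formula after substituting the gauge choice and $w_{\text{new}}=\eta\alpha_t\|\vg_t\|^2$. Your extra use of Proposition~\ref{prop:gauge_invariance}, to confirm that the resulting zero-bias database represents the same function as the virtual-frame state, makes explicit a point the paper leaves implicit.
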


\begin{proof}
Let $(w_i, \vk_i, \tilde{\vv}_i) \in \mathcal{D}^{(t)}$ be an arbitrary pre-existing tuple.
\begin{enumerate}
    \item \textbf{Forward transformation to virtual gauge:} Under shift $\boldsymbol{\xi} = +\vb_t$, the weighted value becomes $\tilde{\vv}_i' = \tilde{\vv}_i + w_i \vb_t$.
    \item \textbf{Atom insertion:} The new atom $(w_{\text{new}}, \vk_{\text{new}}, \tilde{\vv}'_{\text{new}})$ is added. Existing tuples are unaffected.
    \item \textbf{Return transformation to canonical gauge:} Shifting the bias by $-\vb_t$ transforms all weighted values via $\tilde{\vv} \mapsto \tilde{\vv} - w \vb_t$. For any existing tuple $i$:
    \begin{equation}
        \tilde{\vv}_i'' = \tilde{\vv}_i' - w_i \vb_t = (\tilde{\vv}_i + w_i \vb_t) - w_i \vb_t = \tilde{\vv}_i \,.
    \end{equation}
    All previously stored tuples return identically to their initial values.
\end{enumerate}
For the newly added atom, its weighted value in the virtual frame was $\tilde{\vv}'_{\text{new}} = -\beta \vg_t$. Shifting it to the canonical frame yields:
\begin{align}
    \tilde{\vv}_{\text{final}} &= \tilde{\vv}'_{\text{new}} - w_{\text{new}} \vb_t \nonumber \\
    &= -\beta \vg_t - w_{\text{new}} \left(-\langle \vs_t,\vg_t\rangle /\|\vg_t\|^2+ \alpha_t\right) \vg_t \nonumber \\
    &= \left(-w_{\text{new}} \alpha_t - \beta + w_{\text{new}} \frac{\langle \vs_t,\vg_t\rangle}{\|\vg_t\|^2}\right) \vg_t  \\
    &= \left(-w_{\text{new}} \alpha_t - \beta + \eta\alpha_t \|\vg_t\|^2 \frac{\langle \vs_t,\vg_t\rangle}{\|\vg_t\|^2}\right) \vg_t  \\
    &= \left(-w_{\text{new}} \alpha_t - \beta + \eta\alpha_t \langle \vs_t,\vg_t\rangle\right) \vg_t  \,.
\end{align}
Thus, no historical data points are modified, and the update is strictly local.
\end{proof}

\subsubsection{Algorithm Summary}

The complete procedure is formalized in \cref{alg:gauge_invariant_softmax}. 
As mentioned in Remark~\ref{rem:weighted_softmax_remark}, we store $\vk_{\text{new}} \leftarrow \vq_t$ in the database instead of $\vk_{\text{new}} \leftarrow \sigma_k^2\vq_t$.

\begin{algorithm}[H]
\caption{Gauge-Invariant Weighted Softmax Growing Step}
\label{alg:gauge_invariant_softmax}
\begin{algorithmic}[1]
\State \textbf{Input:} Current database $\mathcal{D}^{(t)}$, input query $\vq_t$, loss gradient $\vg_t = \nabla_{\hat{\vy}} \mathcal{L}(\vs_t, \vy_t)$
\State \textbf{Hyperparameters:} Weight learning rate $\eta > 0$, value learning rate $\beta > 0$, gauge parameter $\alpha_t > 0$, temperature $T$
\State \textbf{1. Compute prediction:}
\State \quad $\vs_t \leftarrow \frac{\sum_{(w, \vk, \tilde{\vv}) \in \mathcal{D}^{(t)}} e^{\frac{1}{T\sqrt{d_{\text{in}}}} \langle \vk, \vq_t \rangle} \tilde{\vv}}{\sum_{(w, \vk, \tilde{\vv}) \in \mathcal{D}^{(t)}} e^{\frac{1}{T\sqrt{d_{\text{in}}}} \langle \vk, \vq_t \rangle} w}$
\State \textbf{2. Compute strictly positive weight update:}
\State \quad $w_{\text{new}} \leftarrow \eta \alpha_t \|\vg_t\|^2$
\State \textbf{3. Compute canonical weighted value:}
\State \quad $\tilde{\vv}_{\text{final}} \leftarrow \left(-w_{\text{new}} \alpha_t - \beta + \eta\alpha_t \langle \vs_t,\vg_t\rangle\right) \vg_t$
\State \textbf{4. Set new key:}
\State \quad $\vk_{\text{new}} \leftarrow \vq_t$
\State \textbf{5. Append new tuple to database:}
\State \quad $\mathcal{D}^{(t+1)} \leftarrow \mathcal{D}^{(t)} \cup \left\{ (w_{\text{new}}, \vk_{\text{new}}, \tilde{\vv}_{\text{final}}) \right\}$
\end{algorithmic}
\end{algorithm}

\subsubsection{Hyperparameter Choice that Yields Changes in the Order of the Gradient Length}

The weight of the added tuple is $w_{\text{new}}= \eta \alpha_t \|\vg_t\|^2$ which is quadratic in the length of the gradient. In order to ensure that we add updates that are in the order of the gradient length, we can set $\alpha_t:=\frac{\alpha}{\|\vg_t\|}$. In this case, we get
$$w_{\text{new}}=\eta \alpha \|\vg_t\|\,,$$
and
\begin{align*}
\tilde{\vv}_{\text{final}} &= \left(-w_{\text{new}} \alpha_t - \beta + \eta\alpha_t \langle \vs_t,\vg_t\rangle\right) \vg_t\\
&= \left(-\eta\alpha_t \|\vg_t\|^2 \alpha_t - \beta + \eta\alpha_t \langle \vs_t,\vg_t\rangle\right) \vg_t\\
&= \left(-\eta\frac{\alpha}{\|\vg_t\|}\|\vg_t\|^2 \frac{\alpha}{\|\vg_t\|} - \beta + \eta\frac{\alpha}{\|\vg_t\|} \langle \vs_t,\vg_t\rangle\right) \vg_t\\
&= \left(-\eta\alpha^2 - \beta + \eta\frac{\alpha}{\|\vg_t\|} \langle \vs_t,\vg_t\rangle\right) \vg_t
\,.
\end{align*}

This yields the following algorithm.

\begin{algorithm}[H]
\caption{Gauge-Invariant Weighted Softmax Learning with Updates in the Order of the Gradient Length}
\label{alg:gauge_invariant_softmax_gradient_order}
\begin{algorithmic}[1]
\State \textbf{Input:} Current database $\mathcal{D}^{(t)}$, input query $\vq_t$, loss gradient $\vg_t = \nabla_{\hat{\vy}} \mathcal{L}(\vs_t, \vy_t)$
\State \textbf{Hyperparameters:} Weight learning rate $\eta > 0$, value learning rate $\beta > 0$, gauge parameter $\alpha > 0$, temperature $T$
\State \textbf{1. Compute prediction:}
\State \quad $\vs_t \leftarrow \frac{\sum_{(w, \vk, \tilde{\vv}) \in \mathcal{D}^{(t)}} e^{\frac{1}{T\sqrt{d_{\text{in}}}} \langle \vk, \vq_t \rangle} \tilde{\vv}}{\sum_{(w, \vk, \tilde{\vv}) \in \mathcal{D}^{(t)}} e^{\frac{1}{T\sqrt{d_{\text{in}}}} \langle \vk, \vq_t \rangle} w}$
\State \textbf{2. Compute strictly positive weight update:}
\State \quad $w_{\text{new}} \leftarrow \eta \alpha \|\vg_t\|$
\State \textbf{3. Compute canonical weighted value:}
\State \quad $\displaystyle\tilde{\vv}_{\text{final}} \leftarrow \left(-\eta\alpha^2 - \beta + \eta\frac{\alpha}{\|\vg_t\|} \langle \vs_t,\vg_t\rangle\right) \vg_t$
\State \textbf{4. Set new key:}
\State \quad $\vk_{\text{new}} \leftarrow \vq_t$
\State \textbf{5. Append new tuple to database:}
\State \quad $\mathcal{D}^{(t+1)} \leftarrow \mathcal{D}^{(t)} \cup \left\{ (w_{\text{new}}, \vk_{\text{new}}, \tilde{\vv}_{\text{final}}) \right\}$
\end{algorithmic}
\end{algorithm}

\section{Optimization Dynamics in the Growing Database}
\label{app:optimization_dynamics}

Standard deep learning optimization techniques such as weight decay and momentum can be naturally incorporated into the growing database framework. 
Weight decay, which corresponds to $L_2$ regularization in the functional space, translates to a simple uniform decay of the scalar weights $w$ and weighted values $\tilde{\vv}$ of all stored key-value pairs at each step. This mechanism not only regularizes the model but also introduces a ``forgetting'' factor where older memories gradually fade in influence. As we show in \cref{app:softmax_weight_decay}, weight-decay corresponds to the update rule
\begin{align}
     \mathcal{D}_{t} =& \{ ((1-\gamma)w, \vk, (1-\gamma)\tilde{\vv}) \mid (w,\vk,\tilde{\vv}) \in \mathcal{D}_{t-1} \} \nonumber \\
     &\quad\cup\, \{ (w_{\text{new}}, \vk_{\text{new}}, \tilde{\vv}_{\text{new}}) \} \,.
\end{align}
This means that the weights of old tuples decay exponentially with training steps, thereby providing a principled path towards ``compression'' of the database: Old tuples with negligible weights can be pruned with minimal impact on the function, potentially allowing the database to be maintained at a constant size.

Momentum, typically implemented by maintaining a velocity vector, would naively require doubling the storage (storing a velocity database alongside the parameter database). However, we derive a more efficient formulation where momentum is realized analytically. By storing the creation timestamp $t_i$ of each key, we can compute the effective momentum contribution of a key during the forward pass via a time-dependent scaling coefficient:
\begin{equation}
    c(t, t_i) = \frac{1-\mu^{t-t_i+1}}{1-\mu} \,,
\end{equation}
where $\mu$ is the momentum factor. As we show in \cref{app:softmax_momentum}, this allows the growing network to benefit from accelerated convergence without the storage overhead of explicit velocity buffers.

One can also combine both approaches. The resulting algorithm applies the geometric decay to the stored weights at every step to handle regularization, while applying the time-dependent momentum coefficient during the retrieval operation to simulate the accumulation of gradients. We provide detailed derivations of these dynamics in \cref{app:softmax_momentum_and_weight_decay}.

In this appendix, we derive the update rules for weight decay and momentum within the growing softmax framework. We assume the decoupled representation where the database stores $(w, \vk, \tilde{\vv})$, and the functions are $w_D(\vk) = \sum w_i e^{\langle \vk, \vk_i \rangle}$ and $w_N(\vk) = \sum \tilde{\vv}_i e^{\langle \vk, \vk_i \rangle}$. Note that here we use $\tilde{\vv}$ to denote the weighted value vector, consistent with the main text notation.

\subsection{Weight Decay}
\label{app:softmax_weight_decay}
Weight decay corresponds to minimizing a regularized objective:
$$ J_{reg}(w_N, w_D) = J(w_N, w_D) + \frac{\gamma}{2\eta} (\|w_N\|_{\mathcal{H}}^2 + \|w_D\|_{\mathcal{H}}^2) \,.$$
Here, we adopt the ``decoupled weight'' formulation of \cref{app:decoupled}. Nevertheless, following a ``coupled weight'' formulation as in \eqref{eq:softmax_with_weight_function_Gaussian} yields similar results.

The functional gradient descent update for $w_D$ becomes:
$$ w_{D,t} \leftarrow w_{D,t-1} - \eta (\nabla_{w_D} J + \frac{\gamma}{\eta} w_{D,t-1}) = (1-\gamma)w_{D,t-1} - \eta \nabla_{w_D} J \,.$$
Similarly for the numerator function $w_N$:
$$ w_{N,t} \leftarrow (1-\gamma)w_{N,t-1} - \eta \nabla_{w_N} J \,.$$
In the database representation, the function $w_{D,t-1}$ is represented by the sum of existing kernels with weights $w_i$. Multiplying the function by $(1-\gamma)$ is equivalent to multiplying every coefficient $w_i$ in the database by $(1-\gamma)$. The same logic applies to the numerator function and its coefficients $\tilde{\vv}_i$.
Thus, the update rule for the database $\mathcal{D}_{t-1}$ is:
$$ \mathcal{D}_{t} = \{ ((1-\gamma)w, \vk, (1-\gamma)\tilde{\vv}) \mid (w,\vk,\tilde{\vv}) \in \mathcal{D}_{t-1} \} \cup \{ (w_{\text{new}}, \vk_{\text{new}}, \tilde{\vv}_{\text{new}}) \} \,,$$
where the new tuple is derived from the gradients $-\eta \nabla_{w_D} J$ and $-\eta \nabla_{w_N} J$ as in \cref{app:constructive_update_analysis}.

This operation implies that the weight of a tuple added at time $t_0$ decays geometrically as $(1-\gamma)^{t-t_0}$ at step $t$. Consequently, older tuples will eventually have weights strictly bounded by some small $\epsilon$. This suggests a natural pruning strategy: removing tuples where $|w| < \epsilon$ and $\|\tilde{\vv}\| < \epsilon$ results in a negligible perturbation to the softmax function output. By setting a threshold, one can effectively bound the maximum size of the database, converting the growing nonparametric model into a fixed-capacity model with a dynamic memory.

\subsection{Momentum}
\label{app:softmax_momentum}
Momentum accelerates optimization by accumulating a velocity function $\nu \in \mathcal{H}$. We apply momentum to the gradient updates for both $w_N$ and $w_D$. Let's focus on $w_D$ (the derivation for $w_N$ is identical). The update rules are:
$$ \nu_t = \mu \nu_{t-1} + \nabla_{w_D} J_t, \quad w_{D,t} = w_{D,t-1} - \eta \nu_t \,. $$
Unrolling the recursion, the weight function at step $t$ can be expressed as a linear combination of all past gradients:
$$ w_{D,t} = w_{D,0} - \eta \sum_{j=1}^t \left( \sum_{k=0}^{t-j} \mu^k \right) \nabla_{w_D} J_j = w_{D,0} - \eta \sum_{j=1}^t \frac{1-\mu^{t-j+1}}{1-\mu} \nabla_{w_D} J_j \,. $$
Recall that the gradient $\nabla_{w_D} J_j$ at step $j$ corresponds to adding a new kernel centered at $\vk_j = \sigma^2 \vq_j$ with a specific weight $\Delta w_j$. The unrolled equation implies that the accumulated weight for the kernel at $\vk_j$ at current time $t$ is simply the initial update weight $\Delta w_j$ scaled by the coefficient:
$$ c(t, t_j) = \frac{1-\mu^{t-t_j+1}}{1-\mu} \,. $$
Therefore, we can implement momentum without storing the velocity function $\nu_t$ explicitly. Instead, we store the creation time $t_j$ for each tuple. During the retrieval (forward pass) at current time $t$, the contribution of the $j$-th tuple to the denominator is scaled by $c(t, t_j)$. The same scaling applies to the weighted value vector $\tilde{\vv}_j$ in the numerator.

\subsection{Combined Dynamics}
\label{app:softmax_momentum_and_weight_decay}
In this section, we provide a rigorous derivation of the update rule when both weight decay and momentum are active. We adopt the \textbf{decoupled weight decay} formulation \citep{LoshchilovH19}, which is the standard for modern deep learning optimizers like AdamW and SGDW.

\subsubsection{Setup and Update Equations}
Let $w_t \in \mathcal{H}$ denote the function state at time $t$. Let $\nabla J_t$ denote the functional gradient of the loss at step $t$.
The update dynamics are defined by the following system of recurrence relations:
\begin{align}
    \nu_t &= \mu \nu_{t-1} + \nabla J_t \label{eq:momentum_update} \,, \\
    w_t &= (1-\gamma) w_{t-1} - \eta \nu_t \label{eq:weight_update} \,.
\end{align}
where:
\begin{itemize}
    \item $\mu \in [0, 1)$ is the momentum coefficient.
    \item $\gamma \in [0, 1)$ is the weight decay factor per step.
    \item $\eta > 0$ is the learning rate.
    \item $\nu_t$ is the velocity function (initialized at $\nu_0 = 0$).
    \item $w_0$ is the initial function state.
\end{itemize}

\subsubsection{Derivation of the Closed-Form State}
We aim to express $w_t$ as a direct summation of the history of gradients $\{\nabla J_j\}_{j=1}^t$.
For convenience, we define the \textbf{effective momentum} $\tilde{\mu}$ and \textbf{effective learning rate} $\tilde{\eta}$ as:
$$ \tilde{\mu} = \frac{\mu}{1-\gamma}, \quad \tilde{\eta} = \frac{\eta}{1-\gamma} \,. $$

First, we unroll the velocity term $\nu_t$ using \eqref{eq:momentum_update}:
\begin{equation}
    \nu_k = \sum_{j=1}^k \mu^{k-j} \nabla J_j \,.
\end{equation}

Next, we substitute this into the weight update \eqref{eq:weight_update}. By unrolling the recurrence for $w_t$, we obtain:
\begin{align}
    w_t &= (1-\gamma) w_{t-1} - \eta \nu_t \nonumber \\
        &= (1-\gamma)^t w_0 - \eta \sum_{k=1}^t (1-\gamma)^{t-k} \nu_k \nonumber \\
        &= (1-\gamma)^t w_0 - \tilde{\eta} \sum_{k=1}^t (1-\gamma)^{t-k+1} \nu_k \,.
\end{align}
Substituting the expression for $\nu_k$ and rearranging the summation order ($1 \le j \le k \le t$):
\begin{equation}
    w_t = (1-\gamma)^t w_0 - \tilde{\eta} \sum_{j=1}^t \nabla J_j \left( \sum_{k=j}^t (1-\gamma)^{t-k+1} \mu^{k-j} \right) \,.
\end{equation}
We simplify the inner summation $\mathcal{C}(t, j)$ as follows:
\begin{align}
    \mathcal{C}(t, j) &=\sum_{k=j}^t (1-\gamma)^{t-k+1} \mu^{k-j}\\
    &= \sum_{m=0}^{t-j} (1-\gamma)^{(t-j+1)-m} \mu^m \nonumber \\
    &= (1-\gamma)^{t-j+1} \sum_{m=0}^{t-j} \left( \frac{\mu}{1-\gamma} \right)^m \nonumber \\
    &= (1-\gamma)^{t-j+1} \sum_{m=0}^{t-j} \tilde{\mu}^m \\
    &= (1-\gamma)^{t-j+1} \frac{1 - \tilde{\mu}^{t-j+1}}{1 - \tilde{\mu}} \,.
\end{align}
Substituting this back into the expression for $w_t$, we arrive directly at the compact closed-form solution:
\begin{equation}
    \label{eq:combined_exact_simplified}
    w_t = (1-\gamma)^t w_0 - \tilde{\eta} \sum_{j=1}^t \left[(1-\gamma)^{t-j+1}\frac{1 - \tilde{\mu}^{t-j+1}}{1- \tilde{\mu}} \right] \nabla J_j\,.
\end{equation}

Similarly to the implementation of momentum, we can store the creation time $t_j$ for each tuple and then take it into account in the forward pass.

\section{Additional technical details on advanced linear attention layers}
\label{app:mesa_foof}

\subsection{DeltaNet}

 In (unnormalized) linear attention, each step writes a new key-value association into a fixed size state via an outer-product update. The delta rule instead updates the value associated with the current key by (partly) overwriting the previously stored value with write strength $\lambda \in [0,1],
    \vv^{\text{new}}_t = \lambda \vv_t + (1-\lambda)\vv_t^{\text{old}}$ and $
    \mW_t = \mW_{t-1} - \vv_t^{\text{old}} \vk_t^\top + \vv_t^{\text{new}}\vk_t^\top$.
In our setting, we set the key to be the activation, i.e., $\vk_t :=\vx_t$ and define the previously stored value for that key as the current prediction $\mW_{t-1}\vx_t$. To compress a mapping from activations to (negative) gradients into a fixed size state as in vanilla gradient descent we set the target value to be the negative output-gradient scaled by a step size $\eta$, i.e., $\vv_t := -\eta \nabla_{\vy_t} \mathcal{L}$, which gives $\vv^{\text{new}}_t = -\lambda \eta \nabla_{\vy_t} \mathcal{L} + (1-\lambda) \mW_{t-1}\vx_t.$

This yields the recurrent update
\begin{equation}
\label{Delta:rec}
    \mW_t = \mW_{t-1} (\mI - \lambda \vx_t \vx_t^\top)  +  (-\lambda \eta \nabla_{\vy_t} \mathcal{L} )\vx_t^\top,
\end{equation}
which shows that a DeltaNet-like update can be interpreted as gradient descent preceded by a targeted forgetting mechanism along the current activation direction. Notably, the update itself corresponds to performing a gradient descent step w.r.t.~$\mW$ on the layer-local, time-dependent loss \begin{equation}
    \ell_t(\mW) := \frac{1}{2} \| \mW\vk_t - \vv_t \|_2^2 = \frac{1}{2} \|\mW\vx_t + \eta \nabla_{\vy_t} \mathcal{L} \|_2^2
    \label{eq:delta-loss-app}
\end{equation}
with learning rate $\lambda$ \citep{yang2024parallelizing, sun2024learning}, meaning the Delta-rule-based update implicitly performs gradient descent within the gradient descent weight update.
In practice, the update can be implemented with little computational overhead by rewriting Eq.~\ref{Delta:rec} to 
\vspace*{-1mm}
\begin{align}
    \mW_t = \mW_{t-1} + \lambda (- \eta \nabla_{\vy_t} \mathcal{L} - \mW_{t-1}\vx_t)\vx_t^\top,
\end{align}
caching the activation $\mW_{t-1} \vx_t$ from the forward pass and using it to form the corrected term $- \eta \nabla_{\vy_t} \mathcal{L} - \mW_{t-1}\vx_t$ before computing the outer-product update. 

\subsection{MesaNet: Problem formulation and connection to FOOF}

Next, we show the connection between FOOF and Mesa layer when applied to the database. We present a slightly modified formulation of the Mesa layer, with decay factor and an additional regularization center. We present the case of batch size 1, but the result can easily be extended to the general case.

Consider the online estimation of a parameter matrix $\mW \in \mathbb{R}^{k \times d}$ from a stream of observations $(\vx_t, \vy_t)$ (key and value of the database) for $t=1, \dots, T$. The mesa layer seeks to minimize the following time-discounted objective function $J_T(\mW)$ at each step $T$:

\begin{equation}\label{app:ea_loss_mesa}
J_T(\mW) = \sum_{t=1}^T \gamma^{T-t} \|\vy_t - \mW \vx_t\|^2 + \Lambda_T \|\mW - \mC_T\|_F^2
\end{equation}

where:
\begin{itemize}
    \item $\gamma \in [0, 1]$ is the forgetting factor.
    \item $\Lambda_T > 0$ is the regularization strength at time $T$.
    \item $\mC_T \in \mathbb{R}^{k \times d}$ is the regularization center (prior mean) at time $T$, taken as the exponential moving average of past weights, defined as $ \mC_T = \gamma \frac{\Lambda_{T-1}}{\Lambda_T} \mC_{T-1} + (1 - \gamma \frac{\Lambda_{T-1}}{\Lambda_T}) \mW_{T-1}$, and $\mC_0 = \mathbf{0}$.
\end{itemize}

The loss in the main text is recovered by choosing $\gamma=1$ and $\Lambda_t = \lambda \forall t$.

It can be shown that the optimal solution $\mW_T$ satisfies the normal equation:
\begin{equation}
\mW_T \Phi_T = \mP_T
\end{equation}
where $\Phi_T$ and $\mP_T$ are the regularized correlation matrix and cross-correlation vector, defined respectively as:
\begin{align}
\Phi_T &= \sum_{t=1}^T \gamma^{T-t} \vx_t \vx_t^\top + \Lambda_T \mI \\
\mP_T &= \sum_{t=1}^T \gamma^{T-t} \vy_t \vx_t^\top + \Lambda_T \mC_T
\end{align}

\subsection{Derivation of the Recursive Update}

We seek to express $\mW_T$ as a function of $\mW_{T-1}$ and the current observation $(\vx_T, \vy_T)$. First, we expand the recursive definitions of the sufficient statistics.

\subsubsection{Recursive Matrix Update}
Separating the term at $t=T$ and substituting $\Phi_{T-1}$:
\begin{equation}
\begin{aligned}
\Phi_T &= \gamma \left( \sum_{t=1}^{T-1} \gamma^{T-1-t} \vx_t \vx_t^\top \right) + \vx_T \vx_T^\top + \Lambda_T \mI \\
&= \gamma (\Phi_{T-1} - \Lambda_{T-1} \mI) + \vx_T \vx_T^\top + \Lambda_T \mI \\
&= \gamma \Phi_{T-1} + \vx_T \vx_T^\top + (\Lambda_T - \gamma \Lambda_{T-1}) \mI
\end{aligned}
\end{equation}

\subsubsection{Recursive Vector Update}
Similarly for $P_T$:
\begin{equation}
\begin{aligned}
\mP_T &= \gamma (\mP_{T-1} - \Lambda_{T-1} \mC_{T-1}) + \vy_T \vx_T^\top + \Lambda_T \mC_T \\
&= \gamma \mP_{T-1} + \vy_T \vx_T^\top + (\Lambda_T \mC_T - \gamma \Lambda_{T-1} \mC_{T-1})
\end{aligned}
\end{equation}

\subsubsection{Sufficient Condition for the Delta Rule}
Assuming $\mW_{T-1} = \mP_{T-1} \Phi_{T-1}^{-1}$, we substitute $\mP_{T-1} = \mW_{T-1} \Phi_{T-1}$ into the expression for $\mP_T$:
\begin{equation}
\mP_T = \mW_{T-1} (\gamma \Phi_{T-1}) + \vy_T \vx_T^\top + (\Lambda_T \mC_T - \gamma \Lambda_{T-1} \mC_{T-1})
\end{equation}
From the matrix update step, we substitute $\gamma \Phi_{T-1} = \Phi_T - \vx_T \vx_T^\top - (\Lambda_T - \gamma \Lambda_{T-1}) \mI$:
\begin{equation}
\mP_T = \mW_{T-1} \left( \Phi_T - \vx_T \vx_T^\top - (\Lambda_T - \gamma \Lambda_{T-1}) \mI \right) + \vy_T \vx_T^\top + (\Lambda_T \mC_T - \gamma \Lambda_{T-1} \mC_{T-1})
\end{equation}
Rearranging terms to isolate the prediction error:
\begin{equation}
\mP_T = \mW_{T-1} \Phi_T + (\vy_T - \mW_{T-1} \vx_T) \vx_T^\top + \mathcal{R}
\end{equation}
where the residual term $\mathcal{R}$ is defined as:
\begin{equation}
\mathcal{R} = (\Lambda_T \mC_T - \gamma \Lambda_{T-1} \mC_{T-1}) - \mW_{T-1} (\Lambda_T - \gamma \Lambda_{T-1})
\end{equation}
Multiplying by $\Phi_T^{-1}$ from the right yields the update for $\mW_T$:
\begin{equation}
\mW_T = \mW_{T-1} + (\vy_T - \mW_{T-1} \vx_T) \vx_T^\top \Phi_T^{-1} + \mathcal{R} \Phi_T^{-1}
\end{equation}

For the residual term $\mathcal{R}$ to vanish, it is necessary and sufficient to impose the following consistency condition on the update of the regularization center $\mC_T$:
\begin{equation}
\label{eq:consistency}
\Lambda_T \mC_T = \gamma \Lambda_{T-1} \mC_{T-1} + (\Lambda_T - \gamma \Lambda_{T-1}) \mW_{T-1},
\end{equation}
which is satisfied by definition.

Under the above condition, and using as target at every timestep $\vy_T = \mW_{T-1}\vx_T - \eta \nabla_{\vy_{T}} \mathcal{L}$, applying the loss in Equation~\ref{app:ea_loss_mesa} to the database would correspond to having a state, $\mW, \Phi$, which is updated following:

\begin{align}
\Phi_T &= \gamma \Phi_{T-1} + \vx_T \vx_T^\top + (\Lambda_T - \gamma \Lambda_{T-1}) \mI \\
\mW_T &= \mW_{T-1} - \eta \nabla_{\mW_{T-1}} \mathcal{L} \cdot \Phi_T^{-1}
\end{align}

It now only remains to define a meaningful $\Lambda_T$.

\subsection{Proposed Algorithms}

We identify two distinct regimes satisfying the consistency condition \eqref{eq:consistency}.

\subsubsection{Algorithm A: Static Regularization}
We assume a constant regularization strength $\Lambda_T = \lambda$ for all $T$.

\textbf{Matrix Update:}
    \[ \Phi_T = \gamma \Phi_{T-1} + \vx_T \vx_T^\top + \lambda(1-\gamma)\mI \]

Which corresponds exactly to the FOOF optimizer. 

\subsubsection{Algorithm B: Cumulative Regularization}

The previous algorithm causes growing norm of $\Phi$ and a decrease in the importance of the prior, which may incur numerical and hyperparameter instabilities. We therefore propose the next algorithm, which we use in practice.

We assume the regularization strength accumulates over time according to $\Lambda_T = \lambda + \gamma \Lambda_{T-1}$. This is equivalent to optimizing the \textit{weighted averaged} loss

\begin{equation}\label{app:ea_loss_mesa2}
J_T(\mW) = \frac{1}{\sum_{t=1}^T \gamma^{T-t}}\sum_{t=1}^T \gamma^{T-t} \|\vy_t - \mW \vx_t\|^2 + \lambda \|\mW - \mC_T\|_F^2
\end{equation}

\textbf{Matrix Update:} \[ \Phi_T = \gamma \Phi_{T-1} + \vx_T \vx_T^\top + \lambda \mI \]

While keeping the above objective, we introduce additional tricks to stabilize numeric and hyperparameter across different layer widths, resulting in the following recursive update for a batch of $B$ inputs $\mX_T \in \mathbb{R}^{d \times B}$:

\begin{align}
\Gamma_T &= \gamma\Gamma_{T-1} + 1 \\
\Psi_T &= \frac{\gamma \Gamma_{T-1}}{\Gamma_T} \Psi_{T-1} + \frac{1}{Bd\Gamma_T}\mX_T \mX_T^\top \\
\mW_T &= \mW_{T-1} - \eta \nabla_{\mW_{T-1}} \mathcal{L} \cdot (\Psi_T + \lambda \mI)^{-1},
\end{align}

while initializing $\Psi_0=\mathbf{0}$, $\mW_0$ to a standard weight initialization, and $\Gamma_0=0$.

\section{Experimental Setup}
\label{app:experimental}

\subsection{Tasks and Model Selection}
\label{app:data}
We run experiments on image classification and teacher-student regression tasks. For all experiments, we report the mean and standard deviation over 5 seeds. To produce the results in Figure \ref{fig:cifar-test-acc}, we run each model/optimizer with a comparable set of hyperparameters where they are shared and for each method add the needed, additional hyperparameters. For Figures \ref{fig:teacher-student}, \ref{fig:deep-representations-and-temperature}, and \ref{fig:bbb}, we select a subset of the hyperparameters relevant to the produced result and again detail the selection in the respective model subsection. The considered ranges of hyperparameters are reported beneath in the respective model subsections. 

Generally, for each plot and for each method displayed in the respective plot, we select those hyperparameters that maximize/minimize the metric displayed in the plot. The only place where we share the selection across multiple plots is Fig.~\ref{fig:cifar-test-acc}. There, the runs visualized in Fig.~\ref{fig:cifar-test-acc}/Left and Fig.~\ref{fig:cifar-test-acc}/Middle are tied in that the two metrics (Test Accuracy and Training Loss) are displayed for the same selection of runs that maximize the final test accuracy. In particular, looking for the hyperparameter that minimizes the training loss would yield a different selection of runs achieving even smaller training loss (but potentially worse generalization). 

In the following two subsections, we first briefly describe the data processing/generation pipeline used for classification and regression experiments.

\subsubsection{Image Classification}
\label{app:data_image}

CIFAR-10 and CIFAR-100 \cite{krizhevsky} are image classification datasets on $32 \times 32 \times 3$ dimensional RGB images from 10 and 100 classes, respectively. Before handing these images to the network, we first apply standard normalization by subtracting the mean and dividing by the standard deviation per channel. Then, to increase the effective size of the training dataset, we apply standard image transformations, randomly sampled during training. These include reflect-padding ($\pm 4$ pixels) followed by a random $32 \times 32$ crop applied to all 3 channels and an epoch-wise horizontal flip. Test data is only normalized without applying augmentation. Before handing the images to our MLP network, we flatten them into a $3072$-dimensional vector. The notion of epoch in the respective experiments refers to the following: take the 50000 training samples from the respective dataset, apply a randomly chosen augmentation (following the above rules) to each sample, and ultimately shuffle the samples. Only for Fig.~\ref{fig:cifar-test-acc}/Right, where only a single pass on the training data is performed, we use the non-augmented standard version of CIFAR-10.

\subsubsection{Teacher-Student Regression}
\label{app:data_teacher_student}
We generate synthetic regression tasks using a random teacher network. The inputs $\vx \in \mathbb{R}^{d_{\text{in}}}$ are sampled from a standard normal distribution and projected onto the unit sphere. The targets for the randomly sampled inputs are generated by a randomly initialized MLP teacher with $L$ residual hidden blocks adding to a residual stream of width $d_e$. Crucially, this teacher has nonlinearities thereby, upon initialization, implementing some nonlinear input-output(target) mapping. Particularly, the teacher first projects the input data to the hidden dimension by applying a linear layer and followed by a ReLU activation. Subsequently, $L-1$ residual blocks, where each block consists of a linear layer and a ReLU activation, are applied to the initial residual stream activation. A final linear projection maps the last residual stream activation mapping to the output feature (the target for the regression task) of dimension $d_{\text{out}}$. All weights are randomly initialized (standard variance scaling init). In our experiments, we set $d_{\text{in}}=100$, $d_e=100$, $L=2$, and $d_{\text{out}}=1$.

\subsection{Model Architecture}
\label{app:model}
Throughout the entire paper, and importantly shared across all methods, all experiments (unless explicitly stated otherwise) are run with a ResNet-MLP architecture. The inputs act as the initial residual stream activation, thereby governing the residual stream dimension $d_{e} = d_{\text{in}}$. The network backbone consists of $L$ residual blocks arranged in a pre-normalization configuration. Specifically, each block applies layer normalization, followed by a trainable module (a standard dense layer or one of the proposed alternatives: Delta, Mesa, RBF, or Weighted Softmax). In the case of a standard dense layer, Delta, or Mesa, a non-linear activation function is applied to the output of the block and the norm rescaling parameters are trained. Note that for RBF and Weighted Softmax, nonlinearities are by design included in the block. Also, for those we do not train the scale parameters of the norm but fix them to be unit. In all cases, the block's output is added back to the residual stream. The final residual stream activation is processed by a final normalization (again scale parameters not trained for RBF and Softmax and trained for all others) before being mapped to the $C$ class logits via a final output block (again, either a dense layer or one of the proposed methods). In the following subsections, we detail the various instantiations of the residual blocks with the methods described in this work.

\subsection{Weighted Softmax}
\label{app:experimental:Softmax}
We evaluate Weighted Softmax kernel-based RCDL models on both synthetic teacher-student regression and image classification tasks, using the classic CIFAR-10 and CIFAR-100 benchmarks. For these experiments, we instantiate the blocks of the ResNet-MLP backbone architecture, described in Appendix \ref{app:model}, with Weighted Softmax RCDL modules.

\begin{algorithm}
\caption{Growing Weighted Softmax Attention}
\label{alg:growing_softmax}
\begin{algorithmic}[1]
\State \textbf{Input:} Stream of query batches $\{\vq_{t,n}\}_{n=1}^{B}$, temperature $T$, value-learning rate $\beta > 0$, weight-learning rate $\eta > 0$, $\mW_0 = \frac{1}{B}$, $\mK_0 \sim \mathcal{N}(0,\frac{1}{\sqrt{d_{\mathrm{in}}}}\mI)$, $\mV_0 \sim \mathcal{N}(0,\frac{1}{\sqrt{d_{\mathrm{out}}}}\mI)$
\State \textbf{Initialize:} $\mathcal{D}_0 \leftarrow \{(\mW_{0}, \mK_{0}, \mV_0)\}$
\For{$t = 1, 2, \ldots$}
    \For{$n = 1, \ldots, B$}
        \State Compute prediction: $\vs_{t,n} \leftarrow \frac{\sum_{(w,\vk,\tilde{\vv})\in\mathcal{D}_{t-1}} e^{\frac{1}{T\sqrt{d_{\text{in}}}}\langle \vk, \vq_{t,n} \rangle}\, \tilde{\vv}}{\sum_{(w,\vk,\tilde{\vv})\in\mathcal{D}_{t-1}} w\, e^{\frac{1}{T\sqrt{d_{\text{in}}}}\langle \vk, \vq_{t,n} \rangle}}$
        \State Compute gradient: $\vg_{t,n} \leftarrow \nabla \mathcal{L}(\vs_{t,n})$
        \State Compute inner product: $\delta_{t,n} \leftarrow \langle \vs_{t,n}, \vg_{t,n} \rangle$
    \EndFor
    \State \textbf{Update database:} $\mathcal{D}_t \leftarrow \mathcal{D}_{t-1} \cup \{(\max\{0, \eta\,\delta_{t,n}\},\; \vq_{t,n},\; -\beta\, \vg_{t,n}) : n = 1,\ldots,B\}$
\EndFor
\end{algorithmic}
\end{algorithm}

A Weighted Softmax module maintains a growing database of key, value, weight tuples $\mathcal{D}_t = (\mK_t, \mV_t, \mW_t) = (\{\vk_i, \vv_i, w_i\}_{i=0}^t)$ of size $(t+1)B$ at time step $t$, where $B$ is the batch size. For numerical stability, we initialize this database with a randomly sampled batch at $t=0$. Particularly, we set $\mathcal{D}_0 = (\mK_0, \mV_0, \mW_0)$. Here, $\mK_0 \in \mathbb{R}^{B\times d_{e}}$ and $\mV_0 \in \mathbb{R}^{B\times d_{\text{out}}}$ ($d_{\text{out}} = d_e$ for the residual blocks) are sampled from a standard normal distribution and subsequently normalized along the feature dimension by root mean squared normalization. All entries of $\mW_0 \in \mathbb{R}^{B}$ are set to a constant weight $w_0 =\frac{1}{B}$, where $B$ is the batch size. At timestep $t > 0$, the database is grown following the steps outlined in Algorithm \ref{alg:growing_softmax}. Briefly, we apply the Weighted Softmax update rule but clip negative weights to 0 (cf. Appendix \ref{app:negative_weights}). Moreover, for numerical stability, we drop the division by the normalization $D_{t-1}$ for both the weight and the value update. Ultimately, we introduce learning rates $\eta$ and $\beta$ to scale weights and values independently.

\input{Tables/nonparametric_softax_teacher_student}

We report the hyperparameters considered to produce the Weighted Softmax numbers in Figure \ref{fig:teacher-student} in Table \ref{tab:nonparametric_softmax_teacher_student}. Additionally, the dimensionality of the inputs and thus of the keys in the synthetic regression task is $d_e = 100$, $\text{Input}=100$.

\input{Tables/hyperparameters_softmax}

The hyperparameters considered to produce the Weighted Softmax numbers in Figure \ref{fig:cifar-test-acc} and \ref{fig:deep-representations-and-temperature} are reported in Table \ref{tab:softmax_cifar_hparams}. Both Figures display data from the same set of experiments. On top, for Figure \ref{fig:cifar-test-acc}/Right we also include the batch size in the sweeps and consider the values $4, 16, 32$ and $64$. For the Weighted Softmax numbers reported in Figure \ref{fig:bbb} we again follow Table \ref{tab:softmax_cifar_hparams} but only consider models with $L=0$ residual blocks. 

\subsubsection{Gauge-Invariant Weighted Softmax}
\label{app:experimental:gauge_softmax}

The Gauge-Invariant Weighted Softmax module uses the same database, initialization and prediction as the Weighted Softmax module (Appendix~\ref{app:experimental:Softmax}). Only the weight and value of each new tuple differ (Algorithm~\ref{alg:gauge_invariant_softmax_gradient_order}). As explained in Appendix~\ref{app:gauge-invariance}, we can choose the gauge offset $\vb_t$ so that the added weight $w_t = \eta\alpha\lVert \vg_t\rVert \geq 0$ is non-negative by construction, so the clipping $\max\{0,\cdot\}$ of Algorithm~\ref{alg:growing_softmax} is no longer needed. As before, we drop the division by $D_{t-1}$ in both updates. In practice we add $\epsilon = 10^{-8}$ to $\lVert \vg_t\rVert$ in the denominator for numerical stability.

\subsection{Hybrid Weighted Softmax}
\label{app:experimental:hybrid}
\begin{algorithm}
\caption{Hybrid Weighted Softmax Attention}
\label{alg:hybrid_softmax}
\begin{algorithmic}[1]
\State \textbf{Input:} Stream of query batches $\{\vq_{t,n}\}_{n=1}^{B}$, temperature $T$, growth threshold $\tau$, maximal size $N$, value-learning rate $\beta > 0$, weight-learning rate $\eta > 0$, Adam learning rate $\alpha$, $\mW_0 = \frac{1}{B}$, $\mK_0 \sim \mathcal{N}(0,\frac{1}{\sqrt{d_{\mathrm{in}}}}\mI)$, $\mV_0 \sim \mathcal{N}(0,\frac{1}{\sqrt{d_{\mathrm{out}}}}\mI)$
\State \textbf{Initialize:} $\mathcal{D}_0 \leftarrow \{(\mW_{0}, \mK_{0}, \mV_0)\}$
\For{$t = 1, 2, \ldots$}
    \For{$n = 1, \ldots, B$}
        \State Compute prediction: $\vs_{t,n} \leftarrow \frac{\sum_{(w,\vk,\tilde{\vv})\in\mathcal{D}_{t-1}} e^{\frac{1}{T\sqrt{d_{\text{in}}}}\langle \vk, \vq_{t,n} \rangle}\, \tilde{\vv}}{\sum_{(w,\vk,\tilde{\vv})\in\mathcal{D}_{t-1}} w\, e^{\frac{1}{T\sqrt{d_{\text{in}}}}\langle \vk, \vq_{t,n} \rangle}}$
        \State Compute gradient: $\vg_{t,n} \leftarrow \nabla \mathcal{L}(\vs_{t,n})$
        \State Compute inner product: $\delta_{t,n} \leftarrow \langle \vs_{t,n}, \vg_{t,n} \rangle$
    \EndFor
    \State \textbf{Select novel queries:} $\mathcal{S}_t \leftarrow \{n : \min_{(\cdot,\vk,\tilde{\vv})\in\mathcal{D}_{t-1}} \|\vq_{t,n} - \vk\|_2 > \tau\}$, keeping only the first $N - |\mathcal{D}_{t-1}|$ elements if necessary
    \State \textbf{Parametric update:} update all $(w, \vk, \tilde{\vv}) \in \mathcal{D}_{t-1}$ with one Adam step (learning rate $\alpha$) on $\frac{1}{B}\sum_{n}\mathcal{L}(\vs_{t,n})$
    \State \textbf{Update database:} $\mathcal{D}_t \leftarrow \mathcal{D}_{t-1} \cup \{(\max\{0, \eta\,\delta_{t,n}\},\; \vq_{t,n},\; -\beta\, \vg_{t,n}) : n \in \mathcal{S}_t\}$
\EndFor
\end{algorithmic}
\end{algorithm}

The Hybrid Weighted Softmax layer keeps the database $\mathcal{D}$ of the Weighted Softmax (Appendix~\ref{app:experimental:Softmax}), but differs in two ways (Algorithm~\ref{alg:hybrid_softmax}). First, all stored tuples are treated as parameters and trained by backpropagation with standard gradient descent optimization. Second, a query is only added if its Euclidean distance to the nearest stored key exceeds the growth threshold $\tau$. The new tuple follows the Weighted Softmax rule of Algorithm~\ref{alg:growing_softmax}, with $\delta_{t,n}$ clipped to $[0,\infty)$ for stability. $\mathcal{D}_0$ consists of $B$ tuples with zero keys and values and weight $\frac{1}{B}$. Because queries are RMS-normalized, $\|\vq\|_2 = \sqrt{d_{\text{in}}} \approx 55.4$, which sets the scale of $\tau$. We define capacity as the total number of stored tuples after training, summed over all layers and divided by the maximum $4N$.
\begin{table}[t]
\centering
\caption{Hyperparameters for the Hybrid Growing Weighted Softmax on CIFAR-10 (Figure~\ref{fig:sub-linear-growth}).}
\label{tab:hybrid_hparams}
\begin{tabular}{@{}ll@{}}
\toprule
\textbf{Hyperparameter} & \textbf{Value} \\
\midrule
Model                        & $L=3$ residual blocks + readout layer, no nonlinearity \\
Embedding dimension $d_e$    & $3072$ (flattened image, no input embedding) \\
Normalization                & RMSNorm (no learned scale) before each layer and the readout \\
Maximal width per layer $N$  & $50016$ \\
Growth threshold $\tau$      & $[35, 40, 45, 50, 55, 60]$ \\
Value learning rate $\beta$ & $[10, 30, 60]$ \\
Weight learning rate $\eta$  & $1.0$ \\
Softmax temperature $T$             & $[1.0, 5.0, 10.0]$ \\
Adam learning rate $\alpha$  & $[0.0003, 0.001, 0.003]$ \\
Batch size                   & $16$ \\
Epochs                       & $1$ \\
\bottomrule
\end{tabular}
\end{table}

Table~\ref{tab:hybrid_hparams} lists the hyperparameters for Figure~\ref{fig:sub-linear-growth}.

\subsection{Parametric Softmax}
\label{app:experimental:BBB}

Here we describe the Parametric Softmax baseline that is referenced in Figures \ref{fig:teacher-student} and \ref{fig:bbb}. The motivation for this baseline is to compare growing a (weighted) softmax kernel-based module with learning a fixed size key-value parameterization for a softmax kernel where all keys and values are treated as parameters i.e., are optimized jointly to minimize the loss.

Unlike the growing module, this baseline uses a static architecture with fixed size parameters (keys, values and weights), pre-allocated with enough capacity to match the size of a fully grown database from the RCDL case from the beginning. We obtain predictions $\vs_{t-1}$ applying a softmax to this huge database, but instead of growing the database step-wise, update the entire key-value parameterization by gradient descent at every timestep.  Essentially, this baseline can be seen as wide MLP with a softmax nonlinearity. 

\input{Tables/hyperparameters_bbb}

We report the hyperparameters considered to produce Parametric Softmax numbers in CIFAR-10 experiments (cf. Figure \ref{fig:bbb}) in Table \ref{tab:bbb_cifar_hparams}.

\input{Tables/parametric_softmax_teacher_student}

For the synthetic Teacher-Student regression task presented in Figure \ref{fig:teacher-student}, we adapt the "database size" of our fixed parametric representations to shapes $(10240000, d_{\text{in}})$ for $\mW_{\text{Key}}$ and $(10240000, d_{\text{out}})$ for $\mW_{\text{Value}}$. We report the remaining hyperparameters for this result in Table \ref{tab:parametric_softmax_teacher_student}.  We find that training the weights for a weighted version of softmax leads to severe training instabilities in this parametric setup and hence report the numbers for an unweighted softmax variant.

\subsection{RBF}
\label{app:experimental:RBF}

\begin{algorithm}
\caption{Growing RBF}
\label{alg:growing_rbf}
\begin{algorithmic}[1]
\State \textbf{Input:} Stream of query batches $\{\vq_{t,n}\}_{n=1}^{B}$, temperature $T$, value-learning rate $\eta > 0$
\State \textbf{Initialize:} $\mathcal{D}_0 \leftarrow \emptyset$
\For{$t = 1, 2, \ldots$}
    \For{$n = 1, \ldots, B$}
        \State Compute prediction: $\vs_{t,n} \leftarrow \sum_{(\vk,\vv)\in\mathcal{D}_{t-1}} \exp\!\left(-\frac{\|\vq_{t,n}-\vk\|_2^2}{2T\sqrt{d_{\text{in}}}}\right) \vv$
        \State Compute gradient: $\vg_{t,n} \leftarrow \nabla \mathcal{L}(\vs_{t,n})$
    \EndFor
    \State \textbf{Update database:} $\mathcal{D}_t \leftarrow \mathcal{D}_{t-1} \cup \{(\vq_{t,n},\; -\eta\, \vg_{t,n}) : n = 1,\ldots,B\}$
\EndFor
\end{algorithmic}
\end{algorithm}

We evaluate RBF kernel-based RCDL models on image classification tasks, using the classic CIFAR-10 and CIFAR-100 benchmarks. For these experiments, we instantiate the blocks of the ResNet-MLP backbone architecture, described in Appendix \ref{app:model}, with RBF RCDL modules. In all RBF kernel-based RCDL modules, we predict and grow the database according to Algorithm \ref{alg:growing_rbf}.

\input{Tables/hyperparameters_rbf}

We report the hyperparameters considered to produce the RBF numbers in Figure \ref{fig:cifar-test-acc} in Table \ref{tab:rbf_cifar_hparams}. On top, for Figure \ref{fig:cifar-test-acc}/Right we also include the batch size in the sweeps and consider the values $4, 16, 32$ and $64$.

\subsection{Mesa (FOOF)}
\label{app:experimental:Mesa}
We evaluate the performance of the Mesa update rule introduced in Section \ref{sec:mesa} on CIFAR-10 and CIFAR-100 benchmarks. For these experiments, we instantiate the blocks of the ResNet-MLP
backbone architecture, described in Appendix \ref{app:model}, with vanilla dense linear layers. We train the parameters of the resulting network by applying the Mesa weight update detailed in Section \ref{sec:mesa}.

\begin{table}[h]
    \centering
    \caption{Best Average Accuracy on CIFAR-10 and CIFAR-100 across different numbers of steps for the newton iteration algorithm for approximating the inverse. We find that generalization performance on CIFAR saturates with $10$ iterations.}
    \label{tab:mesa_newton_steps}
    \begin{tabular}{lccc}
        \toprule
        & \multicolumn{3}{c}{\textbf{Mesa newton-approximation steps}} \\
        \cmidrule(l){2-4}
        \textbf{Dataset} & \textbf{5} & \textbf{10} & \textbf{Inf.} \\
        \midrule
        CIFAR-10 & 0.6914 & 0.7130 & 0.7124  \\
        CIFAR-100 & 0.4149 & 0.4323 & 0.4319  \\
        \bottomrule
    \end{tabular}
\end{table}

\input{Tables/hyperparameters_mesa}

We report the hyperparameters used to produce the numbers for the Mesa update rule in Figure \ref{fig:cifar-test-acc} in Table \ref{tab:mesa_cifar_hparams}. We implement the computation of the matrix inverse (cf. Appendix \ref{app:mesa_foof}) using the Newton-Schulz Iteration. We report the effect of iterations for this algorithm in Table \ref{tab:mesa_newton_steps}.

\subsection{Delta}
\label{app:experimental:DeltaNet}

Lastly, we evaluate the performance of the Delta rule variant introduced in Section \ref{sec:mesa} on the CIFAR-10 and CIFAR-100 benchmark. For these experiments, we instantiate the blocks of the ResNet-MLP backbone architecture, described in Appendix \ref{app:model}, with vanilla dense linear layers. We train the parameters of the resulting network by applying the Delta weight update detailed in Section \ref{sec:mesa}.

\input{Tables/hyperparameters_delta_main}

We report the hyperparameters used to produce the numbers for the Delta update rule in Figure \ref{fig:cifar-test-acc} in Table \ref{tab:hyperparameters_delta_main}. On top, for Figure \ref{fig:cifar-test-acc}/Right we also include the batch size in the sweeps and consider the values $4, 16, 32$ and $64$.

\subsubsection{Additional Experimental Results: Extended training for deep networks with Delta regularization}

\label{app:additionalDelta}
\begin{figure}[ht]
\begin{center}
\centerline{\includegraphics[width=\columnwidth]{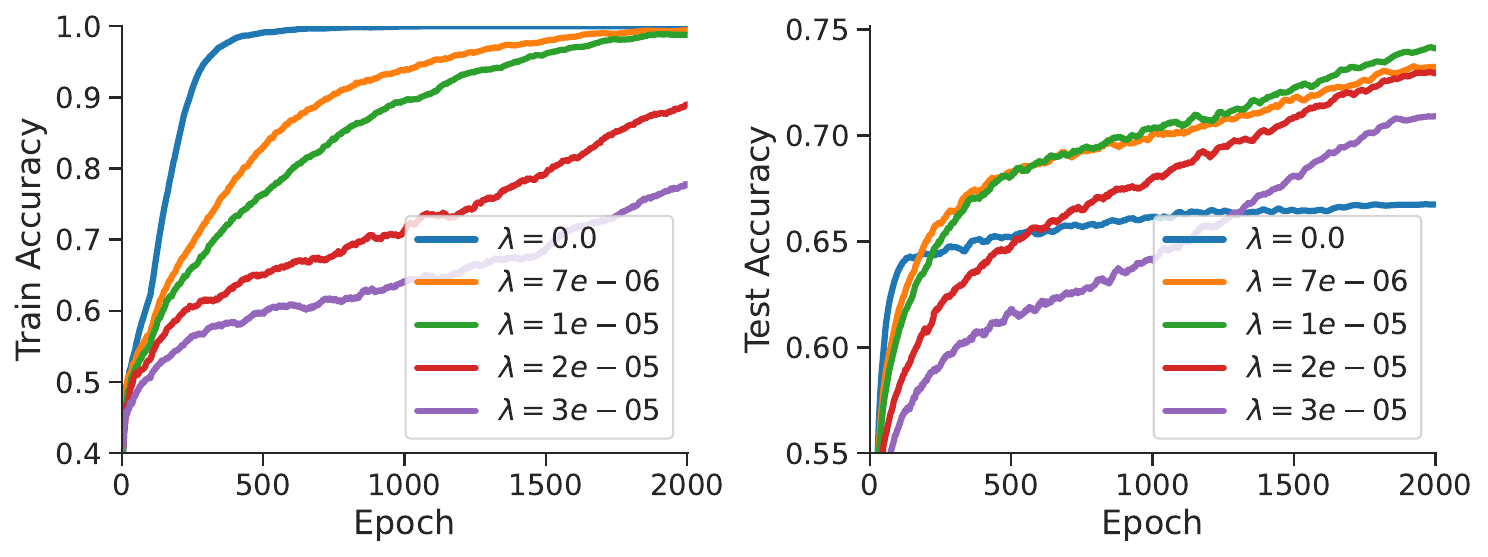}}
\caption{Training and test accuracy for a 10-layer residual MLP trained with SGD under varying DeltaNet decay rate $\lambda$. While $\lambda=0$ achieves near-perfect training accuracy quickly, test accuracy plateaus at around 67\%. Modest DeltaNet-like decay rates significantly improve generalization, as indicated by the reduced train-test gap, which demonstrates the regularizing effect of the update rule. Curves are smoothed using a running average.}
\label{fig:Delta}
\end{center}
\end{figure}

\begin{figure}[ht]
\begin{center}
\centerline{\includegraphics[width=\columnwidth]{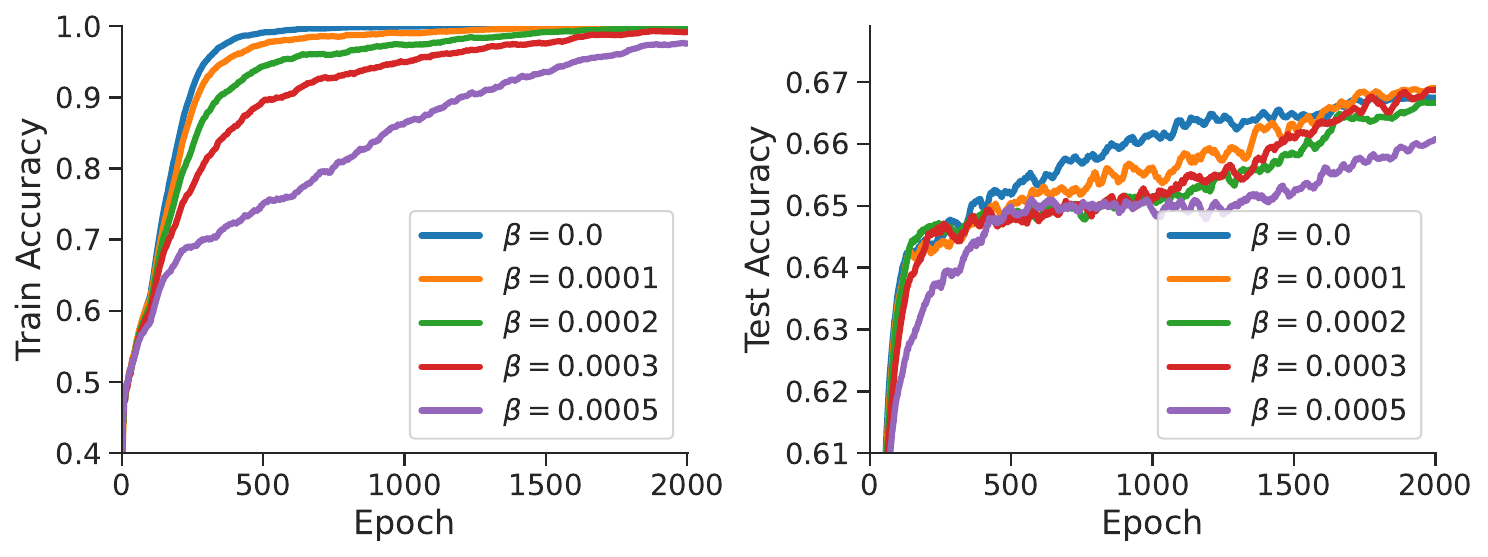}}
\caption{Training and test accuracy for a 10-layer residual MLP trained with SGD under varying multiplicative decay rate $\beta$. Similarly to DeltaNet-like decay the training accuracy increases much slower, however, generalization improves only marginally. Curves are smoothed using a running average.}
\label{fig:weight_decay}
\end{center}
\end{figure}

Here, we present additional experimental results to evaluate the DeltaNet-like gradient descent update, which were obtained in a different experimental setting compared to the results in Section~\ref{sec:exp}, to better assess the regularizing effect for deep neural networks.
Specifically, we train a 10-layer ResNet MLP on Cifar-10 for 2000 epochs. Our architecture downprojects the preprocessed and flattened input to a hidden size of 512, and uses 2-layer MLPs with 4x expansion ratio, GELU activation \citep{hendrycks_gaussian_2016} and a pre-block Layer norm \citep{ba2016layer}.
We train with a batch size of 64 and implement the DeltaNet update in the batch setting by applying all decays within a batch simultaneously, that is, multiplying the linear layer by $\mI - \lambda \mX_t \mX_t^\top$ before applying the gradient update, where $\mX_t \in \mathbb{R}^{ d_{\text{in}} \times B}$ denotes the stacked input activations from batch $t$. In contrast to the derivation in Section~\ref{sec:mesa}, we reparametrize the update rule to align the decay rate with the learning rate scheduling (analogous to standard weight decay). Specifically, we update the weights as
\begin{align}
    \mW_t =  \mW_{t-1}(\mI - \lambda \eta_t \mX_t \mX_t^\top) - \eta_t \mG_{\mY_t} \mX_t^\top,
\end{align}
where $\mG_{\mY_t}$ contains the stacked output gradients, $\lambda$ is the Delta rule decay rate and $\eta_t$ is the learning rate, for which we use cosine annealing with linear warmup. A more detailed description of the experimental setup can be found in Table~\ref{tab:delta_hyperparams}.

We compare plain SGD, SGD with weight decay, and SGD with Delta rule decay when training for $2000$ epochs. Note that this is significantly longer than in other experiments (with a significantly deeper model), as it is computationally feasible and allows us to better understand the effect of regularization. As shown in Fig.~\ref{fig:Delta}, adding a DeltaNet-like decay mechanism significantly improves the final test accuracy (74\% vs. 67\%), far exceeding that of multiplicative weight decay (Fig.~\ref{fig:weight_decay}). While both mechanisms lead to a much slower increase in train accuracy, only the DeltaNet-like decay achieves superior generalization. We also tested combining both decay mechanisms (as in Gated DeltaNet), but this yielded no additional improvement.

Additionally, we experimented with normalized DeltaNet updates as is standard for sequence modeling, i.e., using an input-dependent write strength of $\lambda \eta /\|\vx\|_2^2$. However, this slightly degraded performance.  A likely explanation is that faithfully following the DeltaNet derivation would require scaling the learning rate accordingly; doing so, however, leads to highly unstable optimization, since it makes the (effective) per-sample learning rate dependent on the activation norm.

\input{Tables/hyperparameters_delta}

\subsection{MLP Baselines}
\label{app:experimental:Baselines}
We complement the RCDL modules discussed in this work with strong MLP baselines on image classification tasks, using the classic CIFAR-10 and CIFAR-100 benchmarks. For these experiments, we instantiate the blocks of the ResNet-MLP backbone architecture, described in Appendix \ref{app:model}, with vanilla dense linear layers. We train the parameters of the resulting network with either SGD \cite{robbins1951stochastic}, RMSProp \cite{hinton2012neural}, Adam \cite{kingma2014adam}, AdamW \cite{LoshchilovH19}, and Muon \cite{jordan2024muon}.

\input{Tables/hyperparameters_baselines}

We report the hyperparameters used to produce the numbers for the respective optimizer in Figure \ref{fig:cifar-test-acc} in Table \ref{tab:baselines_cifar_hparams}. On top, for Figure \ref{fig:cifar-test-acc}/Right we also include the batch size in the sweeps and consider the values $4, 16, 32$ and $64$.

\input{Tables/hyperparameters_mlp_teacher_student}
We report the hyperparameters used to produce the numbers for the reference MLP in the Teacher-Student setup in Figure \ref{fig:teacher-student} in Table \ref{tab:mlp_teacher_student}.

\subsection{Nonparametric Baselines}\label{app:experimental:nonparam}
In Figure \ref{fig:cifar-test-acc}/Right we complement the RCDL methods with two nonparametric baselines: A kernel ridge regression (least squares (LS-)SVM) on the 10-dimensional one-hot targets and a k-Nearest Neighbour (kNN) implementation.

\subsubsection{Kernel ridge regression (least squares (LS-)SVM)}

For the LS-SVM we chose the RBF-kernel $k(\vx, \vx') = \exp(- \gamma \lVert \vx - \vx'\rVert_2^2)$ inducing the Gram matrix $\mK \in \mathbb{R}^{n \times n}$, with $K_{i,j} = k(\vx_i, \vx_j)$, where $\vx_i, \vx_j$ are the channel-wise normalized RGB images, flattened into $3072$ dimensional vectors. We parametrize the bandwidth parameter following the median heuristic $\gamma = \frac{\gamma_{\text{mult}}}{\text{median}_{i,j}\lVert \vx_i - \vx_j\rVert_2^2}$ and approximate the sample median on all pairwise distances of a subsample of $2048$ images. With the one-hot targets $\mY \in \mathbb{R}^{n \times 10}$ we invoke a standard solver to compute $\bm{\alpha}=(\mK + \lambda \mI_n)^{-1}\mY \in \mathbb{R}^{n \times 10}$ and evaluate $f(\vx) = \sum_{i=1}^n \alpha_i k(\vx, \vx_i) \in \mathbb{R}^{10}$ on the test samples $\vx$ (predicting the argmax class).

\input{Tables/hyperparameters_lssvm}

We report the considered hyperparameter ranges for producing the data-point displayed in Figure \ref{fig:cifar-test-acc}/Right in Table \ref{tab:svm_cifar_hparams}.

\subsubsection{k-Nearest Neighbors (kNN)}
We implement a kNN search using similarity score $s(\vx, \vx_i)$ on the channel-wise normalized and flattened test image and respective database entry $\vx, \vx_i \in \mathbb{R}^{3072}$. We consider two similarity scores
\begin{equation*}
    s(\vx, \vx_i) = 
\begin{cases} 
  -\lVert \vx - \vx_i \rVert_2^2 & \text{euclidean} \\ 
  \vx^\top \vx_i & \text{unnormalised dot product.} 
\end{cases}
\end{equation*}
Denoting $\mathcal{N}_{k}(\vx)$ as the indices of the top-k neighbors (according to the similarity score) we then weight the labels of these entries either uniformly or proportional to the respective similarity score leading to one of the following weights
\begin{equation*}
    w_i = 
\begin{cases} 
  1 & \text{uniform (regardless of similarity score)} \\ 
  \frac{1}{\lVert \vx - \vx_i \rVert_2 + 10^{-5}} & \text{proportional, euclidean (inverse distance)} \\ 
  \frac{\exp(\vx^\top \vx_i)}{\sum_{j \in N_k} \exp(\vx^\top \vx_j)} & \text{proportional, dot (softmax over top-k),} 
\end{cases}
\end{equation*}
for $i \in \mathcal{N}_k(\vx)$. Using these weights, the prediction for an input $\vx$ is given by taking the argmax over 
\begin{equation*}
    \frac{\sum_{i \in \mathcal{N}_k(\vx)} w_i 1[y_i = c]}{\sum_{i \in \mathcal{N}_k(\vx)} w_i}.
\end{equation*}

To obtain the number reported in Figure \ref{fig:cifar-test-acc}/Right we sweep the grid of the two similarity metrics, the two weighting mechanisms, and $k \in \{5, 10, 15, 20, 25\}$.

\subsection{Vision Transformer Experiments}\label{app:experimental:vit}
We investigate the performance of RCDL modules beyond ResNet architectures in classic vision Transformers (ViTs). We employ pre-LayerNorm transformer blocks consisting of standard softmax-attention and feed-forward networks (FFNs) and a linear readout on the CLS token. In the Weighted Softmax ViT, we replace each block's FFN with a single Growing Weighted Softmax Module (Algorithm~\ref{alg:growing_softmax}), all other components of the model are trained with standard parametric learning. Both are trained for a single epoch on CIFAR-10 (Appendix~\ref{app:data_image}), with hyperparameters in Tables~\ref{tab:vit_standard_cifar_hparams} and~\ref{tab:vit_softmax_cifar_hparams}.
\input{Tables/hyperparameters_vit_standard}
\input{Tables/hyperparameters_vit_softmax}

\subsection{Computational considerations}
\label{app:compute_resource}
Growing Weighted Softmax, RBF-kernel based RCDL, and Parametric Weighted Softmax modules have a significant memory footprint. In the reported CIFAR-\{10,100\} experiments, where each epoch consists of $5\times10^4$ training examples, we add $2.5\times10^6$ tuples $(w,\vv,\vk)$ (or $(\vv,\vk)$, respectively, for RBF-kernel based modules) to $\mathcal{D}$ in growing modules. For a hidden layer with $d_{\text{in}} = d_{\text{out}} = 3072$, implemented in $\text{float}32$, as used in CIFAR experiments, this results in a total memory footprint of $61.5$GB (sum of memory footprint for both keys, values and weights). To realize the training we rely on parallelization and shard this database across $4$ TPUs for $50$-epoch CIFAR experiments. For the teacher-student regression experiments, training for $200$ epochs adds $10^7$ tuples to $\mathcal{D}$ in growing modules. For a hidden layer with $d_{\text{in}} = d_{\text{out}} = 100$, this results in a minimal memory footprint of approximately $8$GB. As a result, this task does not require parallelization or database sharding and can be efficiently trained on a single device.

However, note that the update is relatively cheap in growing as we do not have to compute gradients for these large databases. This is in contrast to Parametric Weighted Softmax modules, where we furthermore have to consider the additional footprint of the equally sized gradients, and if using advanced optimizers such as Adam \citep{kingma2014adam}, first and second moments, making training significantly more expensive and slower than by learning the Weighted Softmax module by growing.

\section{Further Experimental Results}
\label{app:weighted_softmax_exp}

\subsection{Training loss on CIFAR-100}
Complementary to Figure \ref{fig:cifar-test-acc}/Middle where we report the train loss on CIFAR-10, we here report the respective numbers for CIFAR-100 in Figure \ref{fig:cifar-test-acc-cifar-100}.

\begin{figure}[t]

    \centering

    \setlength{\tabcolsep}{4pt} 

    \begin{tabular}{@{} c  @{}}

        \includegraphics[width=0.24\textwidth]{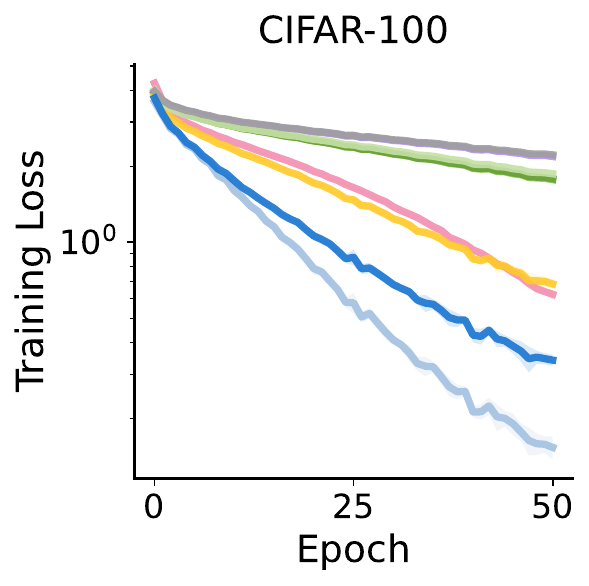}

    \end{tabular}
    \includegraphics[width=\textwidth]{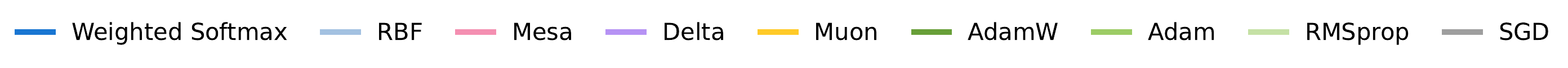}
    \caption{Training loss for Weighted Softmax \& RBF  decreases much faster than that of the other models.}
    \label{fig:cifar-test-acc-cifar-100}

    \vspace{-7mm}

\end{figure}

\subsection{Gauge-Invariant Weighted Softmax}
\label{app:gauge_comparison}
We compare the Gauge-Invariant Weighted Softmax (Algorithm~\ref{alg:gauge_invariant_softmax_gradient_order}) with the standard Weighted Softmax and RBF in the single-epoch setting of Fig.~\ref{fig:cifar-test-acc}/Right: non-augmented CIFAR-10, batch size $16$, $L=2$ residual blocks, $d_e=3072$, and RMSNorm without learned scale. For Weighted Softmax and RBF we use the best configurations from the respective sweeps: Weighted Softmax with $T=6$, $\eta=0.003$, $\beta=3$; RBF with temperature $20$ and learning rate $10$. For the Gauge-Invariant variant we use $T=5$, $\alpha=10$, $\eta=0.03$, $\beta=100$. Fig.~\ref{fig:gauge_wsm_rbf} shows mean and standard deviation over seeds. All three methods reach similar final test accuracy (Weighted Softmax $59.4\pm0.4\%$, RBF $58.8\pm0.2\%$, Gauge-Invariant $58.4\pm0.4\%$). Guaranteeing non-negative weights by the choice of gauge, rather than by clipping, therefore performs comparably and removes the need for the clipping heuristic.
Given this similar behavior, we decided for the results in the main text of the paper to only analyze standard weighted softmax RCDL layers.  
\begin{figure}[t]

    \centering

    \setlength{\tabcolsep}{4pt} 

    \includegraphics[width=0.45\textwidth]{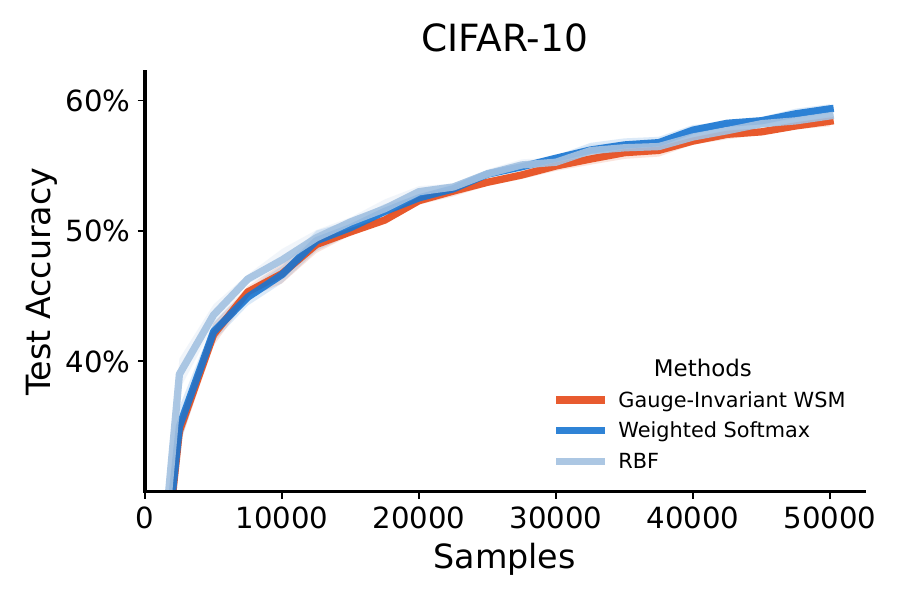}

    \caption{Test accuracy of non-parametric methods (Weighted Softmax, Gauge-Invariant WSM \& RBF) as a function of the number of samples seen during training.}
    \label{fig:gauge_wsm_rbf}

\end{figure}

\section{Further Related Work and Discussions}
\label{app:related_work}

Here we provide further related work and discussions. At a high-level, retrieval-centric deep learning (RCDL) also shares conceptual similarities with existing ``differentiable memory models'' such as Differentiable Neural Computer \citep{graves2014neural,graves2016hybrid}, Memory Networks \citep{weston2014memory,SukhbaatarSWF15} and Neural Episodic Control \citep{pritzel2017neural} (see also more recent work in the context of language models, such as \citet{berges2025memory,eyuboglu2025cartridges}); with a crucial difference that RCDL's memory retrieval operates on the training datapoints. Its growing behavior, recruiting ``resources'' (new key/value allocations) as the system processes increasingly more training data (conceptually to infinity, with practical hardware limitations) is somewhat also reminiscent of Turing machines \citep{turing1936} as is the case for the transformer at runtime \citep{trafo} (see also how \citet{behrouz2026memory} adopt a similar principle to recurrent nets).

RCDL also differs from ``deep kernel learning'' \citep{wilson2016deep} which uses a deep NN as a learnable pre-processor to produce features for a single-layer Gaussian Process (GP); as well as from a special variant of modern Hopfield-lookup layers \citep{ramsauer2021hopfield} that stores the static training data and retrieves their labels via softmax attention. This comparison also provides an interesting contrast to infinite-width deep learning theories, where NNs asymptotically reduce to a kernel machine, such as the neural tangent kernel (NTK; \citet{jacot2018neural,lee2019wide}); in that regime, a central question is whether deep NNs meaningfully learn representations or operate in a lazy training regime with fixed random features \citep{chizat2019lazy}. In contrast, RCDL learns representations in an NN that is explicitly structured as a (deep) kernel machine---an intriguing future work may be to study RCDL through the lens of NTK/GP.

The challenge of expanding model capacity has antecedents in early constructive algorithms \citep{fahlman1989cascade,platt1991resource}, which dynamically allocate new units to reduce error. In modern deep learning, this concept evolved into architectures like ``progressive NNs'' \citep{rusu2016progressive}, which instantiate new columns for novel tasks to prevent forgetting. In fact, the idea of growing networks is rather common in continual learning (see, e.g., \citet{yoonYLH18}), as well as in architecture search/optimization (see, e.g., \citet{stanley2002evolving,EvciMUPV22,butkus2026growing}).
RCDL offers a distinct path: unlike heuristics that add coarse-grained parametric modules, RCDL grows by accumulating fine-grained nonparametric key-value pairs, resulting in capacity that scales linearly with training data. Crucially, while this growth rate is fixed, the update rule itself is strictly principled: we do not add units arbitrarily, but rather derive the addition of each new key-value pair as a precise step of functional gradient descent on the global loss.

``Retrieval'' has also received recent popularity in the context of large language models (LLMs) \citep{lewis2020retrieval, min2023nonparametric} as a method to complement the deficiency of LLMs' knowledge stored in their fixed-size weight matrices.
In a sense, RCDL integrates such a process \textit{within} the network itself; store the entire internet in the NN itself, and ``compression'' occurs at inference time through attention that recombines features---a slogan here is ``\textit{store the entire internet in a growing neural net, compression can wait!}''.\looseness=-1

\end{document}

%% file: math_commands.tex
\usepackage{amsmath,amsfonts,bm}

{}
{}
{}
{}

\def\eqref#1{equation~\ref{#1}}

\def\1{\bm{1}}

\def\vb{{\bm{b}}}

\def\ve{{\bm{e}}}

\def\vg{{\bm{g}}}

\def\vk{{\bm{k}}}

\def\vq{{\bm{q}}}

\def\vs{{\bm{s}}}

\def\vv{{\bm{v}}}

\def\vx{{\bm{x}}}
\def\vy{{\bm{y}}}

\def\mC{{\bm{C}}}

\def\mG{{\bm{G}}}

\def\mI{{\bm{I}}}

\def\mK{{\bm{K}}}

\def\mP{{\bm{P}}}

\def\mV{{\bm{V}}}
\def\mW{{\bm{W}}}
\def\mX{{\bm{X}}}
\def\mY{{\bm{Y}}}

\DeclareMathAlphabet{\mathsfit}{\encodingdefault}{\sfdefault}{m}{sl}
\SetMathAlphabet{\mathsfit}{bold}{\encodingdefault}{\sfdefault}{bx}{n}

\newcommand{\softmax}{\mathrm{softmax}}

\DeclareMathOperator*{\argmin}{arg\,min}

%% file: Tables/nonparametric_softax_teacher_student.tex
\begin{table}[t]
\centering
\caption{Hyperparameters Nonparametric Growing Weighted Softmax for Teacher-Student experiments.}
\label{tab:nonparametric_softmax_teacher_student}
\begin{tabular}{@{}ll@{}}
\toprule
\textbf{Hyperparameter} & \textbf{Value} \\
\midrule
Model                           & Single Weighted Softmax layer \\
Data normalization          & RMSNorm (on input) \\
Value learning rate $\beta$                   & $[0.1, 0.3, 1.0]$ \\
Weight learning rate $\eta$                   & $[10.0, 30.0, 100.0]$ \\
Softmax temperature $T$                        & $[0.1, 1.0, 3.0, 10.0]$ \\
Momentum & $[0.6, 0.9]$\\
Batch size                       & $64$ \\
Epochs                           & $200000$ \\
\bottomrule
\end{tabular}
\end{table}

%% file: Tables/hyperparameters_softmax.tex
\begin{table}[t]
\centering
\caption{Hyperparameters Growing Weighted Softmax for experiments on CIFAR-10 and CIFAR-100.}
\label{tab:softmax_cifar_hparams}
\begin{tabular}{@{}ll@{}}
\toprule
\textbf{Hyperparameter} & \textbf{Value} \\
\midrule
Dataset                         & CIFAR-10, CIFAR-100 \\
Input                           & $32 \times 32 \times 3$ images flattened to $3072$ \\
Model                           & Residual MLP \\
Residual stream size ($d_{\text{e}}$) & $3072$ \\
\# residual blocks  ($L$)                & $[0,1,2]$ \\
Normalization in blocks         & RMSNorm (pre-norm, per block) \\
Final normalization             & RMSNorm before head \\
Classifier head                 & Weighted softmax-kernel RCDL layer \\
Weight learning rate $\eta$       & $[0.3, 1.0, 1.5, 3.0]$ \\
Value learning rate $\beta$    & $[20.0, 30.0, 40.0, 60.0]$ \\
Softmax temperature $T$          & $[1.0,3.0,10.0,15.0,30.0]$ \\
Momentum                         & $[0.0, 0.1, 0.9, 0.999]$ \\
Batch size                       & $64$ \\
Epochs                           & $50$ \\
\bottomrule
\end{tabular}
\end{table}

%% file: Tables/hyperparameters_bbb.tex
\begin{table}[t]
\centering
\caption{Hyperparameters Parametric Softmax for experiments on CIFAR-10.}
\label{tab:bbb_cifar_hparams}
\begin{tabular}{@{}ll@{}}
\toprule
\textbf{Hyperparameter} & \textbf{Value} \\
\midrule
Dataset                          & CIFAR-10 \\
Input                            & $32 \times 32 \times 3$ images flattened to $3072$ \\
Model                            & MLP \\
Normalization in blocks          & RMSNorm (pre-norm, per block) \\
$W_{\text{Key}}$ shape           & (2560000, 3072) \\
$W_{\text{Value}}$ shape         & (2560000, 10) \\
Nonlinear Activation             & Softmax \\
Learning rate $\alpha$             & $[0.003, 0.01, 0.03, 0.1]$ \\
Softmax temperature $T$          & $[0.001,0.03,0.1,0.3,1.0,3.0,10.0]$ \\
Batch size                       & $64$ \\
Epochs                           & $2000$ \\
\bottomrule
\end{tabular}
\end{table}

%% file: Tables/parametric_softmax_teacher_student.tex
\begin{table}[t]
\centering
\caption{Hyperparameters Parametric Softmax for Teacher-Student experiments.}
\label{tab:parametric_softmax_teacher_student}
\begin{tabular}{@{}ll@{}}
\toprule
\textbf{Hyperparameter} & \textbf{Value} \\
\midrule
Model                           & Single Parametric Softmax layer \\
Hidden size & $10240000$ \\
Data normalization          & RMSNorm (on input) \\
Learning rate  $\eta$                  & $[0.0003, 0.001, 0.003, 0.01, 0.03]$ \\
Temperature $T$                        & $[0.001, 0.01, 0.1, 0.3, 1.0, 3.0, 10.0, 30.0, 100.0]$ \\
Optimizer & Adam \\
Momentum & $0.85$ \\
Batch size                       & $64$ \\
Training steps                           & $200000$ \\
\bottomrule
\end{tabular}
\end{table}

%% file: Tables/hyperparameters_rbf.tex
\begin{table}[t]
\centering
\caption{Hyperparameters RBF for experiments on CIFAR-10 and CIFAR-100.}
\label{tab:rbf_cifar_hparams}
\begin{tabular}{@{}ll@{}}
\toprule
\textbf{Hyperparameter} & \textbf{Value} \\
\midrule
Dataset                         & CIFAR-10, CIFAR-100 \\
Input                           & $32 \times 32 \times 3$ images flattened to $3072$ \\
Model                           & Residual MLP \\
Residual stream size ($d_e$) & $3072$ \\
\# residual blocks  ($L$)                & $[0,1,2]$ \\
Normalization in blocks         & RMSNorm (pre-norm, per block) \\
Final normalization             & RMSNorm before head \\
Classifier head                 & Weighted RBF-kernel RCDL layer \\
Value learning rate $\eta$                   & $[0.3, 1.0, 3.0]$ \\
Temperature $T$        & $[0.5,1.0, 1.5,5.0,15.0]$ \\
Momentum                         & $[0.0, 0.1, 0.9, 0.999]$ \\
Batch size                       & $64$ \\
Epochs                           & $50$ \\
\bottomrule
\end{tabular}
\end{table}

%% file: Tables/hyperparameters_mesa.tex
\begin{table}[t]
\centering
\caption{Hyperparameters Mesa for experiments on CIFAR-10 and CIFAR-100.}
\label{tab:mesa_cifar_hparams}
\begin{tabular}{@{}ll@{}}
\toprule
\textbf{Hyperparameter} & \textbf{Value} \\
\midrule
Dataset                         & CIFAR-10, CIFAR-100 \\
Input                           & $32 \times 32 \times 3$ images flattened to $3072$ \\
Model                           & Residual MLP \\
Residual stream size ($d_e$) & $3072$ \\
\# residual blocks  ($L$)                & $[2,4]$ \\
Normalization in blocks         & RMSNorm (pre-norm, per block) \\
Final normalization             & RMSNorm before head \\
MLP activation                  & GELU \\
Classifier head                 & Linear $(3072 \rightarrow 10)$ \\
Mesa-$\gamma$                   & $[0.9, 0.99, 1.0]$ \\
Mesa-$\lambda$                   & $[0.001,0.01,0.1]$ \\
Optimizer                        & Adam \\
Learning rate   $\alpha$                 & $[0.0001,0.0003,0.001,0.003,0.01]$ \\
Momentum                         & $0.85$ \\    
Batch size                       & $64$ \\
Epochs                           & $50$ \\
\bottomrule
\end{tabular}
\end{table}

%% file: Tables/hyperparameters_delta_main.tex
\begin{table}[t]
\centering
\caption{Hyperparameters Delta for experiments on CIFAR-10 and CIFAR-100.}
\label{tab:hyperparameters_delta_main}
\begin{tabular}{@{}ll@{}}
\toprule
\textbf{Hyperparameter} & \textbf{Value} \\
\midrule
Dataset                         & CIFAR-10, CIFAR-100 \\
Input                           & $32 \times 32 \times 3$ images flattened to $3072$ \\
Model                           & Residual MLP \\
Residual stream size ($d_e$) & $3072$ \\
\# residual ($L$)               & $[2,4]$ \\
Normalization in blocks         & RMSNorm (pre-norm, per block) \\
Final normalization             & RMSNorm before head \\
MLP activation                  & GELU \\
Decay rate $\lambda$                   & $[0.0000003,0.000001,0.000003,0.00001,0.0001,0.0003]$ \\
Classifier head                 & Linear $(3072 \rightarrow 10)$ \\
Learning rate $\alpha$                   & $[0.0001,0.0003,0.001,0.003,0.01]$ \\
Batch size                       & $64$ \\
Epochs                           & $50$ \\
\bottomrule
\end{tabular}
\end{table}

%% file: Tables/hyperparameters_delta.tex
\begin{table}[t]
\centering
\caption{Hyperparameters used for DeltaNet-GD experiments on extended training on Cifar-10 with deep nets.}
\label{tab:delta_hyperparams}
\begin{tabular}{@{}ll@{}}
\toprule
\textbf{Hyperparameter} & \textbf{Value} \\
\midrule
Dataset                         & CIFAR-10 \\
Input                           & $32 \times 32 \times 3$ images flattened to $3072$ \\
Model                           & Residual MLP \\
Input projection                & Linear $(3072 \rightarrow 512)$ \\
Residual stream size ($d_e$) & $512$ \\
\# residual blocks               & $10$ \\
Normalization in blocks         & LayerNorm (pre-norm, per block) \\
Final normalization             & LayerNorm before head \\
MLP expansion                   & $4\times$ \\
MLP hidden dimension             & $2048$ \\
MLP activation                  & GELU \\
Dropout                         & $0.1$ \\
Classifier head                 & Linear $(512 \rightarrow 10)$ \\
Optimizer                        & SGD \\
Learning rate $\alpha$                    & $1\times 10^{-2}$ \\
Momentum                         & $0.0$ \\
LR schedule                      & Step-wise: 5 epoch linear warmup + cosine annealing \\
Cosine schedule                  & CosineAnnealingLR over remaining steps \\
Batch size                       & $64$ \\
Epochs                           & $2000$ \\
\bottomrule
\end{tabular}
\end{table}

%% file: Tables/hyperparameters_baselines.tex
\begin{table}[t]
\centering
\caption{Hyperparameters MLP Baselines for experiments on CIFAR-10 and CIFAR-100.}
\label{tab:baselines_cifar_hparams}
\begin{tabular}{@{}ll@{}}
\toprule
\textbf{Hyperparameter} & \textbf{Value} \\
\midrule
Dataset                         & CIFAR-10, CIFAR-100 \\
Input                           & $32 \times 32 \times 3$ images flattened to $3072$ \\
Model                           & Residual MLP \\
Residual stream size ($d_e$) & $3072$ \\
\# residual ($L$)               & $[2,4]$ \\
Normalization in blocks         & RMSNorm (pre-norm, per block) \\
Final normalization             & RMSNorm before head \\
MLP activation                  & GELU \\
Classifier head                 & Linear $(3072 \rightarrow 10)$ \\
Optimizer                        & $[\text{SGD},\text{RMSProp},\text{Adam},\text{AdamW},\text{Muon}]$ \\
Learning rate $\alpha$                   & $[0.0001,0.0003,0.001,0.003,0.01]$ \\
Weight decay (for AdamW)         & $[0.0001,0.01]$ \\
Momentum                         & $0.0$ \\    
Batch size                       & $[16,64]$ \\
Epochs                           & $50$ \\
\bottomrule
\end{tabular}
\end{table}

%% file: Tables/hyperparameters_mlp_teacher_student.tex
\begin{table}[t]
\centering
\caption{Hyperparameters MLP Baseline for Teacher-Student setup.}
\label{tab:mlp_teacher_student}
\begin{tabular}{@{}ll@{}}
\toprule
\textbf{Hyperparameter} & \textbf{Value} \\
\midrule
Model                           & Residual MLP \\
Residual stream size ($d_e$) & $[256, 512, 1024,4096,8192]$ \\
\# residual ($L$)               & $2$ \\
Normalization         & None \\
MLP activation                  & ReLU \\
Optimizer                        & SGD \\
Learning rate                    & $[0.0003,0.001,0.003,0.01, 0.03, 0.1, 0.3, 1.0, 3.0]$ \\
Momentum                         & $[0.0,0.9,0.999]$ \\    
Batch size                       & $[64, 256, 1024]$ \\
Training steps                           & $200000$ \\
\bottomrule
\end{tabular}
\end{table}

%% file: Tables/hyperparameters_lssvm.tex
\begin{table}[t]
\centering
\caption{Hyperparameters RBF kernel regression for experiments on CIFAR-10.}
\label{tab:svm_cifar_hparams}
\begin{tabular}{@{}ll@{}}
\toprule
\textbf{Hyperparameter} & \textbf{Value} \\
\midrule
Dataset                         & CIFAR-10\\
Input                           & $32 \times 32 \times 3$ images flattened to $3072$ \\
Model                           & RBF kernel \\
Classification                & $\arg \max$ on predicted output \\
Optimizer                        & Exact solver \\
Bandwidth $\gamma_{\text{mult}}$                    & $[0.25, 0.5, 1.0, 2.0, 3.0, 5.0]$ \\
Ridge regulariser $\lambda$         & $[0.001,0.01, 0.1, 0.3, 1.0]$ \\
\bottomrule
\end{tabular}
\end{table}

%% file: Tables/hyperparameters_vit_standard.tex
\begin{table}[t]
\centering
\caption{Hyperparameters Standard ViT for single-epoch experiments on CIFAR-10.}
\label{tab:vit_standard_cifar_hparams}
\begin{tabular}{@{}ll@{}}
\toprule
\textbf{Hyperparameter} & \textbf{Value} \\
\midrule
Dataset                          & CIFAR-10 \\
Input                            & $32 \times 32 \times 3$ images, $4 \times 4$ patches ($64$ patch tokens + CLS token) \\
Model                            & ViT, $2$ transformer blocks \\
Embedding dimension $d_e$        & $128$ \\
Attention heads                  & $8$ \\
Positional embedding             & learned \\
Normalization in blocks          & LayerNorm (pre-norm, per block) + final LayerNorm \\
FFN                              & Dense ($128 \to 512$) $\to$ GELU $\to$ Dense ($512 \to 128$) \\
FFN blowup factor                & $[1,2,4,8,16,32]$ \\
Readout                          & Linear layer on CLS token \\
Optimizer                        & Adam \\
Learning rate    $\alpha$                & $[0.0003, 0.001, 0.003]$ \\
Batch size                       & $64$ \\
Epochs                           & $1$ \\
Seeds                            & $5$ \\
\bottomrule
\end{tabular}
\end{table}

%% file: Tables/hyperparameters_vit_softmax.tex
\begin{table}[t]
\centering
\caption{Hyperparameters Weighted Softmax ViT for single-epoch experiments on CIFAR-10.}
\label{tab:vit_softmax_cifar_hparams}
\begin{tabular}{@{}ll@{}}
\toprule
\textbf{Hyperparameter} & \textbf{Value} \\
\midrule
Dataset                          & CIFAR-10 \\
Input                            & $32 \times 32 \times 3$ images, $4 \times 4$ patches ($64$ patch tokens + CLS token) \\
Model                            & ViT, $2$ transformer blocks, FFN replaced by growing GWS \\
Embedding dimension $d_e$        & $128$ \\
Attention heads                  & $8$ \\
Positional embedding             & learned \\
Normalization in blocks          & LayerNorm (pre-norm, per block) + final LayerNorm \\
Readout                          & Linear layer on CLS token \\
Optimizer (non-growing params.)  & Adam \\
Learning rate (Adam) $\alpha$    & $[0.0003, 0.001, 0.003]$ \\
Weight learning rate $\eta$      & $[0.1, 0.3, 1.0]$ \\
Value learning rate $\beta$     & $[10, 30]$ \\
Softmax temperature $T$          & $[0.1, 0.3, 1.0, 3.0]$ \\
Value momentum                   & $[0.0, 0.999]$ \\
Batch size                       & $64$ \\
Epochs                           & $1$ \\
Seeds                            & $5$ \\
\bottomrule
\end{tabular}
\end{table} 

%% file: references.bib
@STRING{confEMNLP={Proc. Conf. on Empirical Methods in Natural Language Processing (EMNLP)}}

@STRING{confICLR = "Int. Conf. on Learning Representations (ICLR)"}

@STRING{confNIPS = "Proc. Advances in Neural Information Processing Systems (NIPS)"}

@STRING{confNEU = "Proc. Advances in Neural Information Processing Systems (NeurIPS)"}

@STRING{confICML = "Proc. Int. Conf. on Machine Learning (ICML)"}

@article{rusu2016progressive,
  title={Progressive neural networks},
  author={Rusu, Andrei A and Rabinowitz, Neil C and Desjardins, Guillaume and Soyer, Hubert and Kirkpatrick, James and Kavukcuoglu, Koray and Pascanu, Razvan and Hadsell, Raia},
  journal={Preprint arXiv:1606.04671},
  year={2016}
}

@article{platt1991resource,
  title={A resource-allocating network for function interpolation},
  author={Platt, John},
  journal={Neural Computation},
  volume={3},
  number={2},
  pages={213--225},
  year={1991},
}

@inproceedings{fahlman1989cascade,
  title={The cascade-correlation learning architecture},
  author={Fahlman, Scott and Lebiere, Christian},
  booktitle=confNIPS,
  volume={2},
  year={1989}
}

@article{friedman2000greedy,
  author = {Friedman, Jerome H.},
  journal = {Annals of Statistics},
  pages = {1189--1232},
  title = {Greedy Function Approximation: A Gradient Boosting Machine},
  volume = 29,
  year = 2000
}

@inproceedings{mason1999boost,
  author       = {Llew Mason and
                  Jonathan Baxter and
                  Peter L. Bartlett and
                  Marcus R. Frean},
  editor       = {Sara A. Solla and
                  Todd K. Leen and
                  Klaus{-}Robert M{\"{u}}ller},
  title        = {Boosting Algorithms as Gradient Descent},
  booktitle    = confNIPS,
  year         = {1999},
}

@inproceedings{grubb2010boosted,
  author       = {Alexander Grubb and
                  J. Andrew Bagnell},
  editor       = {Johannes F{\"{u}}rnkranz and
                  Thorsten Joachims},
  title        = {Boosted Backpropagation Learning for Training Deep Modular Networks},
  booktitle    = confICML,
  year         = {2010},
}

@inproceedings{martens2015kfac,
  author       = {James Martens and
                  Roger B. Grosse},
  editor       = {Francis R. Bach and
                  David M. Blei},
  title        = {Optimizing Neural Networks with Kronecker-factored Approximate Curvature},
  booktitle    = {Proceedings of the 32nd International Conference on Machine Learning,
                  {ICML} 2015, Lille, France, 6-11 July 2015},
  series       = {{JMLR} Workshop and Conference Proceedings},
  volume       = {37},
  pages        = {2408--2417},
  publisher    = {JMLR.org},
  year         = {2015},
  url          = {http://proceedings.mlr.press/v37/martens15.html},
  bibsource    = {dblp computer science bibliography, https://dblp.org}
}

@inproceedings{benzing2022foof,
  author       = {Frederik Benzing},
  editor       = {Kamalika Chaudhuri and
                  Stefanie Jegelka and
                  Le Song and
                  Csaba Szepesv{\'{a}}ri and
                  Gang Niu and
                  Sivan Sabato},
  title        = {Gradient Descent on Neurons and its Link to Approximate Second-order
                  Optimization},
  booktitle    = confICML,
  year         = {2022},
}

@inproceedings{bahdanau2014neural,
  address =       {San Diego, {CA}, {USA}},
  author =        {Bahdanau, Dzmitry and Cho, Kyunghyun and
                   Bengio, Yoshua},
  booktitle =     {Int. Conf. on Learning Representations (ICLR)},
  month =         may,
  title =         {Neural machine translation by jointly learning to
                   align and translate},
  year =          {2015},
}

@inproceedings{katharopoulos2020transformers,
  title={Transformers are {RNN}s: Fast autoregressive transformers with linear attention},
  author={Katharopoulos, Angelos and Vyas, Apoorv and Pappas, Nikolaos and Fleuret, Fran{\c{c}}ois},
  booktitle=confICML,
  address = {Virtual only},
  month=jul,
  year={2020}
}

@inproceedings{trafo,
  address =       {Long Beach, {CA}, {USA}},
  author =        {Vaswani, Ashish and Shazeer, Noam and Parmar, Niki and
                   Uszkoreit, Jakob and Jones, Llion and Gomez, Aidan N and
                   Kaiser, {\L}ukasz and Polosukhin, Illia},
  booktitle =     confNIPS,
  month =         dec,
  pages =         {5998--6008},
  title =         {Attention is All you Need},
  year =          {2017},
}

@article{schmidhuber1992learning,
  title = {Learning to control fast-weight memories: An alternative to dynamic recurrent networks},
  author = {Schmidhuber, J{\"u}rgen},
  journal = {Neural Computation},
  volume = {4},
  number = {1},
  pages = {131--139},
  year = {1992},
}

@inproceedings{yang2024gated,
  title={Gated Delta Networks: Improving {M}amba2 with Delta Rule},
  author={Yang, Songlin and Kautz, Jan and Hatamizadeh, Ali},
  booktitle=confICLR,
    address = {Vancouver, Canada}, 
    month = apr,
  year={2025}
}

@inproceedings{
    dosovitskiy2021an,
    title={An Image is Worth 16x16 Words: Transformers for Image Recognition at Scale},
    author={Alexey Dosovitskiy and Lucas Beyer and Alexander Kolesnikov and Dirk Weissenborn and Xiaohua Zhai and Thomas Unterthiner and Mostafa Dehghani and Matthias Minderer and Georg Heigold and Sylvain Gelly and Jakob Uszkoreit and Neil Houlsby},
    booktitle=confICLR,
    address = {Virtual only},
    month=may,
    year={2021},
}

@article{shen2018efficient,
  title={Efficient Attention: Attention with Linear Complexities},
  author={Shen, Zhuoran and Zhang, Mingyuan and Zhao, Haiyu and Yi, Shuai and Li, Hongsheng},
  journal={Preprint arXiv:1812.01243},
  year={2018}
}

@inproceedings{ramsauer2021hopfield,
title={Hopfield Networks is All You Need},
author={Hubert Ramsauer and Bernhard Sch{\"a}fl and Johannes Lehner and Philipp Seidl and Michael Widrich and Lukas Gruber and Markus Holzleitner and Thomas Adler and David Kreil and Michael K Kopp and G{\"u}nter Klambauer and Johannes Brandstetter and Sepp Hochreiter},
booktitle=confICLR,
year={2021},
month=may,
address={Virtual only}
}

@inproceedings{kingma2014adam,
  title={Adam: A method for stochastic optimization},
  author={Kingma, Diederik P and Ba, Jimmy},
  booktitle=confICLR,
  month = may,
  address={San Diego, CA, USA},
  year={2015}
}

@inproceedings{schlag2021linear,
      title={Linear {T}ransformers Are Secretly Fast Weight Programmers}, 
      author={Imanol Schlag and Kazuki Irie and J\"urgen Schmidhuber},
      booktitle=confICML,
      address = {Virtual only},
      month = jul,
      year={2021}
}

@inproceedings{peng2021random,
  title={Random Feature Attention},
  author={Peng, Hao and Pappas, Nikolaos and Yogatama, Dani and Schwartz, Roy and Smith, Noah A and Kong, Lingpeng},
  booktitle=confICLR,
  address = {Virtual only},
  year={2021}
}

@inproceedings{widrow1960adaptive,
  title={Adaptive switching circuits},
  author={Widrow, Bernard and Hoff, Marcian E},
  year={1960},
  booktitle={Proc. {IRE} WESCON Convention Record},
  month = aug,
  pages = {96--104},
  address = {Los Angeles, {CA}, {USA}}
}

@article{graves2014neural,
  title={Neural turing machines},
  author={Graves, Alex and Wayne, Greg and Danihelka, Ivo},
  journal={Preprint arXiv:1410.5401},
  year={2014}
}

@article{graves2016hybrid,
  title={Hybrid computing using a neural network with dynamic external memory},
  author={Graves, Alex and Wayne, Greg and Reynolds, Malcolm and Harley, Tim and Danihelka, Ivo and Grabska-Barwi{\'n}ska, Agnieszka and Colmenarejo, Sergio G{\'o}mez and Grefenstette, Edward and Ramalho, Tiago and Agapiou, John and others},
  journal={Nature},
  volume={538},
  number={7626},
  pages={471--476},
  year={2016},
}

@article{ba2016layer,
  author =        {Ba, Jimmy  and Kiros, Jamie Ryan and
                   Hinton, Geoffrey E},
  journal =       {Preprint arXiv:1607.06450},
  title =         {Layer Normalization},
  year =          {2016},
}

@article{rmsprop,
  author =        {Tieleman, Tijmen and Hinton, Geoffrey},
  title =         {Lecture 6.5-
{RMSP}rop: Divide the gradient by a running average of
its recent magnitude},
  journal = {{COURSERA}: Neural Networks for
Machine Learning, 4},
  year =          {2012},
}

@article{hendrycks_gaussian_2016,
	title = {Gaussian {Error} {Linear} {Units} ({GELUs})},
	journal = {arXiv preprint arXiv:1606.08415},
	author = {Hendrycks, Dan and Gimpel, Kevin},
	year = {2016},
}

@inproceedings{irie2021going,
      title={Going Beyond Linear Transformers with Recurrent Fast Weight Programmers}, 
      author={Kazuki Irie and Imanol Schlag and R\'obert Csord\'as and J\"urgen Schmidhuber},
      booktitle=confNEU,
      address = {Virtual only},
      month=dec,
      year={2021}
}

@inproceedings{irie2022dual,
  title={The Dual Form of Neural Networks Revisited: Connecting Test Time Predictions to Training Patterns via Spotlights of Attention},
  author={Irie, Kazuki and Csord{\'a}s, R{\'o}bert and Schmidhuber, J{\"u}rgen},
  booktitle=confICML,
  address = {Baltimore, {MD}, {USA}},
  month = jul,
  year={2022}
}

@article{aizerman1964theoretical,
  title={Theoretical foundations of potential function method in pattern recognition},
  author={Aizerman, Mark A. and Braverman, Emmanuil M. and Rozonoer, Lev I.},
  journal={Automation and Remote Control},
  volume={25},
  number={6},
  pages={917--936},
  year={1964}
}

@inproceedings{IrieSCS22,
  author    = {Kazuki Irie and
               Imanol Schlag and
               R{\'{o}}bert Csord{\'{a}}s and
               J{\"{u}}rgen Schmidhuber},
  title     = {A Modern Self-Referential Weight Matrix That Learns to Modify Itself},
  booktitle = confICML,
  address = {Baltimore, {MA}, {USA}},
  month = jul,
  pages     = {9660--9677},
  year      = {2022},
}

@inproceedings{irie2022neural,
  title={Neural Differential Equations for Learning to Program Neural Nets Through Continuous Learning Rules},
  author={Irie, Kazuki and Faccio, Francesco and Schmidhuber, J{\"u}rgen},
  booktitle=confNEU,
  address = {New Orleans, {LA}, {USA}},
  month = dec,
  year={2022}
}

@MASTERSTHESIS{krizhevsky,
  author = {Alex Krizhevsky},
  title = {Learning multiple layers of features from tiny images},
  school = {Computer Science Department, University of Toronto},
  year = {2009}
}

@article{robbins1951stochastic,
  title={A Stochastic Approximation Method},
  author={Robbins, Herbert and Monro, Sutton},
  journal={The Annals of Mathematical Statistics},
  volume={22},
  number={3},
  pages={400--407},
  year={1951}
}

@misc{hinton2012neural,
  title={Neural Networks for Machine Learning (Lecture 6e)},
  author={Hinton, Geoffrey and Srivastava, Nitish and Swersky, Kevin},
  howpublished={Coursera, video lectures},
  year={2012}
}

@inproceedings{von2022transformers,
  author       = {Johannes von Oswald and
                  Eyvind Niklasson and
                  Ettore Randazzo and
                  Jo{\~{a}}o Sacramento and
                  Alexander Mordvintsev and
                  Andrey Zhmoginov and
                  Max Vladymyrov},
  title        = {Transformers Learn In-Context by Gradient Descent},
  booktitle    = confICML,
  month = jul,
  address = {Honolulu, HI, {USA}},
  year         = {2023},
}

@article{von2023uncovering,
  title={Uncovering mesa-optimization algorithms in {T}ransformers},
  author={von Oswald, Johannes and Niklasson, Eyvind and Schlegel, Maximilian and Kobayashi, Seijin and Zucchet, Nicolas and Scherrer, Nino and Miller, Nolan and Sandler, Mark and Vladymyrov, Max and Pascanu, Razvan and others},
  journal={Preprint arXiv:2309.05858},
  year={2023}
}

@inproceedings{siems2025deltaproduct,
  title={DeltaProduct: Improving State-Tracking in Linear {RNN}s via Householder Products},
  author={Siems, Julien and Carstensen, Timur and Zela, Arber and Hutter, Frank and Pontil, Massimiliano and Grazzi, Riccardo},
  booktitle=confNEU,
  month = dec,
  address = {San Diego, CA, USA},
  year={2025}
}

@inproceedings{yang2024parallelizing,
  title={Parallelizing Linear Transformers with the Delta Rule over Sequence Length},
  author={Yang, Songlin and Wang, Bailin and Zhang, Yu and Shen, Yikang and Kim, Yoon},
  booktitle=confNEU,
address = {Vancouver, Canada},
month = dec,
  year={2024}
}

@inproceedings{YangWSPK24,
  author       = {Songlin Yang and
                  Bailin Wang and
                  Yikang Shen and
                  Rameswar Panda and
                  Yoon Kim},
  title        = {Gated Linear Attention Transformers with Hardware-Efficient Training},
  booktitle    = confICML,
  address = {Vienna, Austria},
  month = jul,
  year         = {2024},
}

@article{gershman2025key,
  title={Key-value memory in the brain},
  author={Gershman, Samuel J and Fiete, Ila and Irie, Kazuki},
  journal={Neuron},
volume={113},
  year={2025}
}

@inproceedings{von2025mesanet,
  title={Mesa{N}et: Sequence Modeling by Locally Optimal Test-Time Training},
  author={von Oswald, Johannes and Scherrer, Nino and Kobayashi, Seijin and Versari, Luca and Yang, Songlin and Schlegel, Maximilian and Maile, Kaitlin and Schimpf, Yanick and Sieberling, Oliver and Meulemans, Alexander and others},
  booktitle=confICLR,
  address = {Rio de Janeiro, Brazil},
  year={2026}
}

@article{behrouz2025atlas,
  title={Atlas: Learning to optimally memorize the context at test time},
  author={Behrouz, Ali and Li, Zeman and Kacham, Praneeth and Daliri, Majid and Deng, Yuan and Zhong, Peilin and Razaviyayn, Meisam and Mirrokni, Vahab},
  journal={Preprint arXiv:2505.23735},
  year={2025}
}

@inproceedings{sun2024learning,
  title={Learning to (learn at test time): {RNN}s with expressive hidden states},
  author={Sun, Yu and Li, Xinhao and Dalal, Karan and Xu, Jiarui and Vikram, Arjun and Zhang, Genghan and Dubois, Yann and Chen, Xinlei and Wang, Xiaolong and Koyejo, Sanmi and others},
  booktitle=confICML,
  address = {Vancouver, Canada},
  month = jul,
  year={2025}
}

@article{wang2025test,
  title={Test-time regression: a unifying framework for designing sequence models with associative memory},
  author={Wang, Ke Alexander and Shi, Jiaxin and Fox, Emily B},
  journal={Preprint arXiv:2501.12352},
  year={2025}
}

@inproceedings{behrouz2024titans,
  title={Titans: Learning to memorize at test time},
  author={Behrouz, Ali and Zhong, Peilin and Mirrokni, Vahab},
  booktitle=confNEU,
  address = {San Diego, CA, USA},
  year={2025}
}

@inproceedings{behrouz2025nested,
  title={Nested learning: The illusion of deep learning architectures},
  author={Behrouz, Ali and Razaviyayn, Meisam and Zhong, Peilin and Mirrokni, Vahab},
  booktitle=confNEU,
  address = {San Diego, CA, USA},
  month = dec,
  year={2025}
}

@article{aronszajn1950theory,
  title={Theory of reproducing kernels},
  author={Aronszajn, Nachman},
  journal={Transactions of the American mathematical society},
  volume={68},
  number={3},
  pages={337--404},
  year={1950}
}

@book{scholkopf2002learning,
  title={Learning with kernels: support vector machines, regularization, optimization, and beyond},
  author={Sch{\"o}lkopf, Bernhard and Smola, Alexander J},
  year={2002},
  publisher={MIT press}
}

@inproceedings{min2023nonparametric,
  title={Nonparametric masked language modeling},
  author={Min, Sewon and Shi, Weijia and Lewis, Mike and Chen, Xilun and Yih, Wen-tau and Hajishirzi, Hannaneh and Zettlemoyer, Luke},
  booktitle={Findings of the Association for Computational Linguistics},
  address = {Toronto, Canada},
  month = jul,
  pages={2097--2118},
  year={2023}
}

@inproceedings{wilson2016deep,
  title={Deep kernel learning},
  author={Wilson, Andrew Gordon and Hu, Zhiting and Salakhutdinov, Ruslan and Xing, Eric P},
  booktitle={Proc. Int. Conf. on Artificial Intelligence and Statistics (AISTATS)},
  address = {Cadiz, Spain},
  month = may,
  pages={370--378},
  year={2016},
}

@inproceedings{BoserGV92,
  author    = {Bernhard E. Boser and
               Isabelle Guyon and
               Vladimir Vapnik},
  title     = {A Training Algorithm for Optimal Margin Classifiers},
  booktitle = {Proc. Annual {ACM} Conference on Computational
               Learning Theory ({COLT})},
  address = {Pittsburgh, PA, USA},
  month = jul,
  pages     = {144--152},
  publisher = {{ACM}},
  year      = {1992},
}

@article{burges1998tutorial,
  title={A tutorial on support vector machines for pattern recognition},
  author={Burges, Christopher JC},
  journal={Data mining and knowledge discovery},
  volume={2},
  number={2},
  pages={121--167},
  year={1998},
}

@inproceedings{pritzel2017neural,
  title={Neural episodic control},
  author={Pritzel, Alexander and Uria, Benigno and Srinivasan, Sriram and Badia, Adria Puigdomenech and Vinyals, Oriol and Hassabis, Demis and Wierstra, Daan and Blundell, Charles},
  booktitle=confICML,
  pages={2827--2836},
  year={2017},
  address = {Lille, France},
  month = aug,
}

@article{weston2014memory,
  title={Memory networks},
  author={Weston, Jason and Chopra, Sumit and Bordes, Antoine},
  journal={Preprint arXiv:1410.3916},
  year={2014}
}

@inproceedings{SukhbaatarSWF15,
  author    = {Sainbayar Sukhbaatar and
               Arthur Szlam and
               Jason Weston and
               Rob Fergus},
  title     = {End-To-End Memory Networks},
  booktitle = confNIPS,
  address = {Montr\'eal, Canada},
  month = dec,
  pages     = {2440--2448},
  year      = {2015},
}

@inproceedings{jacot2018neural,
  title={Neural tangent kernel: Convergence and generalization in neural networks},
  author={Jacot, Arthur and Gabriel, Franck and Hongler, Cl{\'e}ment},
  booktitle=confNEU,
  address = {Montréal, Canada},
  month = dec,
  year={2018}
}

@inproceedings{chizat2019lazy,
  title={On lazy training in differentiable programming},
  author={Chizat, Lenaic and Oyallon, Edouard and Bach, Francis},
  booktitle=confNEU,
  address = {Vancouver, Canada},
  month = dec,
  year={2019}
}

@inproceedings{lee2019wide,
  title={Wide neural networks of any depth evolve as linear models under gradient descent},
  author={Lee, Jaehoon and Xiao, Lechao and Schoenholz, Samuel and Bahri, Yasaman and Novak, Roman and Sohl-Dickstein, Jascha and Pennington, Jeffrey},
  booktitle=confNEU,
  address = {Vancouver, Canada},
  month = dec,
  year={2019}
}

@article{turing1936,
  author  = {Turing, Alan M.},
  title   = {On Computable Numbers, with an Application to the Entscheidungsproblem},
  journal = {Proc. of London Mathematical Society},
  year    = {1936},
}

@article{ryan2015engram,
  title={Engram cells retain memory under retrograde amnesia},
  author={Ryan, Tom{\'a}s J and Roy, Dheeraj S and Pignatelli, Michele and Arons, Autumn and Tonegawa, Susumu},
  journal={Science},
  volume={348},
  number={6238},
  pages={1007--1013},
  year={2015},
}

@article{roy2017silent,
  title={Silent memory engrams as the basis for retrograde amnesia},
  author={Roy, Dheeraj S and Muralidhar, Shruti and Smith, Lillian M and Tonegawa, Susumu},
  journal={Proc. of National Academy of Sciences (PNAS)},
  volume={114},
  number={46},
  pages={E9972--E9979},
  year={2017},
}

@inproceedings{lewis2020retrieval,
  title={Retrieval-augmented generation for knowledge-intensive {NLP} tasks},
  author={Lewis, Patrick and Perez, Ethan and Piktus, Aleksandra and Petroni, Fabio and Karpukhin, Vladimir and Goyal, Naman and K{\"u}ttler, Heinrich and Lewis, Mike and Yih, Wen-tau and Rockt{\"a}schel, Tim and Riedel, Sebastian and Kiela, Douwe},
  booktitle=confNEU,
  address = {Virtual only},
  month = dec,
  year={2020}
}

@inproceedings{BolyaFDZFH23,
  author       = {Daniel Bolya and
                  Cheng{-}Yang Fu and
                  Xiaoliang Dai and
                  Peizhao Zhang and
                  Christoph Feichtenhofer and
                  Judy Hoffman},
  title        = {Token Merging: Your ViT But Faster},
  booktitle    = confICLR,
  address = {Kigali, Rwanda},
  month = may,
  year         = {2023},
}

@misc{jordan2024muon,
  author       = {Keller Jordan and Yuchen Jin and Vlado Boza and Jiacheng You and
                  Franz Cesista and Laker Newhouse and Jeremy Bernstein},
  title        = {Muon: An optimizer for hidden layers in neural networks},
  year         = {2024},
  url          = {https://kellerjordan.github.io/posts/muon/}
}

@inproceedings{LoshchilovH19,
  author       = {Ilya Loshchilov and
                  Frank Hutter},
  title        = {Decoupled Weight Decay Regularization},
  booktitle    = confICLR,
  address = {New Orleans, LA, USA},
  month = may,
  year         = {2019},
}

@article{fukushimaRelu,
  title={Visual feature extraction by a multilayered network of analog threshold elements},
  author={Fukushima, Kunihiko},
  journal={IEEE Transactions on Systems Science and Cybernetics},
  volume={5},
  number={4},
  pages={322--333},
  year={1969},
}

@inproceedings{NairH10,
  author    = {Vinod Nair and
               Geoffrey E. Hinton},
  title     = {Rectified Linear Units Improve Restricted Boltzmann Machines},
  booktitle = confICML,
  month = jun,
  address = {Haifa, Israel},
  pages     = {807--814},
  year      = {2010},
}

@article{hendrycks2016gaussian,
  title={Gaussian Error Linear Units ({GELU}s)},
  author={Hendrycks, Dan},
  journal={Preprint arXiv:1606.08415},
  year={2016}
}

@article{bridle1990softmax,
  title={Probabilistic interpretation of feedforward classification network outputs, with relationships to statistical pattern recognition},
  author={Bridle, John S},
  journal={Neurocomputing: Algorithms, architectures and applications},
  pages={227--236},
  year={1990},
  publisher={Springer}
}

@inproceedings{ZhangS19a,
  author       = {Biao Zhang and
                  Rico Sennrich},
  title        = {Root Mean Square Layer Normalization},
  booktitle    = confNEU,
  address = {Vancouver, Canada},
  year         = {2019},
}

@article{gardner1989three,
  title={Three unfinished works on the optimal storage capacity of networks},
  author={Gardner, Elizabeth and Derrida, Bernard},
  journal={Journal of Physics A: Mathematical and General},
  volume={22},
  number={12},
  year={1989},
}

@inproceedings{simsek2021geometry,
  title={Geometry of the loss landscape in overparameterized neural networks: Symmetries and invariances},
  author={Simsek, Berfin and Ged, Fran{\c{c}}ois and Jacot, Arthur and Spadaro, Francesco and Hongler, Cl{\'e}ment and Gerstner, Wulfram and Brea, Johanni},
  booktitle=confICML,
  address = {Virtual only},
  month = jul,
  year={2021},
}

@inproceedings{berges2025memory,
  title={Memory layers at scale},
  author={Berges, Vincent-Pierre and O{\u{g}}uz, Barlas and Haziza, Daniel and Yih, Wen-tau and Zettlemoyer, Luke and Ghosh, Gargi},
  booktitle=confICML,
  address = {Vancouver, Canada},
  month = jul,
  year={2025}
}

@article{eyuboglu2025cartridges,
  title={Cartridges: Lightweight and general-purpose long context representations via self-study},
  author={Eyuboglu, Sabri and Ehrlich, Ryan and Arora, Simran and Guha, Neel and Zinsley, Dylan and Liu, Emily and Tennien, Will and Rudra, Atri and Zou, James and Mirhoseini, Azalia and others},
  journal={Preprint arXiv:2506.06266},
  year={2025}
}

@inproceedings{gevaetal2021transformer,
    title = "Transformer Feed-Forward Layers Are Key-Value Memories",
    author = "Geva, Mor  and
      Schuster, Roei  and
      Berant, Jonathan  and
      Levy, Omer",
    booktitle = confEMNLP,
    month = nov,
    year = "2021",
    address = "Punta Cana, Dominican Republic",
    pages = "5484--5495",
}

@article{domingos2020every,
  title={Every model learned by gradient descent is approximately a kernel machine},
  author={Domingos, Pedro},
  journal={Preprint arXiv:2012.00152},
  year={2020}
}

@inproceedings{behrouz2026memory,
title={Memory Caching: {RNN}s with Growing Memory},
author={Ali Behrouz and Zeman Li and Yuan Deng and Peilin Zhong and Meisam Razaviyayn and Vahab Mirrokni},
booktitle=confICML,
year={2026},
address={Seoul, South Korea},
month=jul
}

@inproceedings{yoonYLH18,
  author       = {Jaehong Yoon and Eunho Yang and
                  Jeongtae Lee and
                  Sung Ju Hwang},
  title        = {Lifelong Learning with Dynamically Expandable Networks},
  booktitle    = confICLR,
  address = {Vancouver, Canada},
  month = apr,
  year         = {2018},
}

@article{irie2026dual,
  author =        {Irie, Kazuki},
  journal =     {Preprint},
  title =         {The Dual Form of the Perceptron (1964) Meets Deep Learning and What That Enables},
  year =          {2026},
}

@inproceedings{EvciMUPV22,
  author       = {Utku Evci and
                  Bart van Merrienboer and
                  Thomas Unterthiner and
                  Fabian Pedregosa and
                  Max Vladymyrov},
  title        = {Grad{M}ax: Growing Neural Networks using Gradient Information},
  booktitle    = confICLR,
  month = apr,
  address = {Virtual only},
  year         = {2022},
}

@article{butkus2026growing,
  title={Growing a Neural Network in Breadth, Depth, and Time},
  author={Butkus, Eivinas and Gupta, Kedar Garz{\~A}{\`g}n and Kriegeskorte, Nikolaus},
  journal={Preprint arXiv:2605.25174},
  year={2026}
}

@article{stanley2002evolving,
  title={Evolving neural networks through augmenting topologies},
  author={Stanley, Kenneth O and Miikkulainen, Risto},
  journal={Evolutionary computation},
  volume={10},
  number={2},
  pages={99--127},
  year={2002},
}

@inproceedings{luong2015effective,
  title={Effective approaches to attention-based neural machine translation},
  author={Luong, Minh-Thang and Pham, Hieu and Manning, Christopher D},
  booktitle=confEMNLP,
  month = sep,
  address = {Lisbon, Portugal},
  year={2015}
}
